\newcommand{\algopt}{\textsc{extrap-SGD}\xspace}
\newcommand{\algoptadam}{\textsc{extrap-Adam}\xspace}
\newtheorem{theorem}{Theorem}[section]
\newtheorem{lemma}[theorem]{Lemma}
\newtheorem{corollary}[theorem]{Corollary}
\newtheorem{remark}[theorem]{Remark}
\newtheorem{assumption}{Assumption}
\newenvironment{talign*}
{\csname align*\endcsname}
{\endalign}
\providecommand{\abs}[1]{\left\lvert#1\right\rvert}
\providecommand{\norm}[1]{\left\lVert#1\right\rVert}
\providecommand{\R}{\mathbb{R}} %
\providecommand{\E}{{\mathbb E}}
\providecommand{\Eb}[1]{{\mathbb E}\left[#1\right] }       %
\providecommand{\var}[1]{{\text{var}}\left[#1\right] }       %
\providecommand{\0}{\mathbf{0}}
\renewcommand{\aa}{\mathbf{a}}
\providecommand{\bb}{\mathbf{b}}
\renewcommand{\gg}{\mathbf{g}}
\providecommand{\mm}{\mathbf{m}}
\providecommand{\vv}{\mathbf{v}}
\providecommand{\xx}{\mathbf{x}}
\providecommand{\yy}{\mathbf{y}}
\providecommand{\xiv}{\mathbf{\xi}}
\providecommand{\zetav}{\mathbf{\zeta}}
\providecommand{\cI}{\mathcal{I}}
\providecommand{\cO}{\mathcal{O}}
\icmltitlerunning{Extrapolation for Large-batch Training in Deep Learning}
\begin{document}

\twocolumn[
	\icmltitle{Extrapolation for Large-batch Training in Deep Learning}

	\icmlsetsymbol{equal}{*}

	\begin{icmlauthorlist}
		\icmlauthor{Tao Lin}{equal,to}
		\icmlauthor{Lingjing Kong}{equal,to}
		\icmlauthor{Sebastian U. Stich}{to}
		\icmlauthor{Martin Jaggi}{to}
	\end{icmlauthorlist}

	\icmlaffiliation{to}{EPFL, Lausanne, Switzerland}

	\icmlcorrespondingauthor{Tao Lin}{tao.lin@epfl.ch}

	\icmlkeywords{Machine Learning, ICML}

	\vskip 0.3in
]

\printAffiliationsAndNotice{\icmlEqualContribution} %

\begin{abstract}
	Deep learning networks are typically trained by Stochastic Gradient Descent (SGD) methods that iteratively improve the model parameters by estimating a gradient on a very small fraction of the training data. A major roadblock faced when increasing the batch size to a substantial fraction of the training data for improving training time
	is the persistent degradation in performance (generalization gap).
	To address this issue, recent work propose to add small perturbations to the model parameters when computing the stochastic gradients and report improved generalization performance due to smoothing effects.
	However, this approach is poorly understood; it requires often model-specific noise and fine-tuning.\\
	To alleviate these drawbacks, we propose to use instead computationally efficient extrapolation (extragradient) to stabilize the optimization trajectory
	while still benefiting from smoothing to avoid sharp minima.
	This principled approach is well grounded from an optimization perspective and we show that a host of variations can be covered in a unified framework that we propose.
	We prove the convergence of this novel scheme and rigorously evaluate its empirical performance on ResNet, LSTM, and Transformer.
	We demonstrate that in a variety of experiments the scheme allows scaling to much larger batch sizes than before whilst reaching or surpassing SOTA accuracy.

\end{abstract}

\section{Introduction}
The workhorse training algorithm for most machine learning applications---including deep learning---is Stochastic Gradient Descent (SGD).
Recently, data parallelism has emerged in deep learning, where large-batch~\citep{goyal2017accurate} is used to reduce the gradient computation rounds so as to accelerate the training.
However, in practice, these large-batch variants suffer from severe issues of test quality loss~\citep{shallue2018measuring, mccandlish2018empirical, golmant2018computational}, which voids gained computational advantage.
While still not completely understood, recent research has linked part of this loss of efficiency to the existence of sharp minima. In contrast, landscapes with flat minima have in empirical studies shown generalization benefits~\citep{keskar2016large,yao2018hessian,lin2020dont}, though this topic is still actively debated~\citep{dinh2017sharp}.

Another line of research tries to understand general deep learning training from an optimization perspective,
in terms of the optimization trajectory in the loss surface~\citep{neyshabur2017implicit, jastrzebski2020the}.
\citet{golatkar2019time} empirically show that regularization techniques only affect early learning dynamics (initial optimization phase) but matter little in the final phase of training (converging to a local minimum),
similar to the critical initial learning phase described in~\cite{achille2018critical} and the break-even point analysis on the entire optimization trajectory of~\citet{jastrzebski2020the}.\\
These new insights are consistent with empirically developed techniques for the SOTA large-batch training.
For example, gradual learning rate warmup for the first few epochs~\citep{goyal2017accurate,you2019large} is often used;
and in local SGD~\citep{lin2020dont} for better generalization,  stochastic noise injection is only applied \emph{after} the first phase of training (post-local SGD).

These discussions on optimization and generalization motivate us to answer the following questions when using or developing large-batch techniques for better training:\emph{
	Does the proposed technique improve (initial) optimization, or help to converge to a better local minimum, or both?
	How and when should we apply the technique?
}

In this paper, we first revisit the classical smoothing idea (which was recently attributed to avoiding sharp minima in deep learning~\citep{wen2018smoothout,haruki2019gradient}) from optimization perspective.
We then propose a computational efficient local extragradient method as a way of smoothing for distributed large-batch training, referred to as~\algopt;
we further extend it to a general framework (extrapolated SGD) for distributed training.
We thoroughly evaluate our method on diverse tasks to understand how and when it improves the training in practice.
Our empirical results justify the benefits of~\algopt, and explain the effects of smoothing ill-conditioned loss landscapes as to exhibit more well-behaved regions, which contain multiple solutions of good generalization properties~\citep{garipov2018loss}.
We show the importance of using~\algopt in the critical initial optimization phase, for the later convergence to better local minima;
the conjecture is verified by the combination with post-local SGD, which achieves SOTA large-batch training.
Our main contributions can be summarized as follows:
\begin{itemize}[leftmargin=0pt,nosep]
	\item We propose \algopt and extend it to a unified framework (extrapolated SGD) for distributed large-batch training.
	      Extensive empirical results on three benchmarking tasks justify the effects of accelerated optimization and better generalization.
	\item We provide convergence analysis for methods in the proposed framework, as well as the SOTA large batch training method (i.e. mini-batch SGD with Nesterov momentum).
	      Our analysis explains the large batch optimization inefficiency (diminishing linear speedup) observed in previous empirical work.
\end{itemize}

\section{Related Work}
\paragraph{Large-batch training.}
The test performance degradation (often denoted as \emph{generalization gap}) caused by large batch training has recently drawn significant attention~\citep{keskar2016large,hoffer2017train,shallue2018measuring,masters2018revisiting}.
\citet{hoffer2017train} argue that the generalization gap in some cases can be closed by increasing training iterations and adjusting the learning rate proportional to the square root of the batch size.
\citet{goyal2017accurate} argue the poor test performance is due to the optimization issue; they try to bridge the generalization gap with the heuristics of linear scale-up of the learning rate during training or during a warmup phase.
\citet{you2017large} propose Layer-wise Adaptive Rate Scaling (LARS) for better optimization and scaling to larger mini-batch sizes; but the generalization gap does not vanish.
\citet{lin2020dont} further propose post-local SGD on top of these optimization techniques to inject stochastic noise (to mimic the training dynamics of small batch SGD) during the later training phase.

In addition to the techniques developed for improving optimization and generalization, the optimization ineffectiveness (in terms of required training steps-to-target performance) of large-batch training has been observed.
\citet{shallue2018measuring, mccandlish2018empirical, golmant2018computational} empirically demonstrate the existence of diminishing linear speedup region across different domains and architectures.
Such a limit is also theoretically characterized in~\cite{ma2017power,yin2017gradient} for mini-batch SGD in the convex~setting.

\paragraph{Smoothing the ``sharp minima''.}
Some research links generalization performance to flatness of minima.
Entropy SGD~\citep{chaudhari2017entropy} proposes Langevin dynamics in the inner optimizer loop to smoothen out sharp valleys of the loss landscape.
From the perspective of large-batch training, \citet{wen2018smoothout} perform ``sequential averaging'' over models perturbed by isotropic noise, as a way to combat sharp minima.
\citet{haruki2019gradient} claim that injecting different anisotropic stochastic noises on local workers can smoothen sharper minima.
However, the claimed ``sharper minima'' is debatable~\citep{dinh2017sharp};
it is also unclear whether the improved results
are due to obtained flatter local minima or improved initial optimization brought by smoothing.
We defer detailed discussion to Section~\ref{subsec:unified_framework} and~\ref{subsec:experiment_results}.

\paragraph{Smoothing in classical optimization.}
Randomized smoothing has a long history in the optimization literature, see e.g. \citet{nesterov2011random,duchi2012randomized,scaman2018optimal}
which show that a faster convergence rates can be achieved by convolving non-smooth convex functions with Gaussian noise.
In contrast to non-smooth convex functions, we focus on
the smooth non-convex functions, motivated by deep neural networks.

\paragraph{Extragradient methods and optimization stability.}
Another useful building block from optimization is the extragradient method,
which is a well-known technique
to stabilize the training at each iteration by approximating the implicit update.
The method was first introduced in~\cite{korpelevich1976extragradient}
and extended to many variants, e.g. mirror-prox~\citep{nemirovski2004prox},
Optimistic Mirror Descent (OMD)~\citep{juditsky2011solving} (using past gradient information),
extragradient method with Nesterov momentum~\citep{diakonikolas2017accelerated}.
Recently its stochastic variants have found new applications in machine learning,
e.g., Generative Adversarial Network (GAN) training~\citep{daskalakis2017training,gidel2018variational,chavdarova2019reducing,mishchenko2019revisiting},
and low bit model training~\citep{leng2018extremely}.

On the theoretical side, several papers analyze the convergence of stochastic variants of extragradient.
\citet{juditsky2011solving} study stochastic mirror-prox under  restrictive assumptions.
\citet{Xu_2019} analyze stochastic extragradient in a more general non-convex setting and demonstrate tighter upper bounds than mini-batch SGD,
when using mini-batch size $\cO(1 / \epsilon^2)$.
\citet{mishchenko2019revisiting} revisit and slightly extend the stochastic extragradient for better implicit update approximation.
However, their work focuses on min-max GAN training
and argues the stochastic extragradient method might not be better than SGD for traditional function minimizations tasks.
Our work is the first that combines the idea of Nesterov momentum and extragradient (from past information)
for stochastic optimization in the setting of distributed training.

\section{Optimization with Extrapolation}
\textbf{Problem Setting and Notation.} We consider sum-structured optimization problems of the form
$
	\min_{\xx \in \R^d} f(\xx) := \frac{1}{N} \sum_{i=1}^N f_i (\xx) \,,
$
where $\xx$ denotes the parameters of the model (neural networks in our case),
and $f_i$ denotes the loss function of the $i$-th (out of $N$) training data examples.
To introduce our notation, we recall a standard update of mini-batch SGD at iteration $t$, computed on $K$ devices:
\begin{align} \label{eq:distributed_mini_batch_sgd}
	\textstyle
	\xx_{t+1} := \xx_t - \gamma_t
	\left[
	\frac{1}{K B} \sum_{k=1}^K \sum_{i \in \cI^k_{t}}
	\nabla f_{i}(\xx_t)
	\right]\,.
\end{align}
Here $\cI^k_{t}$ denotes a subset of the training points selected on device $k$ (typically selected uniformly at random) and we denote by $B := \abs{ \cI^k_{t} }$ the local mini-batch size and by $\gamma_t$ the step-size (learning rate).

\subsection{Accelerated (Stochastic) Local Extragradient for Distributed Training}
Motivated by the idea of randomized smoothing in the classic optimization literature (as for reducing the Lipschitz constant of the gradient),
we here introduce the novel idea of using extragradient locally, as a way of smoothing loss surface, for efficient distributed large-batch training.

The original idea of extrapolation (or extragradient~\citep{korpelevich1976extragradient})
was developed to stabilize optimization dynamics on saddle-point problems for a single worker, such as e.g.\ in GAN training~\citep{gidel2018variational,chavdarova2019reducing}.
The idea is to compute the gradient at an extrapolated point, different from the current point from which the update will be performed:
\begin{align*}
	\begin{split}
		\textstyle
		\xx_{t+\frac{1}{2}} = \xx_{t} - \gamma \nabla f( \xx_{t} ) \,, \qquad
		\xx_{t+1} = \xx_{t} - \gamma \nabla f(\xx_{t + \frac{1}{2}}) \,.
	\end{split}
\end{align*}
This step is intrinsically different from the well-known and widely used accelerated method (i.e. Nesterov momentum):
\begin{talign*}
	\begin{split}
		&\xx_{t+\frac{1}{2}} = \xx_t + u \vv_t \,,
		\vv_{t+1} = u \vv_t - \gamma \nabla f( \xx_{t+\frac{1}{2}} ) \,, \\
		&\xx_{t + 1} = \xx_{t+\frac{1}{2}} - \gamma \nabla f( \xx_{t+\frac{1}{2}} ) \,,
	\end{split}
\end{talign*}
where here $1 > u \geq 0$ denotes the momentum parameter.
The key difference lies in the lookahead step for the gradient computation of these two methods.

Considering different extrapolated local models (with Nesterov momentum) under the distributed training,
\algopt combines the effects of randomized smoothing and the stabilized optimization dynamics through extrapolation.
Our~\algopt follows the idea of extrapolating from the past
(currently only used for single worker training~\citep{gidel2018variational,daskalakis2017training}),
as a means to avoid additional cost for gradients used to form an extrapolation point.
The~\algopt method is detailed in Algorithm~\ref{algo:our_main_algo}\footnote{
	we omit the extrapolation step in line $2$ when $t \!=\! 0$.
},
where the previous local gradients are used for the extrapolation
and the superscript $k$ stresses that local models are different.
Using the past local mini-batch gradients for extrapolation
allows the extrapolation scale $\hat{\gamma}$ to directly take on the learning rate used for small mini-batch training with size $B$,
thus avoids the difficulty of hyper-parameter tuning.
Note that setting $\hat{\gamma} \!=\! 0$ in Algorithm~\ref{algo:our_main_algo} recovers the SOTA large batch training method,
i.e., mini-batch SGD with Nesterov momentum.

To the best of our knowledge, it is the first time such an extragradient method is used locally with Nesterov Momentum
under the framework of smoothing, for accelerated and smoothed distributed optimization.

\begin{algorithm}[!]\footnotesize
	\caption{\small\itshape {\algopt}}
	\label{algo:our_main_algo}
	\begin{algorithmic}[1]
		\REQUIRE learning rate $\gamma$, inner learning rate $\hat{\gamma}$, momentum factor $u$,
		initial parameter $\xx_0$, initial moment vector $\vv_0 \!=\! 0$, time step $t \!=\! 0$, worker index $k$.

		\WHILE{$\xx_t$ not converged}
		\STATE $\xx_{t+\frac{1}{4}}^k = \xx_t - \frac{\hat{\gamma}}{B} \sum_{i \in \cI^k_{t}} \nabla f_i(\xx_{t - \frac{1}{2}}^k)$            \COMMENT{extrapolation step}
		\STATE $\xx_{t+\frac{1}{2}}^k = \xx_{t+\frac{1}{4}}^k + u \vv_t$                                                            \COMMENT{Nesterov momentum}
		\STATE $\vv_{t+1} = u \vv_t - \frac{\gamma}{KB} \sum_{k, i \in \cI^k_{t}} \nabla f_i(\xx_{t + \frac{1}{2}}^k)$         \COMMENT{update buffer}
		\STATE $\xx_{t + 1} = \xx_{t} + \vv_{t+1}$                                                                                  \COMMENT{actual update}
		\ENDWHILE
		\ENSURE $\xx_t$.
	\end{algorithmic}
\end{algorithm}

\subsection{Unified Extrapolation Framework} \label{subsec:unified_framework}
Our Algorithm~\ref{algo:our_main_algo} can be extended to a more general extrapolation framework for distributed training,
by using diverse extrapolation choices $\zetav_t^k$.
It is achieved by replacing line~2 in Algorithm~\ref{algo:our_main_algo}
by $\xx_{t+\frac{1}{4}}^k \!=\! \xx_t - \hat{\gamma} \zetav_t^k$.
We denote our framework as \emph{extrapolated SGD},
covering different choices of noise $\zetav_t^k$ (e.g. Gaussian noise, uniform noise, and stochastic gradient noise)
and~\algopt (past mini-batch gradients).
We detail some choices of $\zetav_t^k$ below and use them as our close baselines for~\algopt:
\begin{itemize}[nosep,leftmargin=0pt,parsep=2pt,itemindent=10pt]
	\item $\zetav_t^k$ as a form of isotropic noise.
	      The noise can be sampled from e.g.\ an isotropic Gaussian or uniform distribution.
	      Following the idea of~\citet{li2017visualizing},
	      the strength of noises added to a filter can be linearly scaled by the $l_2$ norm of the filter,
	      instead of fixing a constant perturbation strength over different layers.
	      Formally, the scaled noise $\zetav^k_{t}$ of $j$-th filter at layer $i$ on worker $k$ follows
	      $\norm{ \xx_{t, i, j} } \cdot \hat{\zetav}^k_{t, i, j} / \bigl\| \hat{\zetav}^k_{t, i, j} \bigr\|$.
	      A similar idea was proposed in SmoothOut~\citep{wen2018smoothout},
	      corresponding to letting $\zetav^k_{t} := \zetav_{t}$ in our framework.

	\item $\zetav_t^k$ as a form of anisotropic noise.
	      \citet{kleinberg2018alternative} interpret sequential SGD updates as GD with stochastic gradient noise convolution over update steps.
	      This motivates to use stochastic gradient noise for smoothing (similarly proposed in~\citet{haruki2019gradient}),
	      thus $\zetav_{t}^k$ can be chosen as:
	      \begin{align*}
		      \textstyle
		      \frac{1}{B} \sum_{i \in \cI^k_{t}} \nabla f_i(\xx_{t - \frac{1}{2}}^k) - \frac{1}{KB} \sum_{k} \sum_{i \in \cI^k_{t}} \nabla f_i(\xx_{t - \frac{1}{2}}^k) \,.
	      \end{align*}
\end{itemize}

The side effects of these noise extrapolation variants are the training setup sensitivity, causing the hyperparameter tuning difficulty and limited practical applications.
For example, the isotropic noise requires to manually design the noise distribution for each model and dataset;
the anisotropic noise distribution will be dynamically varied by different choices of the number of workers, the local mini-batch size, and the objective of the learning task~\citep{zhang2019adam}.

Despite the existence of variants~\citep{wen2018smoothout,haruki2019gradient} and their reported empirical results\footnote{
	The empirical results of~\citet{haruki2019gradient} are not solid.
	Taking the results of CIFAR-10 for mini-batch size $8{,}192$ into account
	and use three trials' experimental results for the same choice of $\zetav_t^k$ (e.g. layerwise uniform noise) as an example,
	our experimental results can reach reasonable test top-1 accuracy (at around $91$),
	much better than their presented results (at around $63$).
},
none of them has analyzed their convergence behaviors.
In the next section, we provide rigorous convergence analysis for our algorithm for distributed training
(illustrated in Algorithm~\ref{algo:our_main_algo}, which also includes the SOTA practical training algorithm).
In Section~\ref{sec:experiments}, we empirically evaluate all related methods to better understand the benefits of using extrapolation with smoothing for distributed large-batch training.

\section{Theoretical Analysis of Nesterov Momentum and~\algopt} \label{sec:theoretical_analysis}
We now turn to the theoretical convergence analysis, i.e.\ we derive an upper bound on the number of iterations to find an
approximate solution with small gradient norm.

Following the convention in distributed stochastic optimization, We denote by $f^\star$ a lower bound on the values of $f(\xx)$ and use the following assumptions:
\begin{assumption}[Unbiased Stochastic Gradients] \label{assumption:unbiase}
	$\forall i \in [N], t \in [T]$, it holds
	$ \Eb{ \nabla f_{i} (\xx_t) } = \nabla f (\xx_t) $.
\end{assumption}

\begin{assumption}[Bounded Gradient Variance] \label{assumption:boundedvariance}
	$\exists \sigma^2 > 0, \forall i \in [N], t \in [T]$, s.t.
	$ \Eb{ \norm{ \nabla f_i(\xx_t) - \nabla f(\xx_t) }^2} \leq \sigma^2 $.
\end{assumption}
Here $\sigma^2$ quantifies the variance of stochastic gradients at each local worker
and we assume workers access IID training dataset (e.g., data center setting).

\begin{assumption}[Lipschitz Gradient] \label{assumption:smoothness}
	$\exists L > 0$, s.t. $\forall \xx, \yy \in \R^d, i \in [N] $,
	the objective function $f_i: \R^d \rightarrow \R $ satisfies the following condition
	$ \norm{ \nabla f_i (\xx) - \nabla f_i (\yy) } \leq L \norm{ \xx - \yy }$.
\end{assumption}

\subsection{Analysis for Mini-batch SGD (with Nesterov Momentum)}\label{subsec:minibatch_sgd_speedup}
In this section we first recall the known convergence guarantees for mini-batch SGD (without momentum) on non-convex functions for later reference, and
then derive new guarantees for mini-batch SGD with Nesterov momentum.

\begin{theorem}[Convergence of stochastic distributed mini-batch SGD for non-convex functions~\citep{ghadimi2016accelerated}]\label{thm:standard_convergence_minibatchsgd_nonconvex}
	Under Assumptions~\ref{assumption:unbiase}--\ref{assumption:smoothness}, after $T$ mini-batch gradient updates, each using $KB$ samples, the mini-batch SGD returns an iterate $\xx$ satisfying
	\begin{align*}
		\textstyle
		\Eb{ \norm{  \nabla f(\xx) }^2 } \leq \cO \left( \frac{L \left( f(\xx_0) - f^\star \right)}{T} + \frac{\sigma \sqrt{ L \left( f(\xx_0) - f^\star \right) } }{ \sqrt{ KBT } } \right) \,.
	\end{align*}
\end{theorem}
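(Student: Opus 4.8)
The plan is to run the classical ``descent lemma'' analysis for smooth non-convex SGD, specialized to the distributed mini-batch setting where the effective batch size is $KB$. First I would invoke Assumption~\ref{assumption:smoothness}: since each $f_i$ is $L$-smooth, so is $f$, hence for the iterate produced by \eqref{eq:distributed_mini_batch_sgd} one has the quadratic upper bound
\[
f(\xx_{t+1}) \le f(\xx_t) + \lin{\nabla f(\xx_t), \xx_{t+1}-\xx_t} + \tfrac{L}{2}\norm{\xx_{t+1}-\xx_t}^2 .
\]
Writing $\gv_t := \frac{1}{KB}\sum_{k=1}^K\sum_{i\in\cI^k_t}\nabla f_i(\xx_t)$ so that $\xx_{t+1}-\xx_t = -\gamma_t \gv_t$, I take the expectation conditioned on $\xx_t$. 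By Assumption~\ref{assumption:unbiase} the mini-batch gradient is unbiased, $\Eb{\gv_t}=\nabla f(\xx_t)$, and by Assumption~\ref{assumption:boundedvariance} together with independence of the $KB$ samples drawn across the $K$ workers, the variance of the average satisfies $\Eb{\norm{\gv_t-\nabla f(\xx_t)}^2}\le \sigma^2/(KB)$, whence $\Eb{\norm{\gv_t}^2}\le \norm{\nabla f(\xx_t)}^2 + \sigma^2/(KB)$. Plugging these in gives the one-step inequality
\[
\Eb{f(\xx_{t+1})} \le f(\xx_t) - \gamma_t\Bigl(1-\tfrac{L\gamma_t}{2}\Bigr)\norm{\nabla f(\xx_t)}^2 + \tfrac{L\gamma_t^2\sigma^2}{2KB}.
\]

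Next I would fix a constant step size $\gamma_t \equiv \gamma \le 1/L$, so that $1-L\gamma/2 \ge 1/2$; rearranging and summing over $t=0,\dots,T-1$ telescopes the function-value terms, leaving
\[
\frac{\gamma}{2}\sum_{t=0}^{T-1}\Eb{\norm{\nabla f(\xx_t)}^2} \le f(\xx_0) - f^\star + \frac{T L\gamma^2\sigma^2}{2KB}.
\]
Dividing by $\gamma T/2$ and interpreting the returned iterate $\xx$ as one chosen uniformly at random from $\xx_0,\dots,\xx_{T-1}$ (so that $\Eb{\norm{\nabla f(\xx)}^2}$ equals this average) yields $\Eb{\norm{\nabla f(\xx)}^2} \le \frac{2(f(\xx_0)-f^\star)}{\gamma T} + \frac{L\gamma\sigma^2}{KB}$. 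Finally I would optimize the free parameter by setting $\gamma = \min\bigl\{\tfrac{1}{L},\, \sqrt{\tfrac{KB(f(\xx_0)-f^\star)}{L\sigma^2 T}}\bigr\}$; checking the two cases (the deterministic-dominated regime where $\gamma = 1/L$ is active, and the noise-dominated regime where the square root is active) reproduces the claimed bound $\cO\bigl(\tfrac{L(f(\xx_0)-f^\star)}{T} + \tfrac{\sigma\sqrt{L(f(\xx_0)-f^\star)}}{\sqrt{KBT}}\bigr)$.

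There is no genuine obstacle here: the statement is the classical result of \citet{ghadimi2016accelerated}, and the argument is essentially bookkeeping. The only points that need care are (i) justifying the $1/(KB)$ variance reduction, which relies on the per-sample noises being independent and zero-mean across all devices and indices in $\bigcup_k \cI^k_t$, and (ii) the case analysis in the final step-size choice, so that the stated two-term form (rather than a looser single-term bound) actually comes out; both are routine.
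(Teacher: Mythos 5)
Your proof is correct, and it is the standard descent-lemma argument for smooth non-convex SGD that underlies the cited result of \citet{ghadimi2016accelerated}; the one-step bound, the telescoping, the variance reduction by the effective batch size $KB$, and the two-case step-size tuning are all handled properly. Note, however, that the paper does not actually \emph{prove} Theorem~\ref{thm:standard_convergence_minibatchsgd_nonconvex} --- it is stated as a recalled reference result with the proof deferred to \citet{ghadimi2016accelerated} --- so there is no in-paper proof to compare against line by line. What the paper \emph{does} prove are the momentum/extrapolation variants (Theorems~\ref{thm:non_convex_convergence_nesterov_minibatch} and~\ref{thm:extrasgd_nonconvex_convergence}), and there it uses a virtual-sequence construction (the $\bar{\yy}_t$ iterates in Eq.~\eqref{eq:nesterov_momentum_auxiliary_sequence}/\eqref{eq:extrap_auxiliary_sequence}) so that the descent lemma can be applied to a process whose increments are exactly $-\frac{\gamma}{1-u}\bar{\gg}_{t+\frac12}$, plus Lemma~\ref{lemma:bound} (Lemma 13 of \citet{stich2019error}) to package the constant-stepsize case analysis. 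For $u=0$ the virtual sequence collapses to the actual iterates and that machinery reduces precisely to your direct argument, and your explicit $\gamma = \min\{1/L,\,\sqrt{KB\,r_0/(L\sigma^2 T)}\}$ case split is exactly what Lemma~\ref{lemma:bound} encapsulates. So your route is not different in substance --- it is the base case of the paper's framework, carried out directly.
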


The second term in the rate is asymptotically dominant
as long as $KB \!=\! \cO \bigl( \frac{ \sigma^2 T }{ L \left( f(\xx_0) - f^\star \right) } \bigr)$.
In this regime, increasing the mini-batch size gives a linear speedup, as $T \!=\! \cO \bigl( \frac{ \sigma^2 L \left( f(\xx) - f^\star \right) }{KB \epsilon^2} \bigr)$ decreases in $KB$, as similarly pointed out by~\citet{wang2017stochastic}.
However, when we increase the mini-batch size beyond this critical point, the first term dominates the rate and increasing the mini-batch size further will have less impact on the convergence.
This phenomenon has also been empirically verified in deep learning applications~\citep{shallue2018measuring}.

In practice, mini-batch SGD with Nesterov momentum~\citep{nesterov1983method} is the state-of-the-art deep learning training scheme.
However, previous theoretical analysis normally relies on the strong assumption of the bounded mini-batch gradients~\citep{yan2018unified}.
Here we provide a better convergence analysis without such an assumption
for distributed mini-batch SGD with Nesterov momentum following closely~\citet{yu2019linear}.
The convergence rate is detailed in Theorem~\ref{thm:non_convex_convergence_nesterov_minibatch},
and for the proof details we refer to Section~\ref{sec:nonconvex_proof_nesterov_momentum} of Appendix~\ref{part:convergence_proof}.

\begin{theorem}[Convergence of mini-batch SGD with Nesterov momentum for non-convex functions] \label{thm:non_convex_convergence_nesterov_minibatch}
	Under Assumption~\ref{assumption:unbiase}--\ref{assumption:smoothness},
	for mini-batch SGD with Nesterov momentum, i.e.,
	$\xx_{t+\frac{1}{2}} \!=\! \xx_t + u \vv_t$,
	$\vv_{t+1} \!=\! u \vv_t - \frac{\gamma}{KB} \sum_{k=1}^K \sum_{i \in \cI_t^k} \nabla f_i(\xx_{t + \frac{1}{2}})$,
	$\xx_{t + 1} \!=\! \xx_{t} + \vv_{t+1}$,
	we can show that for optimally tuned stepsize (cf.\ Lemma~\ref{lemma:bound}) $\gamma \leq \frac{ 2 ( 1 - u )^2 }{ L ( u^3 + 1 ) }$,
	it holds
	\begin{talign*}
		\begin{split}
			\Eb{ \norm{ \nabla f(\xx)}^2 }
			= \cO \left(
			\frac{L r_0 (u^3 + 1)}{T (1-u)} + \sqrt{ \frac{2 L r_0 \sigma^2}{KBT (1 - u)} }
			\right) \,,
		\end{split}
	\end{talign*}
	where here $\xx$ denotes a uniformly at random selected $\xx_{t+\frac{1}{2}}$ iterate, i.e.\ $\frac{1}{T} \sum_{t=0}^{T-1} \bigl\| \nabla f ( \xx_{t + \frac{1}{2}} ) \bigr\|^2$,
	and $r_0 := f(\xx_0) - f^\star$.
\end{theorem}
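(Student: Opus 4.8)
The plan is to reduce the Nesterov recursion to a perturbed plain‑SGD recursion through an auxiliary sequence, apply the descent lemma, and absorb the perturbation at the price of a stepsize restriction; this follows the template of~\citet{yu2019linear}. Since $\vv_0=\0$ the update implies $\vv_{t+1}=\xx_{t+1}-\xx_t$, so with the shorthand $\gg_t := \tfrac{1}{KB}\sum_k\sum_{i\in\cI_t^k}\nabla f_i(\xx_{t+\frac{1}{2}})$ we have $\vv_{t+1}=u\vv_t-\gamma\gg_t$. Define $\yy_t := \xx_t + \tfrac{u}{1-u}\vv_t$; in particular $\yy_0=\xx_0$. A short computation gives the two identities
\[ \yy_{t+1} = \yy_t - \tfrac{\gamma}{1-u}\,\gg_t, \qquad \yy_t - \xx_{t+\frac{1}{2}} = \tfrac{u^2}{1-u}\,\vv_t . \]
Hence $\{\yy_t\}$ runs SGD with effective stepsize $\tilde\gamma := \gamma/(1-u)$, except that the unbiased gradient $\gg_t$ is evaluated at $\xx_{t+\frac{1}{2}}$ rather than at $\yy_t$, the two points being $O(\norm{\vv_t})$ apart.

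\textbf{One-step descent.} Applying Assumption~\ref{assumption:smoothness} along $\yy_t\to\yy_{t+1}$, taking expectation conditional on the history $\cF_t$ that determines $\xx_{t+\frac{1}{2}}$, and using $\Eb{\gg_t\mid\cF_t}=\nabla f(\xx_{t+\frac{1}{2}})$ (Assumption~\ref{assumption:unbiase}) and $\Eb{\norm{\gg_t-\nabla f(\xx_{t+\frac{1}{2}})}^2\mid\cF_t}\le \sigma^2/(KB)$ (Assumption~\ref{assumption:boundedvariance}, averaging $KB$ i.i.d.\ samples), I obtain a bound involving the inner product $\lin{\nabla f(\yy_t),\nabla f(\xx_{t+\frac{1}{2}})}$. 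Splitting $\nabla f(\yy_t)=\nabla f(\xx_{t+\frac{1}{2}})+(\nabla f(\yy_t)-\nabla f(\xx_{t+\frac{1}{2}}))$, using Young's inequality, and then $\norm{\nabla f(\yy_t)-\nabla f(\xx_{t+\frac{1}{2}})}^2\le L^2\norm{\yy_t-\xx_{t+\frac{1}{2}}}^2=\tfrac{L^2u^4}{(1-u)^2}\norm{\vv_t}^2$, yields for a suitable constant $c>0$
\[ \Eb{f(\yy_{t+1})} \le \Eb{f(\yy_t)} - c\,\tilde\gamma\,\Eb{\norm{\nabla f(\xx_{t+\frac{1}{2}})}^2} + \tfrac{L^2u^4\tilde\gamma}{2(1-u)^2}\Eb{\norm{\vv_t}^2} + \tfrac{L\tilde\gamma^2}{2}\tfrac{\sigma^2}{KB} . \]

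\textbf{Momentum tail and telescoping.} Unrolling $\vv_t=-\gamma\sum_{s=0}^{t-1}u^{t-1-s}\gg_s$ and applying Cauchy--Schwarz to the geometric weights gives $\Eb{\norm{\vv_t}^2}\le \tfrac{\gamma^2}{1-u}\sum_{s<t}u^{t-1-s}\bigl(\Eb{\norm{\nabla f(\xx_{s+\frac{1}{2}})}^2}+\tfrac{\sigma^2}{KB}\bigr)$. Summing the one-step inequality over $t=0,\dots,T-1$, telescoping $\Eb{f(\yy_t)}$ (using $f(\yy_0)-f^\star=r_0$ and $f(\yy_T)\ge f^\star$), and swapping the order of the double sum ($\sum_t\sum_{s<t}u^{t-1-s}(\cdot)\le \tfrac{1}{1-u}\sum_s(\cdot)$) produces a term proportional to $\tfrac{u^4\gamma^2L^2}{(1-u)^4}\sum_t\Eb{\norm{\nabla f(\xx_{t+\frac{1}{2}})}^2}$ that must be absorbed into $c\,\tilde\gamma\sum_t\Eb{\norm{\nabla f(\xx_{t+\frac{1}{2}})}^2}$. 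Requiring this absorption, together with the usual smoothness condition from the $\tfrac{L\tilde\gamma}{2}$ term, is precisely what fixes the stepsize; optimizing the Young weights to keep the dependence on $1-u$ tight gives the threshold $\gamma\le\frac{2(1-u)^2}{L(u^3+1)}$, which is the content of the auxiliary Lemma~\ref{lemma:bound}. A Lyapunov function of the form $f(\yy_t)+\beta\norm{\vv_t}^2$ with a carefully chosen $\beta$ can replace the explicit unrolling here.

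\textbf{Rearranging and tuning.} After the absorption, dividing through by $c\tilde\gamma T$ leaves
\[ \frac{1}{T}\sum_{t=0}^{T-1}\Eb{\norm{\nabla f(\xx_{t+\frac{1}{2}})}^2} \le \frac{(1-u)\,r_0}{c\,\gamma\,T} + \frac{C\,L\gamma\,\sigma^2}{(1-u)\,KB} , \]
and picking $\gamma=\min\bigl\{\tfrac{2(1-u)^2}{L(u^3+1)},\ \sqrt{\tfrac{(1-u)^2 r_0 KB}{C L\sigma^2 T}}\bigr\}$ balances the two terms, yielding $\cO\bigl(\tfrac{Lr_0(u^3+1)}{T(1-u)}+\sqrt{\tfrac{Lr_0\sigma^2}{KBT(1-u)}}\bigr)$; since the left-hand side is exactly $\Eb{\norm{\nabla f(\xx)}^2}$ for $\xx$ drawn uniformly over $\{\xx_{t+\frac{1}{2}}\}_{t=0}^{T-1}$, this proves the claim. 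The main obstacle is the third step: carefully accounting for the double sum generated by the momentum memory and pinning down the exact stepsize threshold $\tfrac{2(1-u)^2}{L(u^3+1)}$ so that the momentum‑correction and smoothness terms are jointly dominated by the descent term while retaining a tight dependence on $1-u$.
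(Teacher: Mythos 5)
Your proposal is correct and follows essentially the same route as the paper: your auxiliary sequence $\yy_t = \xx_t + \tfrac{u}{1-u}\vv_t$ is an algebraically identical (and cleaner) closed form of the paper's $\bar{\yy}_t$, and the two identities you state are exactly Lemmas~\ref{lemma:nesterov_minibatch_sgd_virtual_virtual_difference} and~\ref{lemma:nesterov_minibatch_sgd_virtual_real_difference}. The descent step with the Young weight $\tfrac{1-u}{Lu^3}$, the geometric unrolling of $\vv_t$, the telescoping/absorption giving the threshold $\gamma\le\tfrac{2(1-u)^2}{L(u^3+1)}$, and the two-regime stepsize tuning all match the paper's argument and Corollary~\ref{corollary:critical_minibatch_size_for_nesterov_momentum}.
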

The second term is asymptotically dominant as long as
$KB = \cO \left( \frac{ (1 - u) }{ (u^3 + 1)^2 } \frac{\sigma^2 T}{ L( f(\xx_0) - f^\star ) } \right)$
and for small $K$ we can achieve the linear speedup where $T = \cO \left( \frac{ L \sigma^2 (f(\xx_0) - f^\star) }{(1 - u) KB \epsilon^2} \right)$.

\begin{remark}
	\citet{yu2019linear}
	argue that a linear speedup $\cO (1 / \sqrt{KT})$ for  SGD with the local Nesterov momentum%
	can be achieved.
	However, this claim is only valid for large $T$ and stepsize $\gamma = \sqrt{K / T}$, cf.\ \citep[Cor. 1]{yu2019linear}.
	We here tune the stepsize differently and show a tighter bound that holds for all $T$,
	providing a better critical mini-batch size analysis (diminishing linear speedup in terms of optimization) for mini-batch SGD with Nesterov momentum~\citep{shallue2018measuring}.
\end{remark}

\subsection{Convergence of~\algopt} \label{subsec:convergence}
In this subsection, we show the convergence analysis for our novel~\algopt for non-convex functions.
We also include the analysis for other noise variants of extrapolated SGD,
which explains their potential limitations.
The proof details can be found in the Section~\ref{sec:extrap_sgd_convergence_proof} of Appendix~\ref{part:convergence_proof}.

\begin{theorem}[Convergence of~\algopt for non-convex functions] \label{thm:extrasgd_nonconvex_convergence}
	Under Assumption~\ref{assumption:unbiase}--\ref{assumption:smoothness},
	and by defining $\bar{\xx}_{t + \frac{1}{2}}:=\frac{1}{K} \sum_{k=1} \xx^k_{t+\frac{1}{2}}$,
	it holds for  $\hat{\gamma} \leq \frac{ u^2 }{ ( 1 - u )^2 } \gamma$ and $\gamma \leq \frac{ ( 1 - u )^2 }{ L ( 1 + 3 u + u^3 ) }$:
	\begin{talign*}
		\begin{split}
			&\E{ \frac{1}{T} \sum_{t=0}^{T-1} \norm{ \nabla f ( \bar{\xx}_{t+\frac{1}{2}} ) }^2 }\\
			&\leq \frac{2 (1 - u)}{\gamma T} \Eb{ f( \bar{\xx}_{0} ) - f^\star }
			+ \left( \frac{ 4 \hat{\gamma}^2 L^2 }{ B } + \frac{ \gamma L (1 + 3 u) }{ (1 - u)^2 B K } \right) \sigma^2 \,.
		\end{split}
	\end{talign*}
\end{theorem}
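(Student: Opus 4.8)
The plan is to follow the ``virtual sequence'' technique that is standard for Nesterov-type momentum, adapted to absorb both the local extrapolation in line~2 and the worker averaging. I would introduce the shifted iterate $\zz_t := \xx_t + \frac{u}{1-u}\vv_t$ (so $\zz_0 = \bar\xx_0$ because $\vv_0 = \0$) and write $\gg_t := \frac{1}{KB}\sum_{k}\sum_{i\in\cI^k_t}\nabla f_i(\xx^k_{t+\frac12})$ for the averaged stochastic gradient actually applied in line~4. From $\vv_{t+1} = u\vv_t - \gamma\gg_t$ and $\xx_{t+1} = \xx_t + \vv_{t+1}$ one gets the telescoping identity $\zz_{t+1}-\zz_t = \frac{1}{1-u}(\vv_{t+1}-u\vv_t) = -\frac{\gamma}{1-u}\gg_t$, i.e.\ $\zz_t$ runs a plain stochastic recursion with effective step-size $\frac{\gamma}{1-u}$. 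Applying the descent lemma (Assumption~\ref{assumption:smoothness}) to $f$ at $\zz_t$ then yields
\begin{align*}
f(\zz_{t+1}) \le f(\zz_t) - \tfrac{\gamma}{1-u}\lin{\nabla f(\zz_t),\,\gg_t} + \tfrac{L\gamma^2}{2(1-u)^2}\norm{\gg_t}^2 \,.
\end{align*}

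The heart of the argument is to turn $\lin{\nabla f(\zz_t),\gg_t}$ into a multiple of $\norm{\nabla f(\bar\xx_{t+\frac12})}^2$ up to controllable remainders, and for this two discrepancies must be tracked. (i) The query points vs.\ the average: since all workers share $\xx_t,\vv_t$, one has $\xx^k_{t+\frac12}-\bar\xx_{t+\frac12} = -\hat\gamma(\gg^k_{t-\frac12}-\bar\gg_{t-\frac12})$ with $\gg^k_{t-\frac12} := \frac1B\sum_{i\in\cI^k_t}\nabla f_i(\xx^k_{t-\frac12})$, whose conditional second moment is $\cO(\hat\gamma^2\sigma^2/B)$ by Assumptions~\ref{assumption:unbiase}--\ref{assumption:boundedvariance} (plus an $\cO(\hat\gamma^4)$ piece from the residual inter-worker drift that the step-size conditions render negligible); then $L$-smoothness converts $\tfrac1K\sum_k\nabla f(\xx^k_{t+\frac12})-\nabla f(\bar\xx_{t+\frac12})$ into the $\tfrac{4\hat\gamma^2L^2}{B}\sigma^2$ term of the bound. (ii) The shifted point vs.\ the average: a direct computation gives $\zz_t - \bar\xx_{t+\frac12} = \frac{u^2}{1-u}\vv_t + \hat\gamma\bar\gg_{t-\frac12}$; unrolling $\vv_t = -\gamma\sum_{s<t}u^{t-1-s}\gg_s$ and invoking $\hat\gamma \le \frac{u^2}{(1-u)^2}\gamma$ bounds this displacement by a geometric average of the past $\norm{\gg_s}$, exactly as in the pure-momentum analysis (reusing the scalar recursion estimate, Lemma~\ref{lemma:bound}).

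I would then take total expectations, telescope over $t = 0,\dots,T-1$, and collect terms: the variance of $\gg_t$ contributes $\frac{\gamma L(1+3u)}{(1-u)^2BK}\sigma^2$, where the $1/(KB)$ comes from averaging $K$ independent local mini-batches and the factor $(1+3u)/(1-u)^2$ from re-summing the momentum cross-terms via $\sum_{j\ge0}u^j = \frac1{1-u}$; the inner-product term, after adding and subtracting $\nabla f(\bar\xx_{t+\frac12})$ and a Young's inequality, yields $-\frac{\gamma}{2(1-u)}\norm{\nabla f(\bar\xx_{t+\frac12})}^2$ plus remainders that, under $\gamma \le \frac{(1-u)^2}{L(1+3u+u^3)}$, combine with the $\norm{\gg_t}^2$ term so that the net coefficient of $\frac{1}{T}\sum_t\norm{\nabla f(\bar\xx_{t+\frac12})}^2$ on the right-hand side is non-positive and can be moved to the left. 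Dividing by $\frac{\gamma T}{2(1-u)}$ and using $f(\zz_0) = f(\bar\xx_0)$ together with $f(\zz_T)\ge f^\star$ gives the stated inequality.

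The main obstacle I anticipate is the bookkeeping around the extragradient step rather than any isolated hard estimate. First, the gradient in line~4, $\frac1B\sum_{i\in\cI^k_t}\nabla f_i(\xx^k_{t+\frac12})$, is \emph{biased} for $\nabla f(\xx^k_{t+\frac12})$, because $\xx^k_{t+\frac12}$ itself depends on the same random index set $\cI^k_t$ through line~2; one must verify that this correlation enters only at order $\hat\gamma^2$ and is swallowed by the $\hat\gamma^2L^2\sigma^2/B$ term. Second, tracking every momentum-induced cross term so that the constants assemble into exactly $(1+3u)$ and $(1+3u+u^3)$, and so that the two step-size conditions are precisely what is needed to close the recursion (rather than merely sufficient), demands care; a careless execution here would produce looser constants or a spurious $\log T$ factor.
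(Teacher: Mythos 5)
Your proposal is correct and follows essentially the same virtual-sequence route as the paper's proof. Your closed-form $\zz_t = \xx_t + \tfrac{u}{1-u}\vv_t$ is identical to the paper's auxiliary sequence $\bar{\yy}_t$ (the paper defines it indirectly through $\bar{\xx}_{t\pm\frac{1}{2}}$ and $\bar{\xiv}_t$, but simplifying shows $\bar{\yy}_t = \xx_t + \tfrac{u}{1-u}\vv_t$, which makes Lemma~\ref{lemma:virtual_iterate} trivial), and your displacement and worker-drift bounds match Lemmas~\ref{lemma:y_x_dist} and~\ref{lemma:x_x_dist}; your observation that $\cI^k_t$ appears in both the extrapolation and the main gradient, so that $\bar{\gg}_{t+\frac{1}{2}}$ is not conditionally unbiased given $\xx^k_{t+\frac{1}{2}}$, points to a subtlety that the paper's own proof (steps (b)--(c) of Theorem~\ref{thm:extrasgd_nonconvex_convergence_}) treats implicitly rather than resolving head-on.
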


\begin{remark}
	Using past local gradients for extrapolation in~\algopt allows us to directly set $\hat{\gamma} \approx \frac{\gamma}{K}$ for~\algopt,
	where the constraint of $\hat{\gamma} \leq \frac{ u^2 }{ ( 1 - u )^2 } \gamma$ is normally satisfied.
\end{remark}

\begin{corollary} \label{corollary:critical_minibatch_size_for_extragradient}
	Considering Theorem~\ref{thm:extrasgd_nonconvex_convergence} and tuning the stepsize as in Lemma~\ref{lemma:bound},
	with $\gamma \leq \frac{ ( 1 - u )^2 }{ L ( 1 + 3 u + u^3 ) }$ and $\hat{\gamma} \leq \frac{ \gamma }{ K }$,
	and $r_0 := f(\xx_0) - f^\star$,
	we can rewrite the convergence rate of Theorem~\ref{thm:extrasgd_nonconvex_convergence} as
	\resizebox{\linewidth}{!}{\vbox{
			\begin{talign*}
				\begin{split}
					&\Eb{ \frac{1}{T} \sum_{t=0}^{T-1} \norm{ \nabla f ( \bar{\xx}_{t+\frac{1}{2}} ) }^2 } \\
					& = \cO \left( 4 (u^3 + 3 u + 1) \frac{ L r_0 }{T (1-u)}
					+ 2 \sqrt{ \frac{( 19 u + 1 )}{( u^3 + 3 u + 1 )}  \frac{2 L r_0 \sigma^2 }{ KBT ( 1 - u ) } } \right) \,.
				\end{split}
			\end{talign*}
		}}\vspace{-5mm}
\end{corollary}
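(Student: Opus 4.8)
The plan is to start from the bound of Theorem~\ref{thm:extrasgd_nonconvex_convergence} and perform two reductions: absorb the $\hat\gamma^2$ noise term into the $\gamma$-linear noise term, then apply the stepsize-tuning Lemma~\ref{lemma:bound}. For the first step I would bound $\hat\gamma^2$ using \emph{both} constraints available in this regime — the corollary hypothesis $\hat\gamma \le \frac{\gamma}{K}$ and the constraint $\hat\gamma \le \frac{u^2}{(1-u)^2}\gamma$ inherited from Theorem~\ref{thm:extrasgd_nonconvex_convergence} — to get $\hat\gamma^2 \le \frac{u^2}{(1-u)^2}\gamma\cdot\frac{\gamma}{K} = \frac{u^2\gamma^2}{(1-u)^2 K}$, hence $\frac{4\hat\gamma^2 L^2}{B} \le \frac{4u^2 L^2\gamma^2}{(1-u)^2 K B}$. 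Then, using $\gamma \le \gamma_{\max} := \frac{(1-u)^2}{L(1+3u+u^3)}$ to replace one factor of $\gamma$, this term becomes linear in $\gamma$, namely $\gamma\,\frac{4u^2 L}{(1+3u+u^3) K B}$. Adding it to the other noise term $\frac{\gamma L(1+3u)}{(1-u)^2 B K}$ over the common denominator $(1-u)^2(1+3u+u^3)$, the numerator becomes $p(u) := 4u^2(1-u)^2 + (1+3u)(1+3u+u^3) = 7u^4 - 7u^3 + 13u^2 + 6u + 1$.

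The only genuinely ad hoc estimate is to certify $p(u) \le 19u+1$ on $[0,1)$, which I would prove via the factorization $19u+1 - p(u) = u(1-u)(7u^2+13) \ge 0$. Substituting this back, Theorem~\ref{thm:extrasgd_nonconvex_convergence} reads $\Eb{\frac{1}{T}\sum_{t=0}^{T-1}\norm{\nabla f(\bar{\xx}_{t+\frac{1}{2}})}^2} \le \frac{2(1-u) r_0}{\gamma T} + \gamma b$ with $b := \frac{(19u+1) L \sigma^2}{(1-u)^2 (1+3u+u^3) K B}$ and $r_0 := f(\xx_0) - f^\star$, valid for every $\gamma \le \gamma_{\max}$.

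It then remains to invoke Lemma~\ref{lemma:bound} with $a := 2(1-u) r_0$, the above $b$, and stepsize cap $\gamma_{\max}$; the tuned choice $\gamma = \min\{\gamma_{\max}, \sqrt{a/(bT)}\}$ produces a bound of the form $\frac{a}{\gamma_{\max} T} + 2\sqrt{ab/T}$. The first summand equals $\frac{2 L r_0 (1+3u+u^3)}{(1-u) T} = \cO\bigl((u^3+3u+1)\frac{L r_0}{T(1-u)}\bigr)$, and the second equals exactly $2\sqrt{\frac{(19u+1)}{(u^3+3u+1)}\,\frac{2 L r_0 \sigma^2}{K B T (1-u)}}$, which together is the asserted rate.

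I expect the only real subtlety to be the first reduction: one must keep the $u^2/(1-u)^2$ factor coming from $\hat\gamma \le \frac{u^2}{(1-u)^2}\gamma$ when bounding $\hat\gamma^2$ — bounding with $\hat\gamma \le \gamma/K$ alone would leave an additive constant in the numerator (value $5$ at $u=0$ in place of $1$) and break the clean $(19u+1)$ form. The polynomial expansion and the factorization that certifies the inequality on $[0,1)$ are mechanical, and the final tuning step is standard.
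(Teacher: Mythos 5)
Your proposal is correct and follows essentially the same route as the paper: bound $\hat\gamma^2 \le \frac{u^2}{K(1-u)^2}\gamma^2$ using both constraints, trade one factor of $\gamma L$ for $\frac{(1-u)^2}{1+3u+u^3}$, combine the two noise terms into $\frac{\gamma L\sigma^2}{BK}\cdot\frac{7u^4-7u^3+13u^2+6u+1}{(u^3+3u+1)(1-u)^2}$, bound the numerator by $19u+1$, and finish with the Lemma~\ref{lemma:bound} two-case tuning. The only cosmetic differences are that the paper certifies $p(u)\le 19u+1$ via $u^4\le u^3\le u^2\le u$ while you exhibit the factorization $19u+1-p(u)=u(1-u)(7u^2+13)$, and your tuning yields a leading constant $2$ on the $1/T$ term where the paper's two-case bookkeeping gives $4$ — immaterial inside the $\cO(\cdot)$.
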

The second term is asymptotically dominant as long as
$KB = \cO \bigl(  \frac{ (19 u + 1) (1 - u) }{ (u^3 + 3 u + 1)^3 } \frac{\sigma^2 T}{ L( f(\xx_0) - f^\star ) } \bigr)$
and for small $K$ we can achieve the linear speedup where $T = \cO \left( \frac{ L \sigma^2 (f(\xx_0) - f^\star) }{ KB \epsilon^2} \frac{ 19 u + 1 }{ ( u^3 + 3 u + 1 ) ( 1 - u ) } \right)$,

\begin{remark}
	By setting $u = 0$, we recover the same rates for \algopt in non-convex cases (Theorem~\ref{thm:non_convex_convergence_nesterov_minibatch} and Corollary~\ref{corollary:critical_minibatch_size_for_extragradient})
	as for standard mini-batch SGD (Theorem~\ref{thm:standard_convergence_minibatchsgd_nonconvex}) but we cannot show an actual speedup over mini-batch SGD by setting $u> 0$.
	However, thus might not necessarily be a limitation of our approach as to the best of our knowledge, there exist so far no   theoretical results for stochastic momentum methods that can show a speedup over mini-batch SGD.
\end{remark}

The analysis below extends the proof of~\algopt to the other cases of our extrapolated SGD framework.
\begin{theorem} \label{thm:extrasgd_nonconvex_convergence_noise}
	Under the extrapolation framework,
	IID random noise $ \zeta_t^k $ (instead of the past local mini-batch gradients in Algorithm~\ref{algo:our_main_algo}) is used for the local extrapolation,
	where $ \Eb{ \zetav_t^k } = 0 $ and $ \Eb{ \norm{ \zetav_t^k }^2 } \leq \hat{\sigma}^2 $.
	Under Assumption~\ref{assumption:unbiase}--\ref{assumption:smoothness},
	it holds for $ \gamma \leq \frac{ ( 1 - u )^2 }{ L ( 1 + u + u^3 ) } $:
	\begin{talign*}
		\begin{split}
			&\E{ \frac{1}{T} \sum_{t=0}^{T-1} \norm{ \nabla f ( \bar{\xx}_{t+\frac{1}{2}} ) }^2 }
			\leq \frac{2 (1 - u)}{\gamma T} \Eb{ f( \bar{\yy}_{0} ) - f ( \bar{\yy}_{T} ) } \\
			&\qquad + \frac{ \gamma L ( 1 + u ) }{ (1 - u)^2 BK } \sigma^2
			+ (L^2 + \frac{ (1 - u)^2 L }{ \gamma u^3 K })2 \hat{\gamma}^2 T \hat{\sigma}^2 \,.
		\end{split}
	\end{talign*}
\end{theorem}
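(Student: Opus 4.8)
The plan is to adapt the proof of Theorem~\ref{thm:extrasgd_nonconvex_convergence}, replacing the past‑gradient extrapolation by the zero‑mean noise $\zetav_t^k$ and analysing a virtual ("shadow") iterate. First I would introduce $\bar\yy_t := \xx_t + \frac{u}{1-u}\vv_t$ (so $\bar\yy_0 = \xx_0$), which, using $\vv_{t+1} = u\vv_t - \gamma\gg_t$ and $\xx_{t+1} = \xx_t + \vv_{t+1}$ with $\gg_t := \frac{1}{KB}\sum_{k}\sum_{i\in\cI_t^k}\nabla f_i(\xx_{t+\frac{1}{2}}^k)$, obeys the clean recursion $\bar\yy_{t+1} = \bar\yy_t - \frac{\gamma}{1-u}\gg_t$. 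Applying the descent lemma (Assumption~\ref{assumption:smoothness}) to $f$ along $\bar\yy$ gives $f(\bar\yy_{t+1}) \le f(\bar\yy_t) - \frac{\gamma}{1-u}\lin{\nabla f(\bar\yy_t), \gg_t} + \frac{L\gamma^2}{2(1-u)^2}\norm{\gg_t}^2$. Note that $\bar\xx_{t+\frac{1}{2}} = \xx_t + u\vv_t - \frac{\hat\gamma}{K}\sum_k\zetav_t^k$ and $\xx_{t+\frac{1}{2}}^k = \xx_t + u\vv_t - \hat\gamma\zetav_t^k$, so $\bar\yy_t - \bar\xx_{t+\frac{1}{2}} = \frac{u^2}{1-u}\vv_t + \frac{\hat\gamma}{K}\sum_k\zetav_t^k$.

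Next I would take conditional expectations in two stages. Conditioning on the iterate history and on the noise $\{\zetav_t^k\}_k$, the IID mini‑batch sampling (Assumptions~\ref{assumption:unbiase}--\ref{assumption:boundedvariance}) and conditional independence across workers give $\Eb{\gg_t} = \hat\gg_t := \frac1K\sum_k \nabla f(\xx_{t+\frac{1}{2}}^k)$ and $\Eb{\norm{\gg_t}^2} \le \norm{\hat\gg_t}^2 + \frac{\sigma^2}{KB}$. Then two smoothness estimates enter: $\norm{\hat\gg_t - \nabla f(\bar\xx_{t+\frac{1}{2}})} \le \frac{L}{K}\sum_k\norm{\xx_{t+\frac{1}{2}}^k - \bar\xx_{t+\frac{1}{2}}} = \frac{L\hat\gamma}{K}\sum_k\bigl\|\zetav_t^k - \tfrac1K\sum_j\zetav_t^j\bigr\|$, whose square has expectation $\le L^2\hat\gamma^2\hat\sigma^2$; and $\norm{\nabla f(\bar\yy_t) - \nabla f(\bar\xx_{t+\frac{1}{2}})} \le L\bigl\|\tfrac{u^2}{1-u}\vv_t + \tfrac{\hat\gamma}{K}\sum_k\zetav_t^k\bigr\|$. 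Writing $-\lin{\nabla f(\bar\yy_t),\hat\gg_t} = -\lin{\nabla f(\bar\xx_{t+\frac{1}{2}}),\hat\gg_t} - \lin{\nabla f(\bar\yy_t) - \nabla f(\bar\xx_{t+\frac{1}{2}}),\hat\gg_t}$, bounding the first piece by $-\frac12\norm{\nabla f(\bar\xx_{t+\frac{1}{2}})}^2 + \frac12\norm{\hat\gg_t - \nabla f(\bar\xx_{t+\frac{1}{2}})}^2$, the second by a weighted Young's inequality with a parameter tied to $u$, and using $\Eb{\norm{\hat\gg_t}^2} \le 2\Eb{\norm{\nabla f(\bar\xx_{t+\frac{1}{2}})}^2} + 2L^2\hat\gamma^2\hat\sigma^2$, yields a per‑step inequality whose only remaining awkward ingredient is the momentum‑buffer term $\norm{\vv_t}^2$.

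To close the recursion I would use the standard bound $\sum_{t=0}^{T-1}\Eb{\norm{\vv_t}^2} \le \frac{\gamma^2}{(1-u)^2}\sum_{t=0}^{T-1}\Eb{\norm{\gg_t}^2}$, obtained from $\norm{\vv_t} \le \gamma\sum_{s<t}u^{t-1-s}\norm{\gg_s}$ together with Cauchy--Schwarz on the geometric weights, and then re‑expand $\Eb{\norm{\gg_t}^2}$ once more in terms of $\norm{\nabla f(\bar\xx_{t+\frac{1}{2}})}^2$, $\sigma^2/(KB)$ and $L^2\hat\gamma^2\hat\sigma^2$. Summing over $t$, telescoping $f(\bar\yy_0) - f(\bar\yy_T)$ (so no $f^\star$ is needed), and rearranging, the condition $\gamma \le \frac{(1-u)^2}{L(1+u+u^3)}$ is exactly what forces the accumulated positive $\norm{\nabla f(\bar\xx_{t+\frac{1}{2}})}^2$ coefficients — from the $\norm{\gg_t}^2$ term, the folded‑back $\norm{\vv_t}^2$ term, and the Young residual — to stay below half of the $\frac{\gamma}{2(1-u)}$ produced by the main inner‑product term; dividing by $\frac{\gamma T}{2(1-u)}$ then gives the stated bound, with $\frac{\gamma L(1+u)}{(1-u)^2 BK}\sigma^2$ collecting the $\sigma^2/(KB)$ contributions and $(L^2 + \frac{(1-u)^2 L}{\gamma u^3 K})2\hat\gamma^2 T\hat\sigma^2$ collecting the $L^2\hat\gamma^2\hat\sigma^2$ and noise‑discrepancy contributions.

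I expect the main obstacle to be the bookkeeping of the discrepancy $\bar\yy_t - \bar\xx_{t+\frac{1}{2}} = \frac{u^2}{1-u}\vv_t + \frac{\hat\gamma}{K}\sum_k\zetav_t^k$: selecting the single Young's‑inequality weight that simultaneously (i) lets the momentum‑buffer term be absorbed by the negative gradient term under the stated stepsize, and (ii) produces precisely the $\tfrac{1}{u^3 K}$‑shaped noise penalty, while keeping every constant explicit, is the delicate part. It is also the structural reason the bound degrades as $u\to 0$ and why — unlike \algopt, where $\hat\gamma\approx\gamma/K$ keeps $\hat\gamma^2 T$ under control — the additive‑noise variant carries an $\cO(\hat\gamma^2 T\hat\sigma^2)$ term that does not vanish with $T$.
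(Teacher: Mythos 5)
Your plan follows the paper's route closely, and your closed form $\bar\yy_t = \xx_t + \tfrac{u}{1-u}\vv_t$ is a nice simplification: it does coincide with the paper's recursive definition of the auxiliary sequence (paper eq.~\eqref{eq:extrap_auxiliary_sequence} reduces to exactly this by using $\xx_{t-1}=\xx_t-\vv_t$ and $u\vv_{t-1}=\vv_t+\gamma\bar\gg_{t-\frac12}$), and it gives the same recursion $\bar\yy_{t+1}-\bar\yy_t=-\tfrac{\gamma}{1-u}\bar\gg_{t+\frac12}$ and the same discrepancy $\bar\yy_t-\bar\xx_{t+\frac12}=\tfrac{u^2}{1-u}\vv_t+\hat\gamma\bar\xiv_t$ that the paper obtains in Lemmas~\ref{lemma:virtual_iterate} and~\ref{lemma:y_x_dist}. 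The bounds $\Eb{\norm{\bar\gg_{t+\frac12}}^2}\le\norm{\hat\gg_t}^2+\sigma^2/(KB)$, the smoothness-based control of $\hat\gg_t-\nabla f(\bar\xx_{t+\frac12})$ via Lemma~\ref{lemma:x_x_dist}, and the geometric-series bound on $\sum_t\Eb{\norm{\vv_t}^2}$ are all exactly the paper's ingredients.

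The genuine gap is in how you handle the inner-product term. You bound $-\lin{\nabla f(\bar\xx_{t+\frac12}),\hat\gg_t}\le -\tfrac12\norm{\nabla f(\bar\xx_{t+\frac12})}^2+\tfrac12\norm{\hat\gg_t-\nabla f(\bar\xx_{t+\frac12})}^2$, which \emph{discards} the term $-\tfrac12\norm{\hat\gg_t}^2$ that the exact polarization identity $\lin{\aa,\bb}=\tfrac12(\norm{\aa}^2+\norm{\bb}^2-\norm{\aa-\bb}^2)$ provides. The paper keeps that $-\tfrac{\gamma}{2(1-u)}\norm{\hat\gg_t}^2$ around precisely so that, after collecting all the positive $\norm{\hat\gg_t}^2$ contributions (the Young residual from the $\nabla f(\bar\yy_t)-\nabla f(\bar\xx_{t+\frac12})$ piece, the smoothness term $\tfrac{\gamma^2L}{2(1-u)^2}\norm{\bar\gg_{t+\frac12}}^2$, and the folded-back $\norm{\vv_t}^2$ sum), the \emph{net} coefficient of $\norm{\hat\gg_t}^2$ is $\tfrac{\gamma}{2(1-u)}\bigl(\tfrac{\gamma L(1+u+u^3)}{(1-u)^2}-1\bigr)$, which the stated stepsize $\gamma\le\tfrac{(1-u)^2}{L(1+u+u^3)}$ makes nonpositive so it can be dropped. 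Having thrown that term away, you are forced to convert the remaining positive $\norm{\hat\gg_t}^2$ via $\norm{\hat\gg_t}^2\le 2\norm{\nabla f(\bar\xx_{t+\frac12})}^2+2\norm{\hat\gg_t-\nabla f(\bar\xx_{t+\frac12})}^2$, which doubles every positive coefficient feeding into the gradient norm; the net coefficient of $\norm{\nabla f(\bar\xx_{t+\frac12})}^2$ then becomes $-\tfrac{\gamma}{2(1-u)}+\tfrac{\gamma^2L(1+u+u^3)}{(1-u)^3}$, requiring $\gamma\le\tfrac{(1-u)^2}{2L(1+u+u^3)}$ (twice as restrictive) and, since the net coefficient is not exactly $-\tfrac{\gamma}{2(1-u)}$, the rearrangement produces a messier prefactor rather than the stated $\tfrac{2(1-u)}{\gamma T}$ and the noise penalty picks up extra $L^2\hat\gamma^2\hat\sigma^2$ terms. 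You already sensed this tension in your phrase ``stay below half'' --- that is precisely the factor you cannot afford. The fix is to use the full polarization identity on term (b), keep $\norm{\hat\gg_t}^2$ as its own bookkeeping variable all the way through, and only at the very end choose $\gamma$ to make its accumulated coefficient nonpositive; do not expand $\norm{\hat\gg_t}^2$ back into $\norm{\nabla f(\bar\xx_{t+\frac12})}^2$ at any stage.
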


\begin{remark}
	The choice of using random noise for extrapolation
	in Theorem~\ref{thm:extrasgd_nonconvex_convergence_noise}
	requires the manual introduction of the noise distribution
	($\hat{\sigma}^2$) for each problem setup.
	The dependence on the unknown relationship between $\hat{\sigma}^2$ and $\sigma^2$
	results in the difficulty of providing a concise convergence analysis
	(e.g. exact convergence rate, the critical mini-batch size) for Theorem~\ref{thm:extrasgd_nonconvex_convergence_noise}.
\end{remark}

\begin{figure*}[!h]
	\centering
	\subfigure[\small Learning curves for different methods (w/o learning rate decay).]{
		\includegraphics[width=0.475\textwidth,]{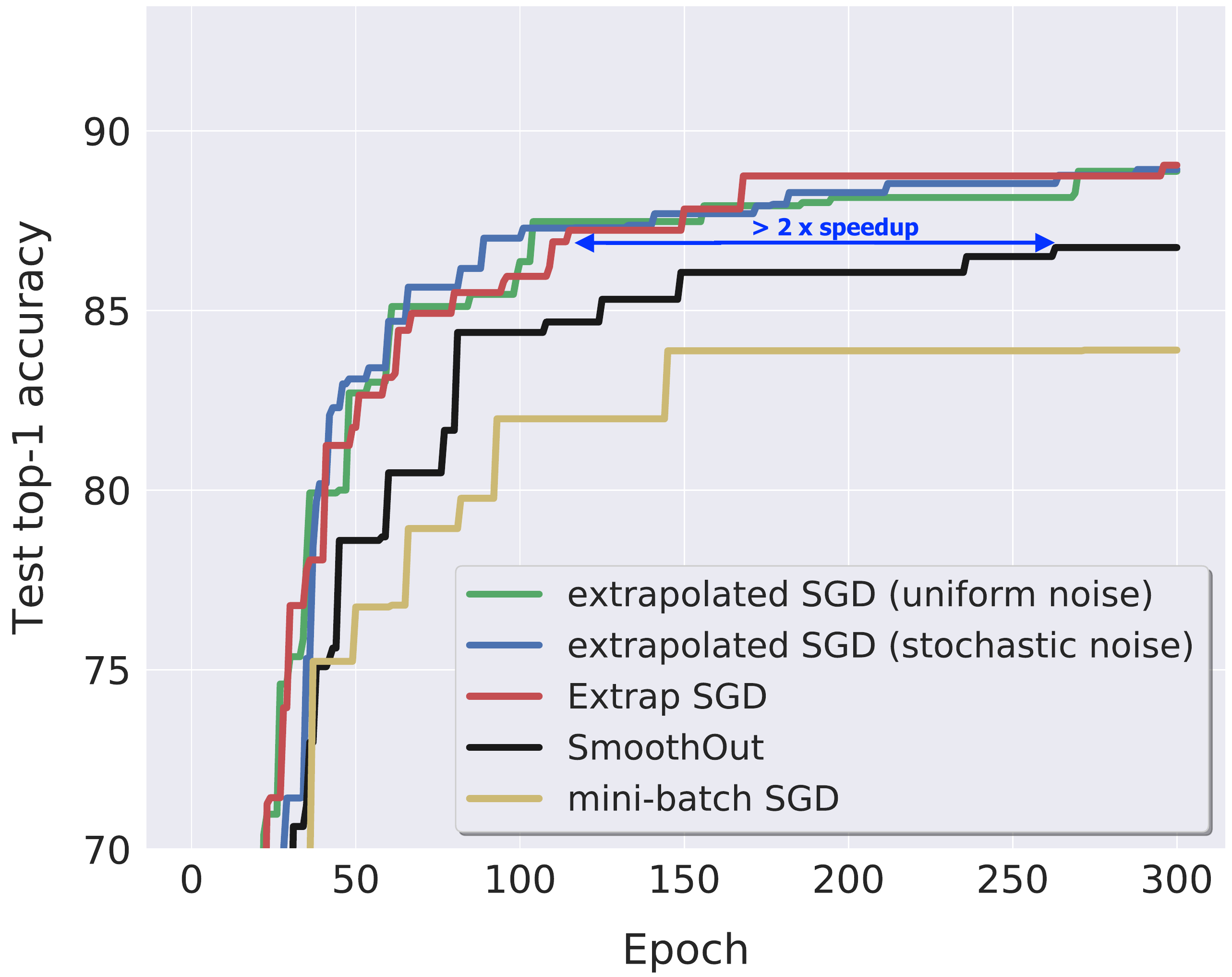}
		\label{fig:resnet20_cifar10_8k_learning_curves_different_methods_without_learning_rate_decay}
	}
	\hfill
	\subfigure[\small Better smoothness of~\algopt.]{
		\includegraphics[width=0.475\textwidth,]{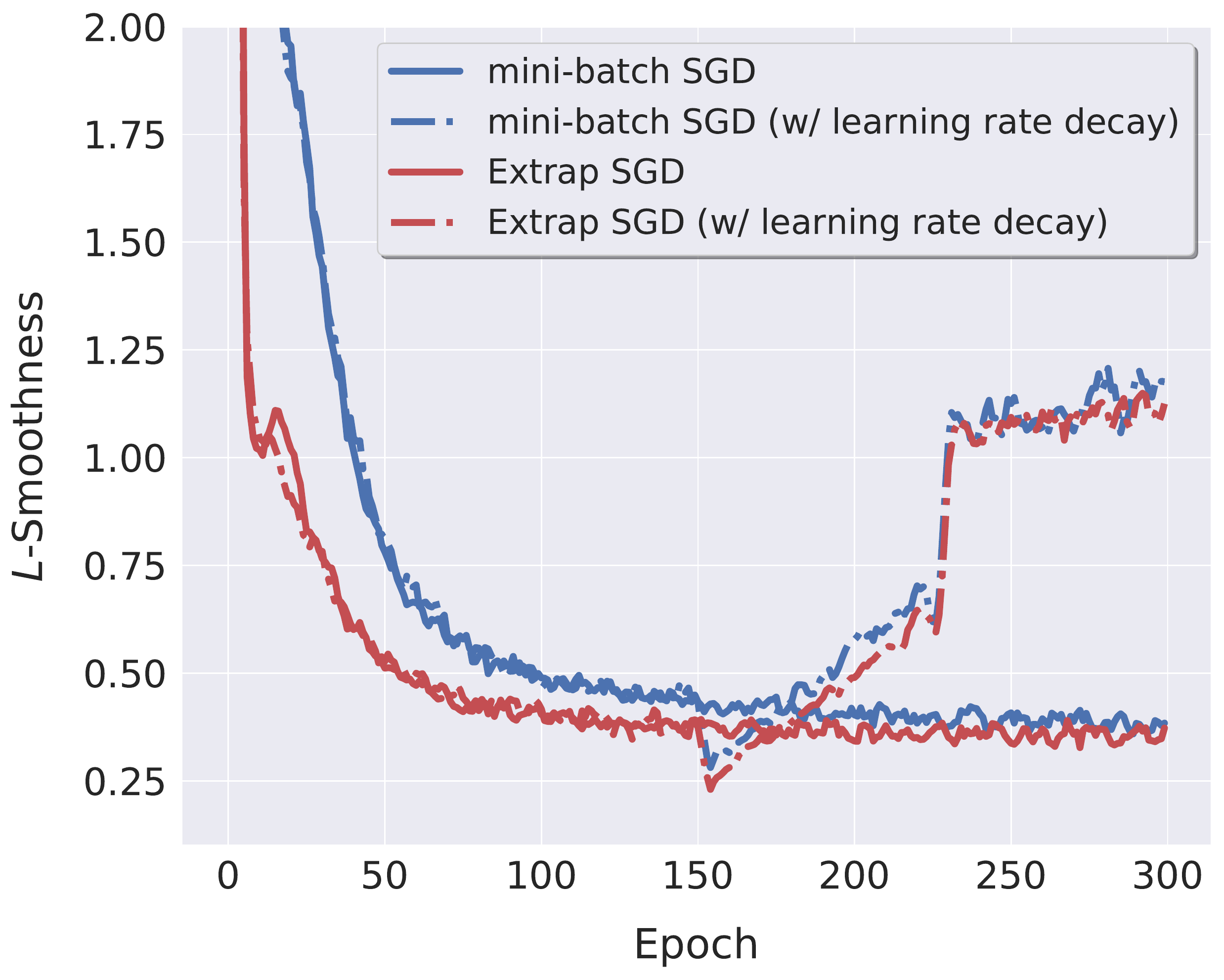}
		\label{fig:resnet20_cifar10_smoothness}
	}
	\vspace{-1em}
	\caption{\small
		Understanding the learning behaviors of different methods on the large-batch training
		(with mini-batch size $8,192$ on $32$ workers) for training ResNet-20 on CIFAR-10.
		The visualization of smoothness in Figure~\ref{fig:resnet20_cifar10_smoothness}
		follows the idea in~\citep{santurkar2018does,haruki2019gradient};
		it takes $8$ additionally steps (each with $30\%$ of the update) in the direction of the update for each training step,
		and the smoothness of a training step is expressed by the maximum value of $L$ (evaluated after local update steps)
		satisfying the Assumption~\ref{assumption:smoothness}.
		We use the learning rate scaling and warmup in~\citet{goyal2017accurate} for the first $5$ epochs,
		and $\gamma$ and $\tilde{\gamma}$ are fine-tuned for different methods.
	}
	\vspace{-0.5em}
	\label{fig:resnet20_cifar10_8k_256x32_learning_curves_and_smoothness}
\end{figure*}

\section{Experiments} \label{sec:experiments}
We first briefly outline the general experimental setup below (for more details refer to Appendix~\ref{appendix:detailed_experiment_setup})
and then thoroughly evaluate our framework on different challenging large-batch training tasks.
We aim to challenge and push the limit of large-batch training.
We limit our attention to three standard and representative benchmarking tasks, with a controlled epoch budget (for each task).
We ensure the used mini-batch size is a significant fraction of the whole dataset.
Performing experiments on a much larger dataset for the same demonstration purposes is out of our computational ability\footnote{
	E.g., we use the mini-batch size of $8{,}192$ (roughly $16\%$ of the total data),
	out of $50{,}000$ samples for CIFAR as our main tool of justification.
	While for ImageNet~\citep{russakovsky2015imagenet} with $1.28$ million data samples in total,
	the same fraction would result in roughly $800$ workers for local mini-batch of size $256$.
}.

\subsection{Experimental Setup}
\paragraph{Datasets.} We evaluate all methods on the following three tasks:
(1) Image Classification for CIFAR-10/100~\citep{krizhevsky2009learning} ($50$K training samples and $10$K testing samples with $10/100$ classes)
with the standard data augmentation and preprocessing scheme~\citep{he2016deep,huang2016deep};
(2) Language Modeling for WikiText2~\citep{merity2016pointer}
(the vocabulary size is $33$K, and its train and validation set have $2$ million tokens and $217$K tokens respectively);
and (3) Neural Machine Translation for Multi30k~\citep{elliott2016multi30k}.

\paragraph{Models and training schemes.}
Several benchmarking models are used in our experimental evaluation.
(1) ResNet-20~\citep{he2016deep} and VGG-11~\citep{simonyan2014very} on CIFAR for image classification,
(2) two-layer LSTM~\citep{merity2017regularizing} with hidden dimension of size $128$ on WikiText-2 for language modeling,
and (3) a down-scaled transformer (factor of $2$ w.r.t.< the base model in~\citet{vaswani2017attention}) for neural machine translation.
Weight initialization schemes for the three tasks follow~\citet{goyal2017accurate,he2015delving},~\citet{merity2017regularizing} and~\citet{vaswani2017attention} respectively.

We use mini-batch SGD with a Nesterov momentum of $0.9$ without dampening for image classification and language modeling tasks,
and Adam for neural machine translation tasks.
In the following experiment section,
the term ``mini-batch SGD'' indicates the mini-batch SGD with Nesterov momentum unless mentioned otherwise.

For experiments on image classification and language modeling,
unless mentioned otherwise the models are trained for $300$ epochs;
the local mini-batch sizes are set to $256$ and $64$ respectively.
By default, all related experiments will use learning rate scaling and warmup scheme\footnote{
	Since we will fine-tune the (to be scaled) learning rate,
	there is no difference between learning rate linear scaling~\citep{goyal2017accurate} and square root scaling~\citep{hoffer2017train} in our case.
}~\citep{goyal2017accurate,hoffer2017train}.
The learning rate is always gradually warmed up from a relatively small value for the first few epochs.
Besides, the learning rate $\gamma$ in image classification task will be dropped by a factor of $10$
when the model has accessed $50\%$ and $75\%$ of the total number of training samples~\citep{he2016deep,huang2016densely}.
The LARS is only applied on image classification task\footnote{
	Our implementation relies on the PyTorch extension of \href{https://github.com/NVIDIA/apex}{NVIDIA apex}
	for mixed precision and distributed training.
}~\citep{you2017large}.

For experiments on neural machine translation, we use standard inverse square root learning rate schedule~\citep{vaswani2017attention}.
The warmup step is set to $4000$ for the mini-batch size of $64$ and will be linearly scaled down by the global mini-batch size\footnote{
	We follow the practical \href{https://github.com/NVIDIA/DeepLearningExamples/tree/master/PyTorch/Translation/Transformer}{instruction} from NVIDIA.
}.

We carefully tune the learning rate $\gamma$ and
the trust term $\tilde{\gamma}$ in~\citet{you2017large}.
The tuning procedure ensures that the best hyper-parameter lies in the middle of our search grids;
otherwise, we extend our search grid.
The procedure of hyperparameter tuning can be found in Appendix~\ref{appendix:hyperparameter_values}.

\subsection{Evaluation on Large-batch Training} \label{subsec:experiment_results}
\paragraph{Superior performance of~\algopt on different tasks.}
We evaluate our extrapolation framework and compare it with SOTA large-batch training methods
on CIFAR-10 image classification (Figure~\ref{fig:resnet20_cifar10_8k_256x32_learning_curves_and_smoothness})
and WikiText2 language modeling (Figure~\ref{fig:lstm_wikitext2_learning_curves}).
To better exhibit the optimization behaviors of different methods, for these two tasks we do not decay the learning rate.
\emph{The extrapolated SGD framework in general significantly accelerates the optimization and leads to better test performance than the existing SOTA methods.}
For example, the smaller gradient Lipschitz constant illustrated in Figure~\ref{fig:resnet20_cifar10_smoothness} demonstrates the improved optimization landscape,
which explains the at least $2 \times$ speedup in terms of the convergence (after thorough hyperparameter tuning) in
Figure~\ref{fig:resnet20_cifar10_8k_learning_curves_different_methods_without_learning_rate_decay} and Figure~\ref{fig:lstm_wikitext2_learning_curves}.

We further extend\footnote{
	It is non-trivial to adapt Adam to the noise variants of the extrapolated SGD.
} the extrapolation idea of~\algopt to~\algoptadam and validate its effectiveness (compared with Adam) on neural machine translation with Transformer.
The algorithmic description refers to Algorithm~\ref{algo:our_main_algo_adam_variant} in Appendix~\ref{appendix:extrap_adam}.
Figure~\ref{fig:multi30k_transformer} shows the results of large-batch training (using $4\%$ and $16\%$ of the training data per mini-batch)
and \emph{\algoptadam again outperforms the Adam with at least $2\times$ speedup.}

\begin{figure}[!h]
	\centering
	\includegraphics[width=0.4\textwidth,]{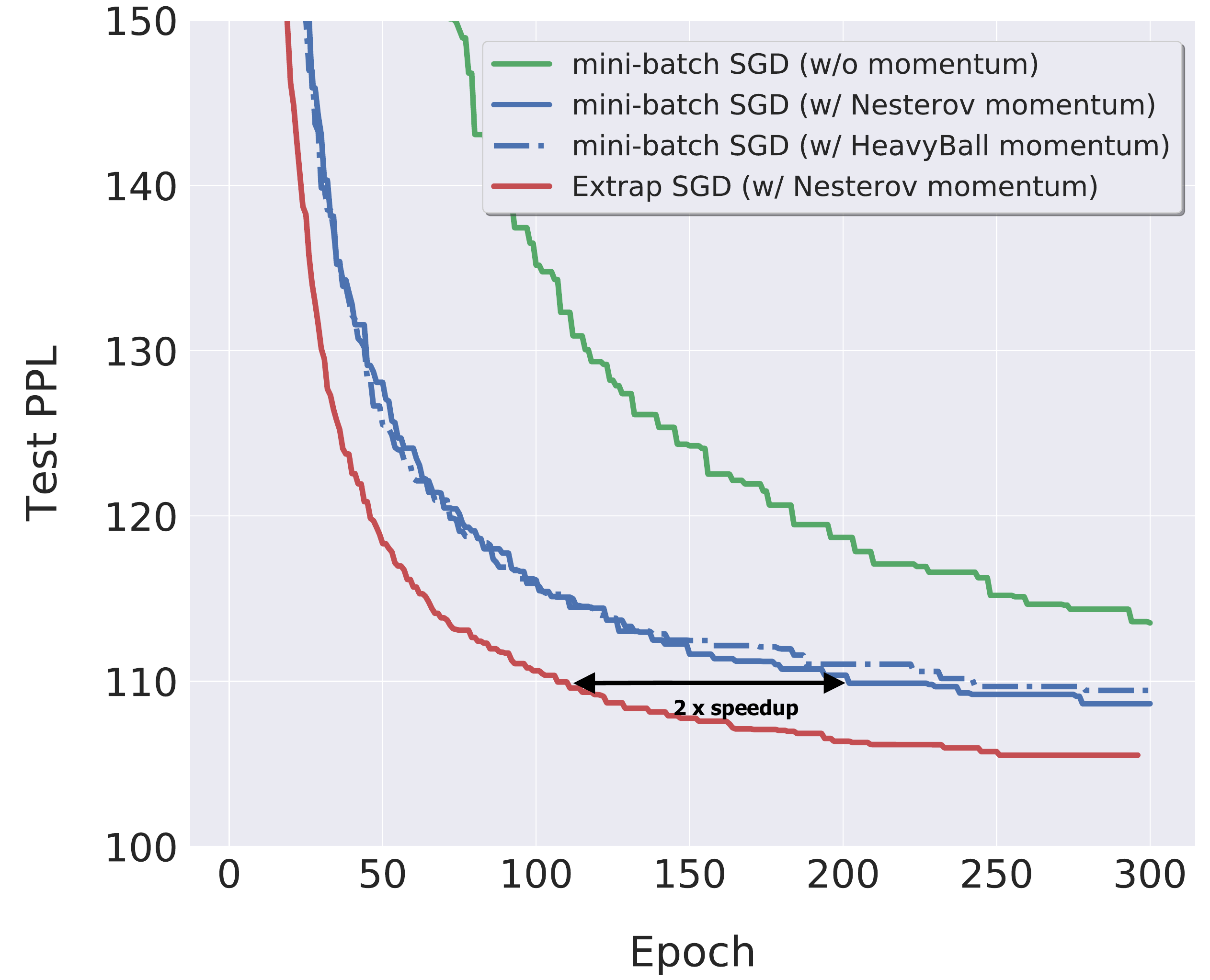}\\
	\vspace{0.5em}
	\resizebox{0.8\linewidth}{!}{
		\def\arraystretch{1.5}
		\setlength{\tabcolsep}{3pt}
		\begin{tabular}{ccccc}
			\toprule
			               & $K\!=\!24$        & $K\!=\!48$        \\ \midrule
			mini-batch SGD & $108.39 \pm 0.31$ & $110.16 \pm 0.67$ \\
			\algopt        & $105.86 \pm 0.32$ & $107.86 \pm 0.50$ \\ \bottomrule
		\end{tabular}%
	}
	\caption{\small
		The perplexity (PPL, the lower the better) of training WikiText-2 on LSTM.
		The global mini-batch size are $1{,}536$ and $3{,}072$ for $K\!=\!24$ and $K\!=\!48$ respectively,
		accounting for $2\%$ and $4\%$ of the total training data.
		We use the learning rate scaling and warmup in~\citet{goyal2017accurate}, and use constant learning rate after the warmup.
		We finetune the $\gamma$ for mini-batch SGD (and its momentum variants);~\algopt reuses the hyper-parameter from mini-batch SGD.
		The results of the inline table are averaged over three different seeds.
		The displayed learning curves are based on $K\!=\!24$
		and more details refer to Appendix~\ref{appendix:lstm_wikitext2}.
	}
	\vspace{-0.5em}
	\label{fig:lstm_wikitext2_learning_curves}
\end{figure}

\begin{figure}[!h]
	\centering
	\subfigure[\small Perplexity (validation).]{
		\includegraphics[width=0.215\textwidth,]{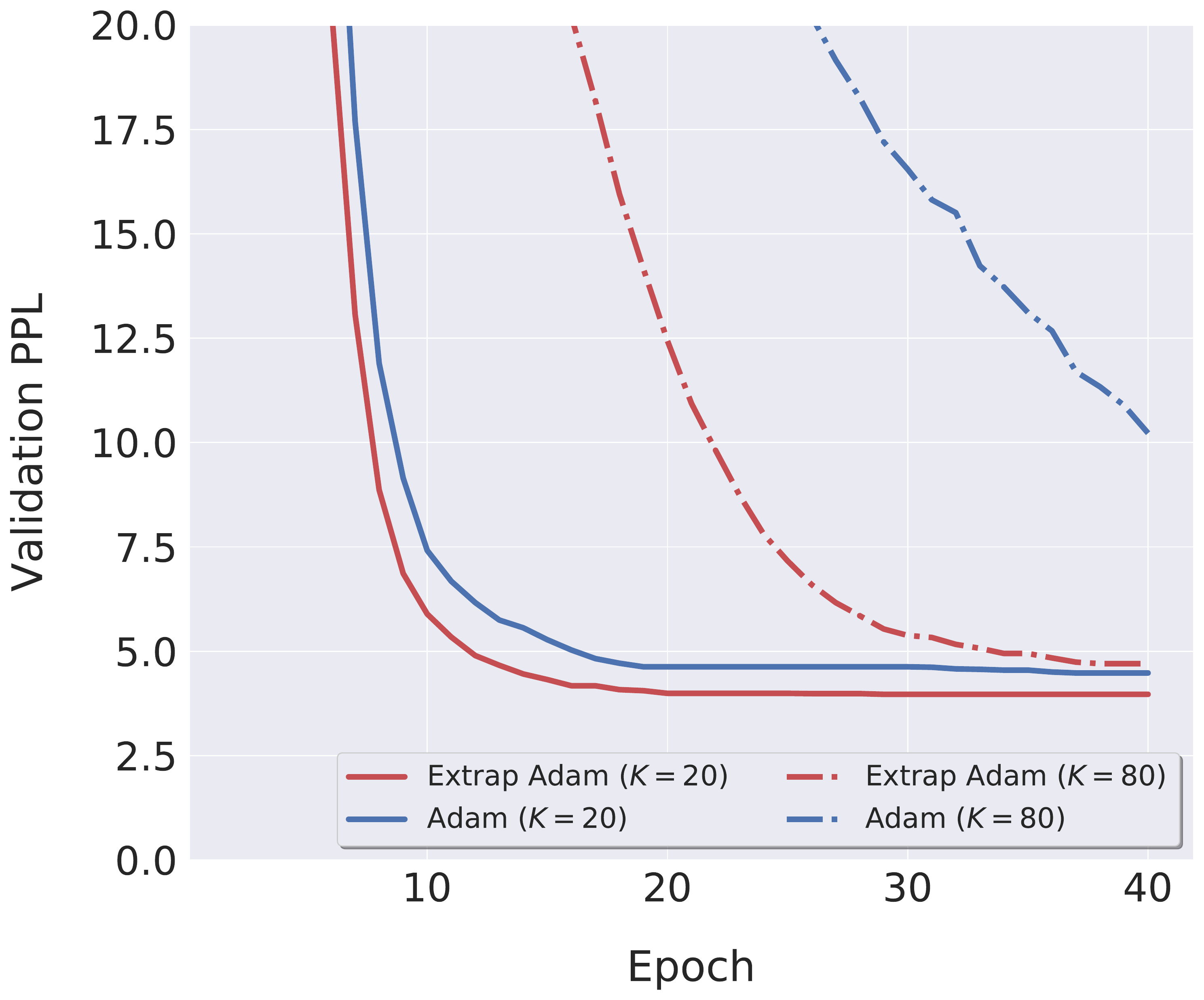}
		\label{fig:multi30k_transformer_val_ppl}
	}
	\hfill
	\subfigure[\small Accuracy (validation).]{
		\includegraphics[width=0.215\textwidth,]{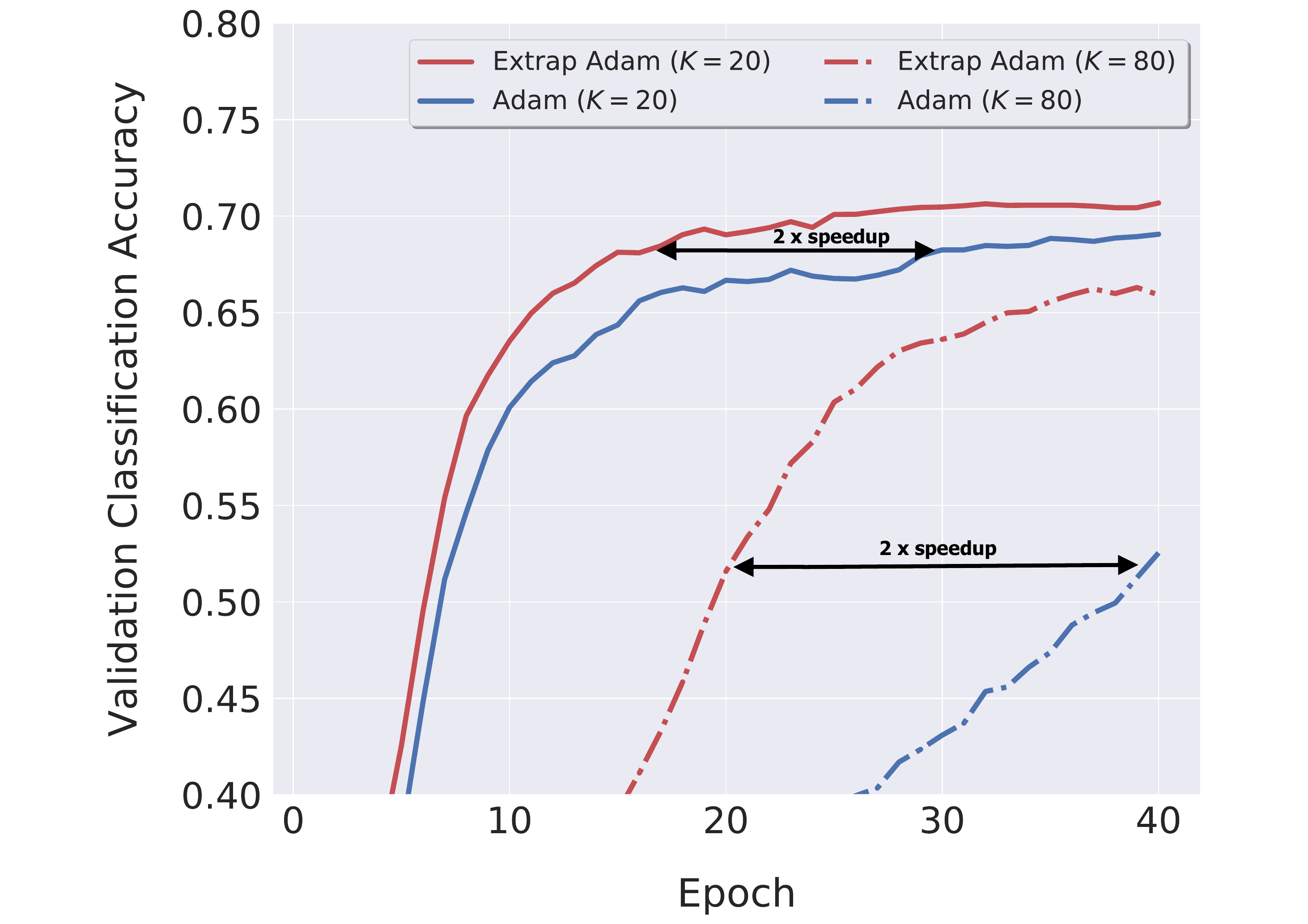}
		\label{fig:multi30k_transformer_val_top1}
	}
	\caption{\small
		Training \algoptadam on Transformer for Multi30k.
		The evaluations are performed on the validation dataset,
		for the training on $K\!=\!20$ and $K\!=\!80$ workers (corresponding to roughly $4\%$ and $16\%$ of the total training data).
		We use the standard inverse square root learning rate schedule~\citep{vaswani2017attention}
		and scale the warmup step based on the number of workers.
	}
	\vspace{-0.5em}
	\label{fig:multi30k_transformer}
\end{figure}

\begin{table*}[!h]
	\caption{\small
		The performance comparison of different methods for large-batch training on CIFAR-10 (mini-batch of size $8{,}192$ on $K\!=\!32$).
		Two neural architectures (ResNet-20 and VGG-11) are considered (with and without batch normalization).
		Unless mentioned otherwise each method will use the learning rate scaling and warmup in~\citet{goyal2017accurate} and LARS in~\citet{you2017large}.
		We finetune $\gamma$ and $\tilde{\gamma}$ for mini-batch SGD and SmoothOut (with $\hat{\gamma}$ additionally).
		For the results of extrapolated SGD,
		we reuse the optimal $\gamma$ and $\tilde{\gamma}$ tuned on mini-batch SGD;
		$\hat{\gamma}$ is fine-tuned for noise-based extrapolation variants.
		The results are averaged over three different seeds.
	}
	\label{tab:two_tasks_extreme_minibatch_size_for_all_methods}
	\centering
	\resizebox{1.\textwidth}{!}{%
		\begin{tabular}{ccc|ccccc}
			\toprule
			& \parbox{3cm}{\centering mini-batch SGD \\ (w/o LARS)}     & \parbox{3cm}{\centering \algopt \\ (w/o LARS)}    & \multicolumn{1}{c}{mini-batch SGD}    & \parbox{3cm}{\centering SmoothOut \\ \citep{wen2018smoothout}}    & \parbox{3cm}{\centering extrapolated SGD, \\ uniform noise}   & \parbox{3cm}{\centering extrapolated SGD, \\ stochastic noise}            & \algopt \\ \midrule
			ResNet-20 on CIFAR-10 & $90.00 \pm 0.48$ & $90.47 \pm 0.16$ & $91.36 \pm 0.19$ & $91.55 \pm 0.20$ & $91.53 \pm 0.25$ & $91.66 \pm 0.24$ & $91.72 \pm 0.11$ \\
			VGG-11 on CIFAR-10    & $73.09 \pm 9.35$ & $76.79 \pm 3.5$  & $86.64 \pm 0.10$ & $86.92 \pm 0.15$ & $87.00 \pm 0.31$ & $86.04 \pm 0.43$ & $87.00 \pm 0.26$ \\
			\bottomrule
		\end{tabular}%
	}
\end{table*}

\paragraph{Optimization v.s. Generalization benefits.}
To better understand when and how \algopt (and extrapolated SGD) help,
we switch our attention to the commonly accepted training practices:
using an initial large learning rate and decaying when the training plateaus.
The common beliefs\footnote{
	Recent work~\citep{li2019towards,you2020how} complement the understanding of this phenomenon from the learning of different patterns via different learning rate scales;
	we leave the connection to this aspect for future work.
}~\citep{lecun1991second,kleinberg2018alternative} argue that,
the initial large learning rate
accelerates the transition from random initialization to convergence regions (optimization),
and the decaying leads the convergence to local minimum (generalization).

Table~\ref{tab:two_tasks_extreme_minibatch_size_for_all_methods}
thoroughly evaluates the large-batch training performance (ResNet-20 on CIFAR-10 for mini-batch size $8{,}192$, with learning rate decay schedule) for all related methods.
Though the remarkable optimization improvements in Figure~\ref{fig:resnet20_cifar10_8k_learning_curves_different_methods_without_learning_rate_decay}
justify \emph{the effects of \algopt (and extrapolated SGD), i.e.\ smooth the ill-conditioned loss landscape},
decaying the learning rate diminishes our advantages, as illustrated in Table~\ref{tab:two_tasks_extreme_minibatch_size_for_all_methods} and Figure~\ref{fig:resnet20_cifar10_learning_curves_complete_1} in Appendix.
\emph{We argue that \algopt and its variants can smoothen the loss surface thus avoid some bad local minima regions, but it cannot guarantee to converge to a much better local minimum}\footnote{
	Additional experiments show that switching \algopt to mini-batch SGD after the first learning rate decay
	has similar test performance as using \algopt alone.
},
contrary to the statements in~\citep{wen2018smoothout,haruki2019gradient}.
They use (limited) empirical generalization metrics as the main arguments of the ``flatter minima'', ignoring the complex training dynamics;
their ignored SOTA optimization techniques (e.g. LARS in some experiments) might also result in the improper understandings.
Our empirical results provide insights:
\emph{the primary benefits of~\algopt are on the optimization phase (early training phase) and will diminish in the later training phase}.
It is also reflected in Figure~\ref{fig:resnet20_cifar10_smoothness} in terms of the smoothness.

\paragraph{Combining \algopt with post-local SGD.}
Given our new insights in the previous paragraph,
here we try to understand the importance of better optimization brought by~\algopt for the eventual generalization.
We consider the post-local SGD in~\citet{lin2020dont}, a known technique arguing to converge to flatter local minimum for better generalization.
This choice comes from the noticeable optimization benefits of~\algopt in the initial training phase
while post-local SGD targeting to converge to ``flatter minima'' for the later training phase.
Please refer to Algorithm~\ref{algo:our_main_algo_postlocal_variant} in Appendix~\ref{appendix:local_sgd} for training details.

\emph{We argue that \algopt biases the optimization trajectory towards a better-conditioned loss surface, where solutions with good generalization properties can be found more easily.}
Figure~\ref{fig:resnet20_cifar10_16k_extragsgd_with_postlocal} challenges the extreme large-batch training
(mini-batch of size $16{,}384$ on $64$ workers, accounting for $33\%$ of training data) for ResNet-20 on CIFAR-10 with different epoch budgets.
We can notice that the improper optimization in the critical initial learning phase of the mini-batch SGD results in a significant generalization gap,
which cannot be addressed by adding post-local SGD or increasing the number of training epochs alone.
\algopt, on the contrary, avoids the regions containing bad local minima, complementing the ability of post-local SGD for converging to better solutions.
Table~\ref{tab:resnet20_cifar_8k_extrapsgd_with_postlocal} in the Appendix~\ref{appendix:resnet20_cifar}
additionally reports a similar observation for mini-batch of size $8{,}192$ on more datasets.

\begin{figure}[!h]
	\centering
	\includegraphics[width=0.4\textwidth,]{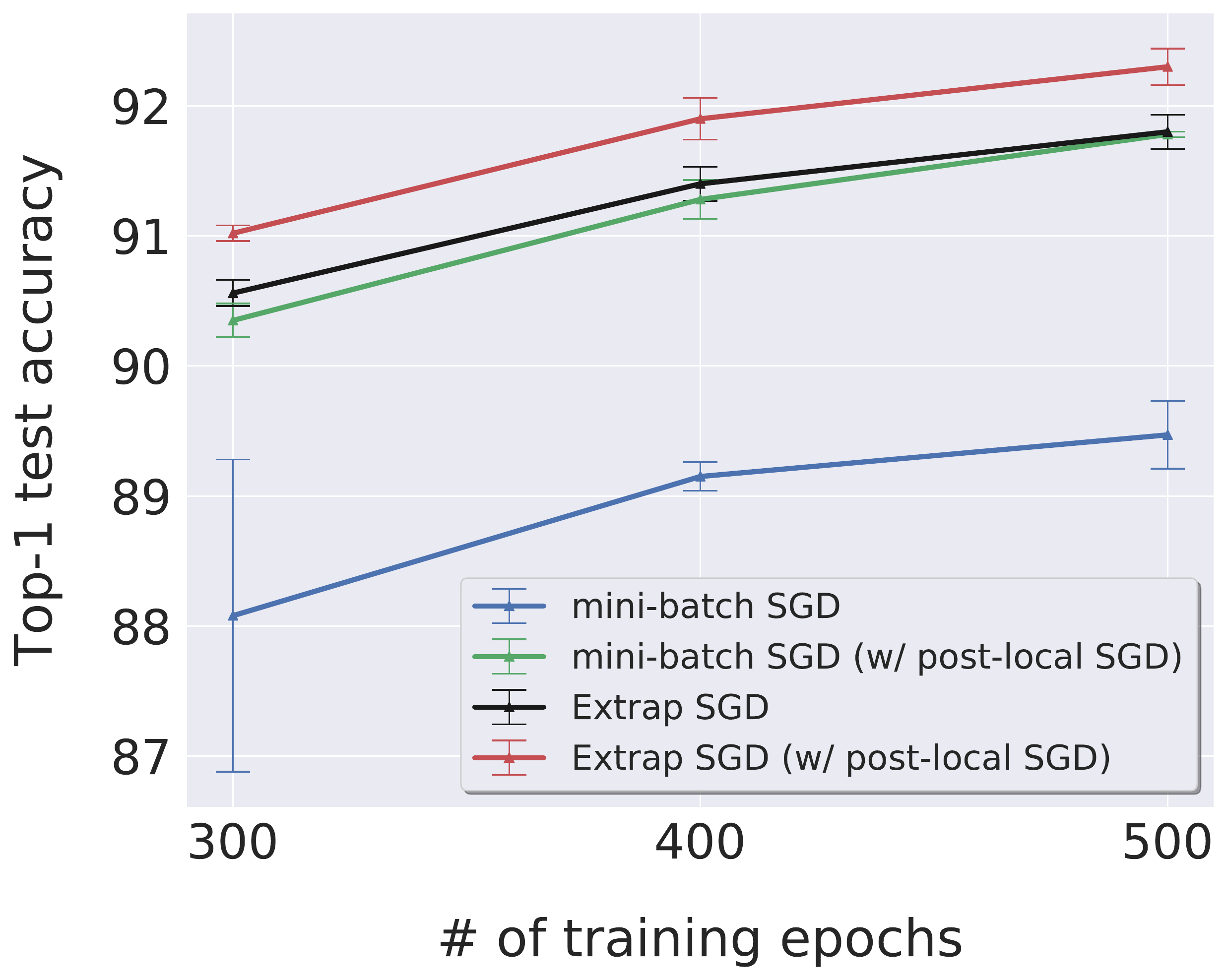}
	\caption{\small
		The test top-1 accuracy of integrating \algopt with post-local SGD~\citep{lin2020dont}.
		The performance of ResNet-20 on CIFAR-10/100 is evaluated with a global mini-batch size of $16,384$ ($33\%$ of the total training data with $64$ workers).
		By default we use the learning rate scaling and warmup in~\citet{goyal2017accurate} and LARS in~\citet{you2017large}.
		We individually finetune $\gamma$ and $\tilde{\gamma}$ for each base method;
		the local update step $H$ is tuned and set to $H\!=\!4$.
		The learning rate is decayed by $10$ when the model has accessed $50\%$ and $75\%$ of the total training samples.
		The results are averaged over three different seeds.
	}
	\label{fig:resnet20_cifar10_16k_extragsgd_with_postlocal}
\end{figure}

\subsection{Ablation study} \label{subsec:ablation_study}
\paragraph{\algopt for different local mini-batch sizes and number of workers.}
Table~\ref{tab:resnet20_cifar10_8k_different_combination_of_minibatch_size_and_n_workers} in the Appendix~\ref{appendix:resnet20_cifar}
evaluates how different combinations of the local mini-batch size and the number of workers will impact the performance,
for a given global mini-batch size ($8{,}192$ for ResNet-20 on CIFAR-10).
\emph{The benefits of \algopt can be further pronounced when increasing the number of workers,
	which is the common practice for large-batch training.
}
Similar observation can be found in Figure~\ref{fig:multi30k_transformer} for the increased number of workers (as well as the global mini-batch size).

\paragraph{Understanding the effect of momentum.}
Given the mixed effects of local extrapolation and momentum acceleration in~\algopt,
Figure~\ref{fig:resnet20_cifar10_impact_of_momentum_highlight} in Appendix~\ref{appendix:resnet20_cifar10_momentum_impact}
decouples these two factors for the training ResNet-20 on CIFAR-10 with mini-batch of size $8{,}192$.
We can witness that
(1)~\algopt can take advantages of both extragradient as well as the acceleration from the momentum,
and thus can always be applied for accelerated and stabilized distributed optimization;
(2) using extrapolation alone (no momentum) in~\algopt still outperforms mini-batch SGD with tuned momentum factor;
(3) tuning momentum factor for mini-batch SGD can marginally improve the optimization performance but cannot eliminate the optimization difficulty.

\section{Conclusion}
In this work, we adopt the idea of randomized smoothing to distributed training and propose \algopt to perform extrapolation with past local mini-batch gradients.
The idea further extends to a unified framework, covering multiple noise extrapolation variants.
We provide convergence guarantees for methods within this framework,
and empirically justify the remarkable benefits of our methods on image classification, language modeling and neural machine translation tasks.
We further investigate and understand the properties of our methods;
the algorithms smoothen the ill-conditioned loss landscape for faster optimization and biases the optimization trajectory to well-conditioned regions.
These insights further motivate us to combine our methods with post-local SGD for SOTA large-batch training.

\clearpage
\bibliography{icml2020}
\bibliographystyle{icml2020}

\clearpage
\appendix
\onecolumn

\part{Omitted proof for the convergence analysis} \label{part:convergence_proof}

\section{Nonconvex proof for Nesterov Momentum}  \label{sec:nonconvex_proof_nesterov_momentum}
One iterate of SGD with Nesterov momentum can be expressed as follows:
\begin{align} \label{eq:nesterov_minibatch_sgd}
	\begin{split}
		\xx_{t+\frac{1}{2}}     &= \xx_t + u \vv_t \,, \qquad
		\vv_{t+1}               = u \vv_t - \frac{\gamma}{BK} \sum_{k=1}^K \sum_{i \in \cI_t^k} \nabla f_i(\xx_{t + \frac{1}{2}}) \,, \qquad
		\xx_{t + 1}             = \xx_{t} + \vv_{t+1} \,,
	\end{split}
\end{align}
where $\vv_0 = 0$.
In the rest of Section~\ref{sec:nonconvex_proof_nesterov_momentum}, we use $\gg_t := \frac{1}{KB} \sum_{k=1}^K \sum_{i \in \cI_t^k} \nabla f_i(\xx_{t})$ to simplify notation.
With this simplication the iterate~\eqref{eq:nesterov_minibatch_sgd} can be expressed as follows:
\begin{align} \label{eq:simplified_nesterov_minibatch_sgd}
	\begin{split}
		\xx_{t+\frac{1}{2}}     &= \xx_t + u \vv_t \,, \qquad
		\vv_{t+1}               = u \vv_t - \gamma \gg_{t+\frac{1}{2}}\,, \qquad
		\xx_{t + 1}             = \xx_{t} + \vv_{t+1} \,,
	\end{split}
\end{align}

We follow the idea of~\citet{yu2019linear} and define an auxiliary sequence $ \bar{\yy}_t$ for~\eqref{eq:simplified_nesterov_minibatch_sgd}:
\begin{align} \label{eq:nesterov_momentum_auxiliary_sequence}
	\bar{\yy}_{t} =
	\left\{ \begin{array}{ll}
		\xx_{\frac{1}{2}} = \xx_{0}
		  & \textrm{if $t = 0$}    \\
		\frac{1}{ 1 - u } \xx_{ t + \frac{1}{2} } - \frac{ u }{ 1 - u } \xx_{ t - \frac{1}{2} }
		+ \frac{ \gamma u  }{ 1 - u }  \gg_{t - \frac{1}{2}}
		  & \textrm{if $t \geq 1$}
	\end{array} \right. \,.
\end{align}
We have the following two auxiliary lemmas:
\begin{lemma} \label{lemma:nesterov_minibatch_sgd_virtual_virtual_difference}
	Consider the sequence $\{ \bar{\yy} \}$ in~\eqref{eq:nesterov_momentum_auxiliary_sequence}
	and for all $t \geq 0$, we have $ \bar{\yy}_{t+1} - \bar{\yy}_{t} = - \frac{ \gamma }{ 1 - u } \gg_{t+\frac{1}{2}} $.
\end{lemma}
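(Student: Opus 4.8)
The plan is to verify the identity $\bar{\yy}_{t+1} - \bar{\yy}_{t} = -\frac{\gamma}{1-u}\gg_{t+\frac{1}{2}}$ by direct substitution of the definition~\eqref{eq:nesterov_momentum_auxiliary_sequence}, handling the base case $t=0$ separately from the generic case $t \geq 1$. First I would treat $t = 0$: here $\bar{\yy}_0 = \xx_0 = \xx_{\frac{1}{2}}$ (using $\vv_0 = 0$, so $\xx_{\frac{1}{2}} = \xx_0 + u\vv_0 = \xx_0$), and $\bar{\yy}_1 = \frac{1}{1-u}\xx_{\frac{3}{2}} - \frac{u}{1-u}\xx_{\frac{1}{2}} + \frac{\gamma u}{1-u}\gg_{\frac{1}{2}}$. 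I would then expand $\xx_{\frac{3}{2}} = \xx_1 + u\vv_1$ and $\xx_1 = \xx_0 + \vv_1$ with $\vv_1 = u\vv_0 - \gamma\gg_{\frac{1}{2}} = -\gamma\gg_{\frac{1}{2}}$, substitute, and collect terms to check the difference equals $-\frac{\gamma}{1-u}\gg_{\frac{1}{2}}$.

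For the generic case $t \geq 1$, I would write out both $\bar{\yy}_{t+1}$ and $\bar{\yy}_t$ from the second branch of~\eqref{eq:nesterov_momentum_auxiliary_sequence} and subtract:
\begin{talign*}
	\begin{split}
		\bar{\yy}_{t+1} - \bar{\yy}_t
		&= \frac{1}{1-u}\bigl(\xx_{t+\frac{3}{2}} - \xx_{t+\frac{1}{2}}\bigr)
		- \frac{u}{1-u}\bigl(\xx_{t+\frac{1}{2}} - \xx_{t-\frac{1}{2}}\bigr)
		+ \frac{\gamma u}{1-u}\bigl(\gg_{t+\frac{1}{2}} - \gg_{t-\frac{1}{2}}\bigr) \,.
	\end{split}
\end{talign*}
Then I would use the iterate relations to rewrite each difference of $\xx$-iterates. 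From~\eqref{eq:simplified_nesterov_minibatch_sgd}, $\xx_{t+\frac{1}{2}} = \xx_t + u\vv_t$ and $\xx_{t+\frac{3}{2}} = \xx_{t+1} + u\vv_{t+1} = \xx_t + \vv_{t+1} + u\vv_{t+1} = \xx_t + (1+u)\vv_{t+1}$, so $\xx_{t+\frac{3}{2}} - \xx_{t+\frac{1}{2}} = (1+u)\vv_{t+1} - u\vv_t$. Similarly $\xx_{t+\frac{1}{2}} - \xx_{t-\frac{1}{2}} = (1+u)\vv_t - u\vv_{t-1}$, and one also has $\vv_t - u\vv_{t-1} = -\gamma\gg_{t-\frac{1}{2}}$ from the momentum update. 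Substituting these and using $\vv_{t+1} = u\vv_t - \gamma\gg_{t+\frac{1}{2}}$ throughout, the $\vv$-terms and the $\gg_{t-\frac{1}{2}}$-terms should cancel, leaving exactly $-\frac{\gamma}{1-u}\gg_{t+\frac{1}{2}}$.

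This is essentially a bookkeeping lemma, so I do not expect a genuine conceptual obstacle; the main risk is simply algebraic — correctly tracking the half-integer indices and the interplay between the $\xx$-differences, the $\vv$-updates, and the extra $\gg$-correction term built into $\bar{\yy}_t$. The cleanest route is probably to first establish the auxiliary relation $\xx_{t+\frac{1}{2}} - \xx_{t-\frac{1}{2}} = (1+u)\vv_t - u\vv_{t-1}$ (and the degenerate version at $t=1$ using $\vv_0 = 0$) as a standalone observation, and then the cancellation becomes transparent. I would also double-check consistency at the boundary by confirming the $t \geq 1$ formula specializes correctly when $t = 1$, so that the base case and inductive case meet.
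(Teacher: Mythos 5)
Your proposal is correct and takes essentially the same route as the paper's proof: case split on $t=0$ vs.\ $t\geq 1$, expand $\bar{\yy}_{t+1}-\bar{\yy}_t$ as a combination of $\xx$-differences, substitute the Nesterov iterate relations, and cancel. The only cosmetic difference is that you reduce $\xx_{t+\frac{3}{2}}-\xx_{t+\frac{1}{2}}$ to $(1+u)\vv_{t+1}-u\vv_t$ whereas the paper writes the same quantity as $u\vv_{t+1}-\gamma\gg_{t+\frac{1}{2}}$ (equivalent via $\vv_{t+1}=u\vv_t-\gamma\gg_{t+\frac{1}{2}}$), and the cancellation then proceeds identically.
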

\begin{proof}
	For the case $t = 0$, we have
	\begin{align*}
		\begin{split}
			\bar{\yy}_{t+1} - \bar{\yy}_{t}
			&= \bar{\yy}_{1} - \bar{\yy}_0 \\
			&= \frac{1}{ 1 - u } \xx_{ \frac{3}{2} } - \frac{ u }{ 1 - u } \xx_{ \frac{1}{2} }
			+ \frac{ \gamma u  }{ 1 - u } \gg_{\frac{1}{2}}
			- \xx_{ \frac{1}{2} } \\
			&= \frac{1}{ 1 - u } ( \xx_{ \frac{3}{2} } - \xx_{ \frac{1}{2} } )
			+ \frac{ \gamma u  }{ 1 - u } \gg_{\frac{1}{2}}  \\
			&= \frac{1}{ 1 - u } ( u \vv_1 - \gamma \gg_{ \frac{1}{2}} )
			+ \frac{ \gamma u  }{ 1 - u } \gg_{\frac{1}{2}}  \\
			&= \frac{1}{ 1 - u } ( - u \gamma \gg_{\frac{1}{2}}  - \gamma \gg_{\frac{1}{2}} )
			+ \frac{ \gamma u  }{ 1 - u } \gg_{\frac{1}{2}}  \\
			& = - \frac{\gamma}{1 - u} \gg_{\frac{1}{2}}
		\end{split}
	\end{align*}

	For the case $t \geq 1$, we have
	\begin{align*}
		\begin{split}
			\bar{\yy}_{t+1} - \bar{\yy}_{t}
			&= \frac{ 1 }{ 1 - u } ( \xx_{t+\frac{3}{2}} - \xx_{t+\frac{1}{2}} ) - \frac{ u }{ 1 - u } ( \xx_{t+\frac{1}{2}} - \xx_{t-\frac{1}{2}} )
			+ \frac{ \gamma u }{ 1 - u } \left( \gg_{t+\frac{1}{2}} - \gg_{t-\frac{1}{2}} \right) \\
			&=  \frac{ 1 }{ 1 - u } ( u \vv_{t+1} - \gamma \gg_{t+\frac{1}{2}} )
			- \frac{ u }{ 1 - u } ( u \vv_{t} - \gamma \gg_{t-\frac{1}{2}} )
			+ \frac{ \gamma u }{ 1 - u } ( \gg_{t+\frac{1}{2}} - \gg_{t-\frac{1}{2}} ) \\
			&= - \frac{ \gamma }{ 1 - u } \gg_{t+\frac{1}{2}} + \frac{ u }{ 1 - u } ( \vv_{t+1} - u \vv_{t} + \gamma \gg_{t+\frac{1}{2}} )
			+ \frac{ \gamma u }{ 1 - u } ( \gg_{t-\frac{1}{2}} - \gg_{t-\frac{1}{2}} ) \\
			&= - \frac{ \gamma }{ 1 - u } \gg_{t+\frac{1}{2}} \,.
		\end{split}
	\end{align*}
\end{proof}

\begin{lemma} \label{lemma:nesterov_minibatch_sgd_virtual_real_difference}
	For all $t \geq 0$ and $\xx_{t + \frac{1}{2}}$ defined in~\eqref{eq:simplified_nesterov_minibatch_sgd}, we have
	\begin{align}
		\sum_{t=0}^{T-1} \norm{ \bar{\yy}_t - \xx_{t+\frac{1}{2}} }^2
		\leq \frac{ u^4 \gamma^2 }{ ( 1 - u )^4 } \sum_{t=0}^{T-1} \norm{ \gg_{t+\frac{1}{2}} }^2  \,.
	\end{align}
\end{lemma}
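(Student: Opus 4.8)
The plan is to reduce the discrepancy $\bar{\yy}_t - \xx_{t+\frac12}$ to a clean scalar multiple of the momentum buffer $\vv_t$, and then to control $\sum_t \norm{\vv_t}^2$ in terms of $\sum_t \norm{\gg_{t+\frac12}}^2$ by unrolling the buffer recursion. For $t=0$ there is nothing to prove, since $\bar{\yy}_0 = \xx_0 = \xx_{\frac12}$ (recall $\vv_0 = 0$). For $t\geq 1$ I would start from the definition~\eqref{eq:nesterov_momentum_auxiliary_sequence}, use $\frac{1}{1-u}-1 = \frac{u}{1-u}$ to write
\[
	\bar{\yy}_t - \xx_{t+\frac12} = \frac{u}{1-u}\bigl( \xx_{t+\frac12} - \xx_{t-\frac12} + \gamma \gg_{t-\frac12} \bigr)\,,
\]
and then simplify the bracket using the iterate relations in~\eqref{eq:simplified_nesterov_minibatch_sgd}: since $\xx_{t+\frac12} = \xx_t + u\vv_t$, $\xx_{t-\frac12} = \xx_{t-1} + u\vv_{t-1}$, $\xx_t - \xx_{t-1} = \vv_t$, and $u\vv_{t-1} = \vv_t + \gamma\gg_{t-\frac12}$ (the last from $\vv_t = u\vv_{t-1} - \gamma\gg_{t-\frac12}$), one gets $\xx_{t+\frac12} - \xx_{t-\frac12} = u\vv_t - \gamma\gg_{t-\frac12}$, so the $\gamma\gg_{t-\frac12}$ terms cancel and
\[
	\bar{\yy}_t - \xx_{t+\frac12} = \frac{u^2}{1-u}\,\vv_t \qquad \text{for all } t\geq 0\,.
\]
This identity is the crux of the argument.

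Next I would control the buffer. Unrolling $\vv_t = u\vv_{t-1} - \gamma\gg_{t-\frac12}$ from $\vv_0 = 0$ gives $\vv_t = -\gamma\sum_{j=0}^{t-1} u^{t-1-j}\gg_{j+\frac12}$; applying Cauchy--Schwarz (or Jensen's inequality) to the weighted-average structure of this sum, together with $\sum_{j=0}^{t-1}u^{t-1-j} \leq \frac{1}{1-u}$, yields
\[
	\norm{\vv_t}^2 \leq \frac{\gamma^2}{1-u}\sum_{j=0}^{t-1} u^{t-1-j}\, \norm{\gg_{j+\frac12}}^2\,.
\]
Summing over $t = 0,\dots,T-1$, exchanging the order of summation, and bounding the remaining geometric sum $\sum_{t>j} u^{t-1-j}\leq\frac{1}{1-u}$ gives $\sum_{t=0}^{T-1}\norm{\vv_t}^2 \leq \frac{\gamma^2}{(1-u)^2}\sum_{t=0}^{T-1}\norm{\gg_{t+\frac12}}^2$. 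Combining this with the identity from the first paragraph,
\[
	\sum_{t=0}^{T-1}\norm{\bar{\yy}_t - \xx_{t+\frac12}}^2 = \frac{u^4}{(1-u)^2}\sum_{t=0}^{T-1}\norm{\vv_t}^2 \leq \frac{u^4\gamma^2}{(1-u)^4}\sum_{t=0}^{T-1}\norm{\gg_{t+\frac12}}^2\,,
\]
which is exactly the claimed bound.

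The argument is entirely deterministic, so there is no expectation or conditioning to manage and no real conceptual obstacle. The only things that require care are the index bookkeeping when unrolling $\vv_t$ and when swapping the double sum, and checking that both geometric series are each bounded by $\frac{1}{1-u}$. If anything, the ``hard part'' is spotting the cancellation in Step~1 that makes the $\gamma\gg_{t-\frac12}$ terms vanish and leaves the compact factor $\frac{u^2}{1-u}\vv_t$, which is what ultimately produces the $u^4$ and $(1-u)^{-4}$ in the final constant.
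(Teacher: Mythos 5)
Your proof is correct and follows essentially the same route as the paper's: derive the identity $\bar{\yy}_t - \xx_{t+\frac12} = \frac{u^2}{1-u}\vv_t$ by the same algebraic cancellation, unroll $\vv_t$ into the weighted sum of past gradients, apply Jensen/Cauchy--Schwarz with the geometric weights, swap the double sum, and bound each geometric series by $\frac{1}{1-u}$. There is no substantive difference from the argument in the paper.
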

\begin{proof}
	Following the definition for $ t = 0 $, we have
	\begin{align*}
		\begin{split}
			\bar{\yy}_{0} - \xx_{\frac{1}{2}} = 0
		\end{split}
	\end{align*}

	For $ t \ge 1 $, we have
	\begin{align*}
		\begin{split}
			\bar{\yy}_{t} - \xx_{t+\frac{1}{2}}
			& = \frac{u}{1-u} ( \xx_{t+\frac{1}{2}} - \xx_{t-\frac{1}{2}} ) + \frac{ \gamma u }{ 1 - u } \gg_{t-\frac{1}{2}} \\
			& = \frac{u}{1-u} ( u \vv_{t} - \gamma \gg_{t-\frac{1}{2}} ) + \frac{ \gamma u }{ 1 - u } \gg_{t-\frac{1}{2}} \\
			& = \frac{ u^2 }{ 1 - u } \vv_{t} \,.
		\end{split}
	\end{align*}

	Letting $ s_t := \sum_{i=0}^{t-1} u^{t - i - 1} =  \frac{ 1 - u^t }{ 1 - u }$ and fixing $T \ge 2$, we have
	\begin{align*}
		\begin{split}
			\sum_{t=0}^{T-1} \norm{ \bar{\yy}_t - \xx_{t+\frac{1}{2}} }^2
			= \sum_{t=1}^{T-1} \norm{ \bar{\yy}_t - \xx_{t+\frac{1}{2}} }^2
			= \frac{u^4}{(1 - u)^2} \sum_{ t = 1 }^{ T - 1 } \norm{ \vv_{t} }^2
		\end{split}
	\end{align*}

	Further we bound $ \sum_{ t = 1 }^{ T - 1 } \norm{ \vv_{t} }^2 $ as follows:
	\begin{align*}
		\begin{split}
			\sum_{ t = 1 }^{ T - 1 } \norm{ \vv_{t} }^2
			&= \gamma^2 \sum_{t = 1}^{ T-1 } \norm{ \sum_{ i = 1 }^{ t } u^{ t - i } \gg_{ i - \frac{1}{2}} }^2
			= \gamma^2 \sum_{t = 1}^{ T-1 } s_t^2 \norm{ \sum_{ i = 0 }^{ t - 1 } \frac{u^{ t - i - 1 }}{ s_t } \gg_{ i + \frac{1}{2}} }^2 \\
			& \leq \gamma^2 \sum_{t = 1}^{ T-1 } s_t^2 \sum_{ i = 0 }^{ t - 1 } \frac{u^{ t - i - 1 }}{ s_t } \norm{ \gg_{ i + \frac{1}{2}} }^2
			= \gamma^2 \sum_{t = 1}^{ T-1 } s_t \sum_{ i = 0 }^{ t - 1 } u^{ t - i - 1 } \norm{ \gg_{ i + \frac{1}{2}} }^2 \\
			&\leq \frac{ \gamma^2 }{ 1 - u } \sum_{t = 1}^{ T - 1 } \sum_{ i = 0 }^{ t - 1 } u^{ t - i - 1 } \norm{ \gg_{ i + \frac{1}{2}} }^2
			= \frac{ \gamma^2 }{ 1 - u } \sum_{i = 0}^{ T - 2 } \norm{ \gg_{ i + \frac{1}{2}} }^2 \sum_{ t = i + 1 }^{ T - 1 } u^{ t - i - 1 } \\
			& \leq \frac{ \gamma^2 }{ 1 - u } \sum_{ i = 0 }^{ T - 1 } \norm{ \gg_{ i + \frac{1}{2}} }^2 \sum_{t=0}^\infty u^{t}
			= \frac{ \gamma^2 }{ (1 - u)^2 } \sum_{ t = 0 }^{ T - 1 } \norm{ \gg_{ t + \frac{1}{2}} }^2 \,,
		\end{split}
	\end{align*}
	which implies $\sum_{t=0}^{T-1} \norm{ \bar{\yy}_t - \xx_{t+\frac{1}{2}} }^2
		\leq \frac{ u^4 \gamma^2 }{ ( 1 - u )^4 } \sum_{t=0}^{T-1} \norm{ \gg_{t+\frac{1}{2}} }^2 $.
\end{proof}

\subsection{Main proof of Theorem~\ref{thm:non_convex_convergence_nesterov_minibatch}}
\begin{theorem}[Non-convex convergence of mini-batch SGD with Nesterov momentum]
	Under Assumption~\ref{assumption:unbiase},~\ref{assumption:boundedvariance}, and~\ref{assumption:smoothness},
	for the update rule of mini-batch SGD with Nesterov momentum~\eqref{eq:simplified_nesterov_minibatch_sgd}.
	we can show that,
	under the condition of $\gamma \leq \frac{ 2 ( 1 - u )^2 }{ L ( u^3 + 1 ) }$,
	\begin{align*}
		\begin{split}
			\Eb{ \frac{1}{T} \sum_{t=0}^{T-1} \norm{ \nabla f ( \xx_{t + \frac{1}{2}} ) }^2 }
			&\leq
			\frac{1}{ 1 - \frac{ L \gamma ( u^3 + 1 ) }{ 2 ( 1 - u)^2 } }
			\left(
			\frac{1}{T \frac{\gamma}{1-u} } \Eb{ f ( \xx_{0} ) - f^\star }
			+ \frac{ \gamma L }{ 2 (1 - u)^2 } \frac{ \sigma^2 }{ BK }
			\right) \,.
		\end{split}
	\end{align*}
\end{theorem}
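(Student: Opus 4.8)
The plan is to use the standard descent-lemma argument, but applied to the auxiliary sequence $\bar{\yy}_t$ rather than the iterates $\xx_{t+\frac12}$ directly. By Lemma~\ref{lemma:nesterov_minibatch_sgd_virtual_virtual_difference} we have the clean one-step recursion $\bar{\yy}_{t+1} = \bar{\yy}_t - \frac{\gamma}{1-u}\gg_{t+\frac12}$, so $\bar{\yy}_t$ evolves like plain SGD with effective stepsize $\tilde\gamma := \frac{\gamma}{1-u}$ and stochastic gradient $\gg_{t+\frac12}$ evaluated at the \emph{shifted} point $\xx_{t+\frac12}$. First I would invoke $L$-smoothness of $f$ (Assumption~\ref{assumption:smoothness}, which implies $f$ itself is $L$-smooth) to write the descent inequality
\begin{align*}
	f(\bar{\yy}_{t+1}) \leq f(\bar{\yy}_t) - \tilde\gamma \lin{\nabla f(\bar{\yy}_t), \gg_{t+\frac12}} + \frac{L\tilde\gamma^2}{2}\norm{\gg_{t+\frac12}}^2 \,.
\end{align*}
Then I would take conditional expectation over the sampling at step $t$; by Assumption~\ref{assumption:unbiase} the stochastic gradient is unbiased, $\E[\gg_{t+\frac12}] = \nabla f(\xx_{t+\frac12})$, and by Assumption~\ref{assumption:boundedvariance} (with $KB$ samples) $\E\norm{\gg_{t+\frac12}}^2 \leq \norm{\nabla f(\xx_{t+\frac12})}^2 + \frac{\sigma^2}{KB}$.

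The crux is handling the cross term $\lin{\nabla f(\bar{\yy}_t), \nabla f(\xx_{t+\frac12})}$, since $\bar{\yy}_t \neq \xx_{t+\frac12}$. I would split it via the polarization/Young identity $\lin{\aa,\bb} = \frac12\norm{\aa}^2 + \frac12\norm{\bb}^2 - \frac12\norm{\aa-\bb}^2$, giving
\begin{align*}
	-\lin{\nabla f(\bar{\yy}_t), \nabla f(\xx_{t+\frac12})} = -\tfrac12\norm{\nabla f(\bar{\yy}_t)}^2 - \tfrac12\norm{\nabla f(\xx_{t+\frac12})}^2 + \tfrac12\norm{\nabla f(\bar{\yy}_t) - \nabla f(\xx_{t+\frac12})}^2 \,,
\end{align*}
and then bound the last term by $\frac{L^2}{2}\norm{\bar{\yy}_t - \xx_{t+\frac12}}^2$ using smoothness. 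The $-\frac12\norm{\nabla f(\xx_{t+\frac12})}^2$ term is exactly what we want to accumulate; the $-\frac12\norm{\nabla f(\bar{\yy}_t)}^2$ term we can simply discard (it is nonpositive), and the $\frac{L\tilde\gamma^2}{2}\norm{\nabla f(\xx_{t+\frac12})}^2$ term from the descent lemma combines with it provided $\tilde\gamma$ is small enough.

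Next I would sum over $t = 0, \dots, T-1$, telescoping the left side to $f(\bar{\yy}_0) - \E f(\bar{\yy}_T) \leq f(\xx_0) - f^\star$ (using $\bar{\yy}_0 = \xx_0$ and $f(\bar{\yy}_T)\geq f^\star$). The accumulated iterate-displacement term $\sum_t \norm{\bar{\yy}_t - \xx_{t+\frac12}}^2$ is precisely what Lemma~\ref{lemma:nesterov_minibatch_sgd_virtual_real_difference} controls: it is bounded by $\frac{u^4\gamma^2}{(1-u)^4}\sum_t\norm{\gg_{t+\frac12}}^2$, and after taking expectation $\E\norm{\gg_{t+\frac12}}^2 \leq \norm{\nabla f(\xx_{t+\frac12})}^2 + \frac{\sigma^2}{KB}$ again. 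So this term contributes a further $\norm{\nabla f(\xx_{t+\frac12})}^2$ piece (with coefficient $\sim L^2\tilde\gamma \cdot \frac{u^4\gamma^2}{(1-u)^4}$) that must be absorbed into the leading $\sum_t \frac{\tilde\gamma}{2}\norm{\nabla f(\xx_{t+\frac12})}^2$, plus an extra $\sigma^2$ noise term. Collecting coefficients, the condition under which the $\norm{\nabla f(\xx_{t+\frac12})}^2$ terms have a net positive coefficient is exactly where the stepsize restriction $\gamma \leq \frac{2(1-u)^2}{L(u^3+1)}$ comes from (the "$u^3$" arises from combining the $L\tilde\gamma^2$ term and the Lemma~\ref{lemma:nesterov_minibatch_sgd_virtual_real_difference} term, after using $\tilde\gamma = \gamma/(1-u)$), and dividing through by the residual factor $1 - \frac{L\gamma(u^3+1)}{2(1-u)^2}$ yields the claimed bound.

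\textbf{Main obstacle.} The bookkeeping of constants is the delicate part: one must track the coefficient of $\sum_t\norm{\nabla f(\xx_{t+\frac12})}^2$ coming from three sources (the cross-term split, the $\frac{L\tilde\gamma^2}{2}$ variance-reduction term, and the Lemma~\ref{lemma:nesterov_minibatch_sgd_virtual_real_difference} displacement term) and verify it stays bounded away from zero precisely under $\gamma \leq \frac{2(1-u)^2}{L(u^3+1)}$, so that the division producing the $\frac{1}{1 - L\gamma(u^3+1)/(2(1-u)^2)}$ prefactor is legitimate. Everything else — descent lemma, unbiasedness, variance bound, telescoping — is routine once the auxiliary sequence is in place.
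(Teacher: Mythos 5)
Your high-level plan is exactly the paper's: set up the virtual sequence $\bar{\yy}_t$, use Lemma~\ref{lemma:nesterov_minibatch_sgd_virtual_virtual_difference} to get a clean one-step recursion $\bar{\yy}_{t+1}-\bar{\yy}_t = -\frac{\gamma}{1-u}\gg_{t+\frac12}$, apply the descent lemma, sum, and absorb the displacement term $\sum_t\norm{\bar{\yy}_t - \xx_{t+\frac12}}^2$ via Lemma~\ref{lemma:nesterov_minibatch_sgd_virtual_real_difference}. However, the way you handle the cross term does not reproduce the claimed constants or stepsize condition, and the bookkeeping you wave off as ``delicate'' is exactly where your route diverges from the stated result. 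You apply the unweighted polarization identity $\lin{\aa,\bb} = \frac12\norm{\aa}^2 + \frac12\norm{\bb}^2 - \frac12\norm{\aa-\bb}^2$ to $\lin{\nabla f(\bar{\yy}_t), \nabla f(\xx_{t+\frac12})}$ and discard $-\frac12\norm{\nabla f(\bar{\yy}_t)}^2$. Two concrete problems follow. First, the useful descent term has coefficient only $-\frac{\gamma}{2(1-u)}$ on $\norm{\nabla f(\xx_{t+\frac12})}^2$, which after telescoping produces a leading term $\frac{2}{T\gamma/(1-u)}\Eb{f(\xx_0)-f^\star}$ --- twice what the theorem claims. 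Second, with your split the displacement contribution carries the full $L^2$ from smoothness: $\frac{\gamma L^2}{2(1-u)}\cdot\frac{u^4\gamma^2}{(1-u)^4}\norm{\gg_{t+\frac12}}^2 = \frac{L^2u^4\gamma^3}{2(1-u)^5}\norm{\gg_{t+\frac12}}^2$, which is \emph{quadratic} in $L\gamma$; it cannot be merged with the $\frac{L\gamma^2}{2(1-u)^2}$ variance term into the linear-in-$L\gamma$ expression $\frac{L\gamma(u^3+1)}{2(1-u)^2}$, so the stepsize restriction you would actually obtain is a quadratic inequality in $\gamma$, not $\gamma \leq \frac{2(1-u)^2}{L(u^3+1)}$.

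The paper's proof avoids both issues by treating the cross term differently. It first adds and subtracts $\nabla f(\xx_{t+\frac12})$ inside the inner product so the main term retains its full coefficient $-\frac{\gamma}{1-u}\norm{\nabla f(\xx_{t+\frac12})}^2$ with no halving, and then bounds the residual $-\frac{\gamma}{1-u}\lin{\nabla f(\bar{\yy}_t)-\nabla f(\xx_{t+\frac12}),\nabla f(\xx_{t+\frac12})}$ with a \emph{weighted} Young inequality chosen so that the coefficient of $\norm{\nabla f(\bar{\yy}_t)-\nabla f(\xx_{t+\frac12})}^2$ is $\frac{1-u}{2Lu^3}$. The $1/L$ built into this weight is decisive: it cancels one power of $L$ from $\norm{\nabla f(\bar{\yy}_t)-\nabla f(\xx_{t+\frac12})}^2 \leq L^2\norm{\bar{\yy}_t - \xx_{t+\frac12}}^2$, so after Lemma~\ref{lemma:nesterov_minibatch_sgd_virtual_real_difference} the displacement contributes $\frac{Lu\gamma^2}{2(1-u)^3}\norm{\gg_{t+\frac12}}^2$ --- the same order in $L\gamma$ as the variance term --- and the three pieces collapse to $\frac{\gamma}{1-u}\bigl(-1 + \frac{L\gamma(u^3+1)}{2(1-u)^2}\bigr)$, yielding precisely the stated prefactor and stepsize bound. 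Your route would still give a valid $O(1/T + \sigma/\sqrt{KBT})$-type bound after deriving its own stepsize restriction, but with strictly worse constants and a different condition, so it does not prove the theorem as written.
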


\begin{proof}

	From the standard smoothness condition we have, $\forall t \ge 0$,
	\begin{align*}
		\begin{split}
			\Eb{ f ( \bar{\yy}_{t+1} ) - f ( \bar{\yy}_{t} ) }
			&\leq \Eb{ \langle \nabla f ( \bar{\yy}_t ) ,\; \bar{\yy}_{t+1} - \bar{\yy}_t \rangle + \frac{L}{2} \norm{ \bar{\yy}_{t+1} - \bar{\yy}_{t} }^2 } \\
			&\overset{(a)}{=} \Eb{ -\frac{ \gamma }{ 1 - u }
				\left\langle \nabla f ( \bar{\yy}_t ) \,, \gg_{t+\frac{1}{2}} \right\rangle
				+ \frac{L}{2} \norm{ \frac{ \gamma }{ 1 - u } \gg_{t+\frac{1}{2}} }^2 }  \\
			&= \Eb{ -\frac{ \gamma }{ 1 - u }
				\left\langle \nabla f ( \xx_{t + \frac{1}{2}} ) + \nabla f ( \bar{\yy}_t ) - \nabla f ( \xx_{t + \frac{1}{2}} ) \,, \gg_{t+\frac{1}{2}} \right\rangle
				+ \frac{L}{2} \norm{ \frac{ \gamma }{ 1 - u } \gg_{t+\frac{1}{2}} }^2 }  \\
			&= \Eb{ -\frac{ \gamma }{ 1 - u } \norm{ \nabla f ( \xx_{t + \frac{1}{2}} ) }^2
				- \frac{ \gamma }{ 1 - u } \left\langle \nabla f ( \bar{\yy}_t) - \nabla f ( \xx_{t+\frac{1}{2}} ) \,, \nabla f ( \xx_{t+\frac{1}{2}} ) \right\rangle
				+ \frac{L}{2} \norm{ \frac{ \gamma }{ 1 - u } \gg_{t+\frac{1}{2}} }^2 } \,,
		\end{split}
	\end{align*}
	where we use Lemma~\ref{lemma:nesterov_minibatch_sgd_virtual_virtual_difference} for $(a)$.

	We can note that
	\begin{align*}
		\begin{split}
			&- \frac{ \gamma }{ 1 - u } \left\langle \nabla f ( \bar{\yy}_t) - \nabla f ( \xx_{t+\frac{1}{2}} ) \,, \nabla f ( \xx_{t+\frac{1}{2}} ) \right\rangle \\
			&\qquad = \left\langle - \frac{ \sqrt{1 - u} }{\sqrt{L} u^{3/2} } \left( \nabla f ( \bar{\yy}_t) - \nabla f ( \xx_{t+\frac{1}{2}} ) \right) \,,
			\frac{ \gamma \sqrt{L} u^{3/2} }{ (1 - u)^{3/2} } \nabla f ( \xx_{t+\frac{1}{2}} ) \right\rangle \\
			&\qquad \overset{(b)}{\leq} \frac{ 1 - u }{2 L u^{3} } \norm{ \nabla f ( \bar{\yy}_t) - \nabla f ( \xx_{t+\frac{1}{2}} ) }^2 +
			\frac{ \gamma^2 L u^{3} }{ 2 (1 - u)^{3} } \norm{ \nabla f ( \xx_{t+\frac{1}{2}} ) }^2 \,,
		\end{split}
	\end{align*}
	where $(b)$ follows the basic inequality $\langle \aa, \bb \rangle \leq \frac{1}{2} \norm{\aa}^2 + \frac{1}{2} \norm{\bb}^2$.

	Thus, we have
	\begin{align*}
		\begin{split}
			&\Eb{ f ( \bar{\yy}_{t+1} ) - f ( \bar{\yy}_{t} ) } \\
			&\leq \Eb{
				-\frac{ \gamma }{ 1 - u } \norm{ \nabla f ( \xx_{t + \frac{1}{2}} ) }^2 + \frac{ 1 - u }{2 L u^{3} } \norm{ \nabla f ( \bar{\yy}_t) - \nabla f ( \xx_{t+\frac{1}{2}} ) }^2
				+ \frac{ \gamma^2 L u^3 }{ 2 ( 1 - u )^3 } \norm{ \nabla f ( \xx_{t+\frac{1}{2}} ) }^2
				+ \frac{ \gamma^2 L }{ 2 (1 - u)^2 } \norm{ \gg_{t+\frac{1}{2}} }^2 } \\
			&\leq \Eb{
				-\frac{ \gamma }{ 1 - u } \norm{ \nabla f ( \xx_{t + \frac{1}{2}} ) }^2 + \frac{ ( 1 - u ) L }{ 2 u^3 } \norm{ \bar{\yy}_t - \xx_{t+\frac{1}{2}}}^2
				+ \frac{ \gamma^2 L u^3 }{ 2 ( 1 - u )^3 } \norm{ \nabla f ( \xx_{t+\frac{1}{2}} ) }^2
				+ \frac{ \gamma^2 L }{ 2 (1 - u)^2 } \norm{ \gg_{t+\frac{1}{2}} }^2 } \\
			&= \Eb{ ( -\frac{ \gamma }{ 1 - u } + \frac{ \gamma^2 L u^3 }{ 2 ( 1 - u )^3 } ) \norm{ \nabla f ( \xx_{t + \frac{1}{2}} ) }^2 + \frac{ ( 1 - u ) L }{ 2 u^3 } \norm{ \bar{\yy}_t - \xx_{t+\frac{1}{2}}}^2
				+ \frac{ \gamma^2 L }{ 2 (1 - u)^2 } \norm{ \gg_{t+\frac{1}{2}} }^2 } \,.
		\end{split}
	\end{align*}

	Taking sum over t and averaging by $ \frac{1}{T} $ yields:
	\begin{align*}
		\begin{split}
			&\frac{1}{T} \sum_{t=0}^{T-1} \Eb{ f ( \bar{\yy}_{t+1} ) - f( \bar{\yy}_{t} ) }
			= \frac{1}{T} \Eb{ f ( \bar{\yy}_{T} ) - f( \bar{\yy}_{0} ) }  \\
			&\leq \Eb{ \left( -\frac{ \gamma }{ 1 - u } + \frac{ L \gamma^2 u^3 }{ 2 ( 1 - u )^3 } \right) \frac{1}{T} \sum_{t=0}^{T-1} \norm{ \nabla f ( \xx_{t + \frac{1}{2}} ) }^2 + \frac{ ( 1 - u ) L }{ 2 u^3 } \frac{1}{T} \sum_{t=0}^{T-1} \norm{ \bar{\yy}_t - \xx_{t+\frac{1}{2}}}^2
				+ \frac{ \gamma^2 L }{ 2 (1 - u)^2 } \frac{1}{T} \sum_{t=0}^{T-1} \norm{ \gg_{t+\frac{1}{2}} }^2 } \\
			&\overset{(c)}{\leq} \left( -\frac{ \gamma }{ 1 - u }
			+ \frac{ L \gamma^2 u^3 }{ 2 ( 1 - u )^3 } \right) \frac{1}{T} \sum_{t=0}^{T-1} \norm{ \nabla f ( \xx_{t + \frac{1}{2}} ) }^2
			+ \Eb{ \frac{ ( 1 - u ) L \gamma^2 }{ 2 u^3 } \frac{u^4}{(1-u)^4} \frac{1}{T} \sum_{t=0}^{T-1} \norm{ \gg_{t+\frac{1}{2}} }^2 } \\
			& \qquad + \Eb{ \frac{ \gamma^2 L }{ 2 (1 - u)^2 } \frac{1}{T} \sum_{t=0}^{T-1} \norm{ \gg_{t+\frac{1}{2}} }^2 } \\
			& \overset{(d)}{\leq} \left( -\frac{ \gamma }{ 1 - u } + \frac{ L \gamma^2 u^3 }{ 2 ( 1 - u )^3 } \right)
			\frac{1}{T} \sum_{t=0}^{T-1} \norm{ \nabla f ( \xx_{t + \frac{1}{2}} ) }^2
			+ \left( \frac{ \gamma^2 L }{ 2 (1 - u)^2 } + \frac{ L u \gamma^2 }{ 2 ( 1 - u )^3 } \right)
			\left( \frac{1}{T} \sum_{t=0}^{T-1} \norm{ \nabla f ( \xx_{t + \frac{1}{2}} ) }^2 + \frac{ \sigma^2 }{ BK } \right) \\
			& = \left( -\frac{ \gamma }{ 1 - u } + \frac{ L \gamma^2 u^3 }{ 2 ( 1 - u )^3 } + \frac{ \gamma^2 L }{ 2 (1 - u)^2 } + \frac{ L u \gamma^2 }{ 2 ( 1 - u )^3 } \right)
			\frac{1}{T} \sum_{t=0}^{T-1} \norm{ \nabla f ( \xx_{t + \frac{1}{2}} ) }^2
			+ \left( \frac{ \gamma^2 L }{ 2 (1 - u)^2 } + \frac{ L u \gamma^2 }{ 2 ( 1 - u )^3 } \right) \frac{ \sigma^2 }{ BK } \,,
		\end{split}
	\end{align*}
	where the inequality $(c)$ comes from Lemma~\ref{lemma:nesterov_minibatch_sgd_virtual_real_difference}.
	For (d), by using $ \Eb{ \norm{X}^2 } = \var{X} + \norm{ \Eb{X} }^2 $ and the fact
	that $\var{\sum_i X_i} = \sum_i \var{X_i}$ if $X_i$'s are independent , we get
	\begin{align*}
		\begin{split}
			\Eb{ \norm{\gg_{ t + \frac{1}{2} } }^2 }
			& = \Eb{ \norm{ \frac{1}{BK} \sum_{k=1}^K \sum_{i \in \cI_t^k} \gg_{ t + \frac{1}{2}, i } }^2 } \\
			& = \frac{1}{B^2 K^2} \sum_{k=1}^K \sum_{i \in \cI_t^k} \var{ \gg_{t + \frac{1}{2}, i} }
			+ \norm{ \Eb{ \frac{1}{BK} \sum_{k=1}^K \sum_{i \in \cI_t^k} \gg_{t + \frac{1}{2}, i} } }^2_2
			\leq \frac{1}{BK} \sigma^2 + \norm{ \nabla f (\xx_{t + \frac{1}{2}}) }^2 \,.
		\end{split}
	\end{align*}

	Note that the coefficient of the gradient norm of
	$
		-\frac{ \gamma }{ 1 - u } + \frac{ L \gamma^2 u^3 }{ 2 ( 1 - u )^3 } + \frac{ \gamma^2 L }{ 2 (1 - u)^2 } + \frac{ L u \gamma^2 }{ 2 ( 1 - u )^3 }
	$
	can be simplified as
	\begin{align*}
		\begin{split}
			\frac{\gamma}{1-u} \left( -1 + \frac{ L \gamma u^3 }{ 2 ( 1 - u )^2 } + \frac{ L \gamma }{ 2 ( 1 - u ) } + \frac{ L u \gamma }{ 2 ( 1 - u )^2 } \right)
			= \frac{\gamma}{1-u} \left( -1 + \frac{ L \gamma ( u^3 + 1 ) }{ 2 ( 1 - u)^2 } \right) \,.
		\end{split}
	\end{align*}

	By rearranging, we have
	\begin{align*}
		\begin{split}
			\frac{1}{T} \sum_{t=0}^{T-1} \norm{ \nabla f ( \xx_{t + \frac{1}{2}} ) }^2
			&\leq \frac{1}{T \frac{\gamma}{1-u} \left( 1 - \frac{ L \gamma ( u^3 + 1 ) }{ 2 ( 1 - u)^2 } \right) } \Eb{ f ( \bar{\yy}_{0} ) - f( \bar{\yy}_{T} ) }
			+ \frac{ \left( \frac{ \gamma^2 L }{ 2 (1 - u)^2 } + \frac{ L u \gamma^2 }{ 2 ( 1 - u )^3 } \right) }{
				\frac{\gamma}{1-u} \left( 1 - \frac{ L \gamma ( u^3 + 1 ) }{ 2 ( 1 - u)^2 } \right) }
			\frac{ \sigma^2 }{ BK } \\
			&\leq \frac{1}{T \frac{\gamma}{1-u} \left( 1 - \frac{ L \gamma ( u^3 + 1 ) }{ 2 ( 1 - u)^2 } \right) } \Eb{ f ( \xx_{0} ) - f^\star }
			+ \frac{ \left( \frac{ \gamma L }{ 2 (1 - u) } + \frac{ L u \gamma }{ 2 ( 1 - u )^2 } \right) }{ \left( 1 - \frac{ L \gamma ( u^3 + 1 ) }{ 2 ( 1 - u)^2 } \right) }
			\frac{ \sigma^2 }{ BK } \\
			&\leq
			\frac{1}{ 1 - \frac{ L \gamma ( u^3 + 1 ) }{ 2 ( 1 - u)^2 } }
			\left(
			\frac{1}{T \frac{\gamma}{1-u} } \Eb{ f ( \xx_{0} ) - f^\star }
			+ \frac{ \gamma L }{ 2 (1 - u)^2 } \frac{ \sigma^2 }{ BK }
			\right) \,,
		\end{split}
	\end{align*}
	where we have to make the overall coefficient of the RHS positive,
	i.e., $\gamma \leq \frac{ 2 ( 1 - u )^2 }{ L ( u^3 + 1 ) }$.
\end{proof}

\begin{lemma}[Lemma 13 of~\citet{stich2019error}] \label{lemma:bound}
	For every non-negative sequence $\{ r_t \}_{t \geq 0}$ and any parameters $d \geq 0, c \geq 0, T \geq 0$,
	there exists a constant $\gamma \leq \frac{1}{d}$, such that for constant stepsizes $\{  \gamma_t = \gamma \}_{t \geq 0}$ it holds
	\begin{align*}
		\textstyle
		\Psi_T := \frac{1}{T+1} \sum_{t=0}^T \left( \frac{r_t}{\gamma_t} - \frac{r_{t+1}}{\gamma_t} + c \gamma_t \right)
		\leq \frac{d r_0}{\gamma (T+1)} + c \gamma \,.
	\end{align*}
\end{lemma}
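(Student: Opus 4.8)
The plan is to exploit the telescoping structure of the sum and then optimize over the single free parameter $\gamma$. First I would fix the constant stepsize $\gamma_t \equiv \gamma$ and note that the first two terms of each summand collapse: $\sum_{t=0}^{T}\bigl(\tfrac{r_t}{\gamma} - \tfrac{r_{t+1}}{\gamma}\bigr) = \tfrac{r_0 - r_{T+1}}{\gamma} \le \tfrac{r_0}{\gamma}$, using only $r_{T+1}\ge 0$. Dividing by $T+1$ and adding the remaining contribution $\tfrac{1}{T+1}\sum_{t=0}^{T} c\gamma = c\gamma$ gives $\Psi_T \le \tfrac{r_0}{\gamma(T+1)} + c\gamma$, which is the claimed inequality (the constant $d$ in the displayed numerator merely records the smoothness constant through the admissibility constraint $\gamma \le 1/d$ and plays no role in this step). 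Note that this part uses nothing about $\{r_t\}$ beyond non-negativity.

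Second, I would choose $\gamma$ so as to make the right-hand side small subject to $\gamma \le 1/d$. The map $g(\gamma) := \tfrac{r_0}{\gamma(T+1)} + c\gamma$ is minimized over $\gamma>0$ at $\gamma^\star = \sqrt{r_0/(c(T+1))}$ with value $2\sqrt{c r_0/(T+1)}$, so I would set $\gamma := \min\bigl\{\tfrac1d,\ \sqrt{r_0/(c(T+1))}\bigr\}$. The key elementary estimate is $\tfrac1\gamma = \max\bigl\{d,\ \sqrt{c(T+1)/r_0}\bigr\} \le d + \sqrt{c(T+1)/r_0}$; substituting bounds $\tfrac{r_0}{\gamma(T+1)} \le \tfrac{d r_0}{T+1} + \sqrt{\tfrac{c r_0}{T+1}}$, and combined with $c\gamma \le \sqrt{c r_0/(T+1)}$ this yields the ``optimally tuned'' rate $\Psi_T \le \tfrac{d r_0}{T+1} + 2\sqrt{\tfrac{c r_0}{T+1}}$, which is exactly the form invoked in Theorem~\ref{thm:non_convex_convergence_nesterov_minibatch} and Corollary~\ref{corollary:critical_minibatch_size_for_extragradient}. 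The degenerate cases ($c=0$, $d=0$, or $T=0$) are read off directly from the $\min$/$\max$.

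I do not expect a genuine obstacle: this is the standard stepsize-tuning lemma and the argument is essentially two lines plus the choice of $\gamma$. The one point I would be careful about is the interplay with the constraint $\gamma \le 1/d$: it binds precisely in the ``small $T$'' regime, where the $\tfrac{d r_0}{T+1}$ term dominates the bound, and is slack in the ``large $T$'' regime, where the $\sqrt{c r_0/(T+1)}$ term dominates; writing $\gamma$ as a single $\min$ and bounding $\tfrac1\gamma$ by the sum (rather than the max) of the two candidates is the bookkeeping trick that replaces an explicit two-case split by one line. I would also remark that the whole estimate is deterministic in $\{r_t\}$, so in the applications one simply takes $r_t := \mathbb{E}[f(\bar{\yy}_t) - f^\star]$ (or the analogous potential for \algopt), identifies $d$ and $c$ with the constants already produced by the per-step descent inequalities derived earlier, and reads off the critical mini-batch size by balancing the two terms of the tuned bound.
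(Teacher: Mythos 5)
Your proof is correct. The paper does not actually reproduce a proof for this lemma---it is cited directly as Lemma~13 of \citet{stich2019error}---so you are supplying an argument rather than paralleling one, but the technique you use (telescope the $r_t/\gamma - r_{t+1}/\gamma$ terms to get $\Psi_T \le \tfrac{r_0}{\gamma(T+1)} + c\gamma$, then set $\gamma = \min\{1/d,\sqrt{r_0/(c(T+1))}\}$) is precisely what the paper carries out by hand in the proof of Corollary~\ref{corollary:critical_minibatch_size_for_nesterov_momentum}, where the two cases $\sqrt{r_0KB/(L\sigma^2T)} \lessgtr 1/L$ are split explicitly; your $\max\{\aa,\bb\}\le\aa+\bb$ device is just a one-line packaging of that same case analysis. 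One subtlety you correctly flagged: as literally displayed, the conclusion $\Psi_T \le \frac{dr_0}{\gamma(T+1)}+c\gamma$ only follows from the exact identity $\Psi_T = \frac{r_0-r_{T+1}}{\gamma(T+1)}+c\gamma$ when $d\ge 1$ (it reduces to $(1-d)r_0\le r_{T+1}$); the clean intermediate bound is $\Psi_T \le \frac{r_0}{\gamma(T+1)}+c\gamma$ for \emph{any} $\gamma>0$, with $d$ entering only through the admissibility constraint $\gamma\le 1/d$ and the final tuned rate $\frac{dr_0}{T+1}+2\sqrt{cr_0/(T+1)}$, and that tuned rate is what Theorem~\ref{thm:non_convex_convergence_nesterov_minibatch} and Corollary~\ref{corollary:critical_minibatch_size_for_extragradient} actually invoke.
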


\begin{corollary} \label{corollary:critical_minibatch_size_for_nesterov_momentum}
	Consider the Theorem~\ref{thm:non_convex_convergence_nesterov_minibatch}
	and Lemma~\ref{lemma:bound}
	with $\gamma \leq \frac{ 2 ( 1 - u )^2 }{ L ( u^3 + 1 ) }$,
	we can rewrite the convergence rate of Theorem~\ref{thm:non_convex_convergence_nesterov_minibatch} as
	\begin{talign*}
		\begin{split}
			\E{ \frac{1}{T} \sum_{t=0}^{T-1} \norm{ \nabla f ( \xx_{t + \frac{1}{2}} ) }^2 }
			= \cO \left(
			\frac{L r_0 (u^3 + 1)}{T (1-u)} + \sqrt{ \frac{2 L r_0 \sigma^2}{KBT (1 - u)} }
			\right) \,.
		\end{split}
	\end{talign*}
\end{corollary}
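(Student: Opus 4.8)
The plan is to plug the bound of Theorem~\ref{thm:non_convex_convergence_nesterov_minibatch} directly into the abstract stepsize-tuning Lemma~\ref{lemma:bound}. First I would shrink the admissible stepsize range by a factor of two, i.e.\ require $\gamma \leq \frac{(1-u)^2}{L(u^3+1)}$, so that $\frac{L\gamma(u^3+1)}{2(1-u)^2}\leq\frac{1}{2}$ and therefore $\bigl(1-\frac{L\gamma(u^3+1)}{2(1-u)^2}\bigr)^{-1}\leq 2$. Absorbing this harmless constant into the $\cO(\cdot)$, the conclusion of Theorem~\ref{thm:non_convex_convergence_nesterov_minibatch} becomes
\[
	\Eb{ \tfrac{1}{T}\sum_{t=0}^{T-1}\norm{\nabla f(\xx_{t+\frac{1}{2}})}^2 } \;\leq\; \frac{2(1-u)\,r_0}{\gamma T} + \frac{\gamma L \sigma^2}{(1-u)^2 BK}\,, \qquad r_0 := f(\xx_0)-f^\star,
\]
uniformly over $\gamma\leq\frac{(1-u)^2}{L(u^3+1)}$.

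Next I would match this with the template of Lemma~\ref{lemma:bound} (Lemma~13 of \citet{stich2019error}): take $d := \frac{L(u^3+1)}{(1-u)^2}$ (so that $\gamma\leq 1/d$ is exactly the stepsize restriction above), $c := \frac{L\sigma^2}{(1-u)^2 BK}$, and, to satisfy the telescoping hypothesis, let $r_t := 2(1-u)\,\Eb{f(\bar{\yy}_t)-f^\star}$ using the auxiliary sequence $\bar{\yy}_t$ from the proof of Theorem~\ref{thm:non_convex_convergence_nesterov_minibatch}; then $\bar{\yy}_0=\xx_0$ and $f(\bar{\yy}_T)\geq f^\star$ give leading potential $2(1-u)r_0$. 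Lemma~\ref{lemma:bound} then produces a constant stepsize $\gamma\leq 1/d$ for which the averaged squared gradient norm is $\cO\bigl(\frac{d\,r_0}{T}+\sqrt{\frac{c\,r_0}{T}}\bigr)$, the first branch coming from $\gamma=1/d$ and the second from $\gamma\asymp\sqrt{r_0/(cT)}$ whenever that choice is feasible (which is precisely the ``critical mini-batch size'' regime discussed after the statement).

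Finally I would substitute $d$ and $c$ back in; the partial cancellation of one power of $(1-u)$ between the explicit factor in $r_t$ and the $(1-u)^2$ in $d$ and $c$ leaves exactly
\[
	\Eb{ \tfrac{1}{T}\sum_{t=0}^{T-1}\norm{\nabla f(\xx_{t+\frac{1}{2}})}^2 } = \cO\!\left(\frac{L r_0(u^3+1)}{(1-u)T} + \sqrt{\frac{2 L r_0 \sigma^2}{(1-u)KBT}}\right),
\]
which is the claimed rate. This is a routine corollary: the only steps needing any attention are (i) deliberately halving the stepsize range to keep the prefactor $\cO(1)$, and (ii) pairing the potential $r_t$ with the auxiliary sequence $\bar{\yy}_t$ so that Lemma~\ref{lemma:bound} applies with leading term $r_0=f(\xx_0)-f^\star$; I do not expect a genuine obstacle.
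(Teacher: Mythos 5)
Your proposal is correct and takes essentially the same route as the paper: plug the bound of Theorem~\ref{thm:non_convex_convergence_nesterov_minibatch} into the two-case stepsize tuning underlying Lemma~\ref{lemma:bound} (Lemma~13 of \citet{stich2019error}), with the auxiliary sequence $\bar{\yy}_t$ as the telescoping potential. The one place you are actually tighter than the paper's own write-up is the prefactor $\bigl(1-\frac{L\gamma(u^3+1)}{2(1-u)^2}\bigr)^{-1}$: the paper simply drops it when defining $\Psi_T'$ and then lists $\gamma=\frac{2(1-u)^2}{L(u^3+1)}$ as one of its two candidate stepsizes, a choice at which that prefactor is in fact unbounded. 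Your move of shrinking the admissible cap to $\gamma\leq\frac{(1-u)^2}{L(u^3+1)}$ (so the prefactor is at most $2$) and absorbing the harmless constant into the $\cO(\cdot)$ is exactly what makes the argument airtight; the resulting rate is the same up to absolute constants. One minor bookkeeping slip worth fixing: you overload $r_0$, using it both for $f(\xx_0)-f^\star$ and, via your definition $r_t := 2(1-u)\Eb{f(\bar{\yy}_t)-f^\star}$, for the lemma's potential value $2(1-u)\bigl(f(\xx_0)-f^\star\bigr)$; keep a separate symbol for the lemma's sequence so that the substitution into $\frac{d\,r_0}{T}+2\sqrt{c\,r_0/T}$ is unambiguous.
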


\begin{proof}[Proof of Corollary~\ref{corollary:critical_minibatch_size_for_nesterov_momentum}]
	We first simplify the notations and constraints in Theorem~\ref{thm:non_convex_convergence_nesterov_minibatch} by considering
	$\Psi_T := \frac{1}{T \gamma } r_0 + \frac{ \gamma L \sigma^2 }{ KB }$,
	where $r_0 := f ( \xx_{0} ) - f^\star$ and $\gamma \leq \frac{1}{L}$.
	Following the techniques in the proof of the Lemma 13 in~\citet{stich2019error},
	we consider two cases:
	(1) if $\frac{ r_0 K B }{L \sigma^2 T } \leq \frac{1}{L^2}$,
	then we choose the stepsize $\gamma = \sqrt{ \frac{ r_0 K B }{L \sigma^2 T } }$
	and get $\Psi_T \leq 2 \sqrt{ \frac{L r_0 \sigma^2}{KBT} }$;
	(2) if $\frac{ r_0 K B }{L \sigma^2 T } > \frac{1}{L^2}$,
	then we choose $\gamma = \frac{1}{L}$ and get
	$\Psi_T \leq \frac{L r_0}{T} + \frac{\sigma^2}{ KB } \leq \frac{L r_0}{T} + \frac{L r_0}{ T } $.
	These two bounds together show that $\Psi_T \leq 2 \sqrt{ \frac{L r_0 \sigma^2}{KBT} } + 2 \frac{L r_0}{T}$.

	We then evaluate the exact case of Theorem~\ref{thm:non_convex_convergence_nesterov_minibatch} by considering
	$\Psi_T' := \frac{1}{T \frac{\gamma}{1-u} } r_0 + \frac{ \gamma L }{ 2 (1 - u)^2 } \frac{ \sigma^2 }{ KB }$.
	Similarly, we have the potential $\gamma$ to minimize $\Psi_T'$,
	where $\gamma = \sqrt{ \frac{ 2 r_0 K B (1 - u)^3 }{L \sigma^2 T } }$
	and $\gamma = \frac{ 2 ( 1 - u )^2 }{ L ( u^3 + 1 ) }$.
	Thus, by considering two cases as in previous paragraph, we have
	$\Psi_T' \leq \frac{L r_0 (u^3 + 1)}{T (1-u)} + \sqrt{ \frac{2 L r_0 \sigma^2}{KBT (1 - u)} }$.
\end{proof}

\section{Nonconvex proof for \algopt} \label{sec:extrap_sgd_convergence_proof}
In the main text we propose to reuse past local gradients (with mini-batch size $B$) for extrapolation.
For the convergence analysis illustrated in this section, we consider a more general extrapolation framework.

Recall that a general form of~\algopt can be defined as:
\begin{align} \label{eq:general_form_algopt}
	\begin{split}
		\xx_{t+\frac{1}{4}}     = \xx_{t} + u \vv_t \,,
		\xx_{t+\frac{1}{2}}^k   = \xx_{t+\frac{1}{4}} -  \frac{ \hat{\gamma} }{ b } \sum_{i \in \hat{\cI}_t^k} \xiv_{t, i}^k  \,,
		\vv_{t+1}               = u \vv_t - \frac{\gamma}{BK} \sum_{k=1}^K \sum_{i \in \cI_t^k} \nabla f_i(\xx^k_{t + \frac{1}{2}}) \,,
		\xx_{t + 1}             = \xx_{t} + \vv_{t+1} \,,
	\end{split}
\end{align}
where $\vv_0 = \0$ and $\xiv_{0, i}^k = \0$.
The $\hat{\cI}_t^k \subseteq \cI_t^k$ indicates the possibility of using the samples from the subset of $\cI_t^k$,
and $b = \abs{ \hat{\cI}_t^k }$.
The $\xiv_{t, i}^k$ could be either fresh local gradient $\nabla f_i(\xx^k_{t})$,
or old local gradient $\nabla f_i(\xx^k_{t - \frac{1}{2}})$.
The later choice corresponds to Algorithm~\ref{algo:our_main_algo} demonstrated in the main paper.
Furthermore we could choose $ \frac{ 1 }{ b } \sum_{i \in \hat{\cI}_t^k} \xiv_{t, i}^k := \zeta_t^k $
as various types of i.i.d. noise e.g. Gaussian noise, uniform noise and stochastic noise as discussed in Section~\ref{subsec:unified_framework}.
For ease of exposition, we adopt the following notations:
\begin{align} \label{eq:extrap_notations}
	\bar{\xiv}_{t}              := \frac{1}{b K} \sum_{k=1}^K \sum_{i \in \hat{\cI}_t^k} \xiv_{t, i}^k \,, \qquad
	\gg_{t, i}^k                := \nabla f_{i}( \xx_{t}^k ) \,, \qquad
	\bar{\gg}_{t}               := \frac{1}{bK} \sum_{k=1}^K \sum_{i \in \hat{\cI}_t^k} \gg_{t, i}^k \,, \qquad
	\bar{\gg}_{t+\frac{1}{2}}   := \frac{1}{BK} \sum_{k=1}^K \sum_{i \in \cI_t^k} \gg_{t+\frac{1}{2}, i}^k \,.
\end{align}

We follow the idea of using virtual sequence~\citep{yu2019linear,stich2019error}
and define two auxiliary sequences for our own setup~\eqref{eq:general_form_algopt}:
\begin{align} \label{eq:extrap_averaged_model}
	\bar{\xx}_{t+\frac{1}{2}} := \frac{1}{K} \sum_{k=1}^K \xx^k_{t+\frac{1}{2}} \,,
\end{align}
and
\begin{align} \label{eq:extrap_auxiliary_sequence}
	\bar{\yy}_{t}
	  & :=
	\left\{ \begin{array}{ll}
		\bar{\xx}_{ \frac{1}{2} } = \xx_{ 0 }                                    & \textrm{if $t = 0$}    \\
		\frac{1}{ 1 - u } \bar{\xx}_{ t + \frac{1}{2} } - \frac{ u }{ 1 - u } \bar{\xx}_{ t - \frac{1}{2} }
		+ \frac{ \gamma u  }{ 1 - u } \bar{\gg}_{t - \frac{1}{2}}
		+ \frac{ \hat{ \gamma } }{ 1 - u } ( \bar{\xiv}_t - u \bar{\xiv}_{t-1} ) & \textrm{if $t \geq 1$}
	\end{array} \right. \,.
\end{align}

Following the definition of the virtual sequence in~\eqref{eq:extrap_averaged_model},
the update scheme in~\eqref{eq:general_form_algopt} can be rewritten as
\begin{align} \label{eq:rewrote_general_extrap_sgd}
	\begin{split}
		\bar{\xx}_{t+\frac{1}{2}}
		&= \frac{1}{K} \sum_{k=1}^K \xx_{t+\frac{1}{2}}^k
		= \frac{1}{K} \sum_{k=1}^K \left( \xx_{t} + u \vv_t -  \frac{ \hat{\gamma} }{ b } \sum_{i \in \hat{\cI}_{t, i}^k} \xiv_{t, i}^k \right)
		= \xx_{t} + u \vv_t - \frac{ \hat{\gamma} }{ b K } \sum_{k=1}^K \sum_{i \in \hat{\cI}_t^k} \xiv_{t, i}^k
		= \xx_{t} + u \vv_t - \hat{\gamma} \bar{\xiv}_t \,, \\
		\vv_{t+1}
		&= u \vv_t - \frac{\gamma}{BK} \sum_{k=1}^K \sum_{i \in \cI_t^k} \gg_{t+\frac{1}{2}, i}^k
		= u \vv_{t} - \gamma \bar{\gg}_{t+\frac{1}{2}} \,, \\
		\xx_{t + 1}
		&= \xx_{t} + \vv_{t+1} \,,
	\end{split}
\end{align}

We have the following three lemmas:
\begin{lemma} \label{lemma:virtual_iterate}
	Consider the sequence $\{ \bar{\yy} \}$ in~\eqref{eq:extrap_auxiliary_sequence} and for $t \geq 0$, we have
	\begin{align*}
		\bar{\yy}_{t+1} - \bar{\yy}_{t} = - \frac{ \gamma }{ 1 - u } \bar{\gg}_{t+\frac{1}{2}} \,.
	\end{align*}
\end{lemma}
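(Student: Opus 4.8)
The plan is to verify the identity by direct substitution, following the template of the proof of Lemma~\ref{lemma:nesterov_minibatch_sgd_virtual_virtual_difference} but carrying along the extra extrapolation terms $\bar{\xiv}_t$ and checking that they telescope away. No induction is needed: the two cases $t=0$ and $t\geq 1$ are handled separately only because the definition of $\bar{\yy}_t$ in~\eqref{eq:extrap_auxiliary_sequence} is piecewise.

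First I would record the one-step increments of the averaged iterates obtained from the compact dynamics in~\eqref{eq:rewrote_general_extrap_sgd}, namely $\bar{\xx}_{t+\frac{1}{2}} = \xx_t + u\vv_t - \hat{\gamma}\bar{\xiv}_t$, $\vv_{t+1} = u\vv_t - \gamma\bar{\gg}_{t+\frac{1}{2}}$, and $\xx_{t+1} = \xx_t + \vv_{t+1}$. Combining these (using $\xx_{t+1}-\xx_t = \vv_{t+1}$ and $\vv_{t+1} - u\vv_t = -\gamma\bar{\gg}_{t+\frac{1}{2}}$) gives $\bar{\xx}_{t+\frac{3}{2}} - \bar{\xx}_{t+\frac{1}{2}} = u\vv_{t+1} - \gamma\bar{\gg}_{t+\frac{1}{2}} - \hat{\gamma}(\bar{\xiv}_{t+1} - \bar{\xiv}_t)$ and, shifting the index, $\bar{\xx}_{t+\frac{1}{2}} - \bar{\xx}_{t-\frac{1}{2}} = u\vv_t - \gamma\bar{\gg}_{t-\frac{1}{2}} - \hat{\gamma}(\bar{\xiv}_t - \bar{\xiv}_{t-1})$. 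These two facts are all the structure needed.

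For the base case $t=0$, I would substitute $\vv_0 = \0$ and $\bar{\xiv}_0 = \0$, so that $\bar{\yy}_0 = \bar{\xx}_{\frac{1}{2}} = \xx_0$ and $\bar{\xx}_{\frac{3}{2}} = \xx_0 - (1+u)\gamma\bar{\gg}_{\frac{1}{2}} - \hat{\gamma}\bar{\xiv}_1$; inserting these into the $t\geq 1$ branch evaluated at $\bar{\yy}_1$ shows that the $\xx_0$ coefficient collapses to $1$, the $\bar{\xiv}_1$ terms cancel against $\tfrac{\hat{\gamma}}{1-u}(\bar{\xiv}_1 - u\bar{\xiv}_0)$, and the $\bar{\gg}_{\frac{1}{2}}$ coefficient simplifies to $-\tfrac{\gamma}{1-u}$. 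For $t\geq 1$, I would plug the two increment formulas into the definition of $\bar{\yy}_{t+1} - \bar{\yy}_t$ and collect like terms. The $u\vv$ contributions combine as $\tfrac{1}{1-u}u\vv_{t+1} - \tfrac{u}{1-u}u\vv_t = \tfrac{u}{1-u}(\vv_{t+1} - u\vv_t) = -\tfrac{\gamma u}{1-u}\bar{\gg}_{t+\frac{1}{2}}$, which together with the explicit $-\tfrac{\gamma}{1-u}\bar{\gg}_{t+\frac{1}{2}}$ and $+\tfrac{\gamma u}{1-u}\bar{\gg}_{t+\frac{1}{2}}$ leaves exactly $-\tfrac{\gamma}{1-u}\bar{\gg}_{t+\frac{1}{2}}$, while the $\bar{\gg}_{t-\frac{1}{2}}$ terms cancel identically.

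The only delicate point, and the main bookkeeping obstacle, is confirming that \emph{all} the $\bar{\xiv}$-terms vanish. I would factor out the common $\tfrac{\hat{\gamma}}{1-u}$ and check that $-(\bar{\xiv}_{t+1} - \bar{\xiv}_t) + u(\bar{\xiv}_t - \bar{\xiv}_{t-1}) + (\bar{\xiv}_{t+1} - u\bar{\xiv}_t) - (\bar{\xiv}_t - u\bar{\xiv}_{t-1}) = \0$, which is immediate once the coefficients of $\bar{\xiv}_{t+1}$, $\bar{\xiv}_t$, and $\bar{\xiv}_{t-1}$ are each grouped. This cancellation is precisely what the correction term $\tfrac{\hat{\gamma}}{1-u}(\bar{\xiv}_t - u\bar{\xiv}_{t-1})$ in~\eqref{eq:extrap_auxiliary_sequence} was engineered to produce, and with it the lemma follows; the resulting identity then plays the same role here that Lemma~\ref{lemma:nesterov_minibatch_sgd_virtual_virtual_difference} played in the Nesterov-momentum analysis.
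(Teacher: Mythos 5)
Your proof is correct and follows essentially the same route as the paper's: write down the one-step increments $\bar{\xx}_{t+\frac{3}{2}} - \bar{\xx}_{t+\frac{1}{2}} = u\vv_{t+1} - \gamma\bar{\gg}_{t+\frac{1}{2}} - \hat{\gamma}(\bar{\xiv}_{t+1}-\bar{\xiv}_t)$ (and its shifted version), substitute into the definition of $\bar{\yy}_{t+1}-\bar{\yy}_t$, and verify that the $\vv$, $\bar{\gg}_{t-\frac{1}{2}}$, and $\bar{\xiv}$ contributions cancel as engineered, treating $t=0$ separately via $\vv_0=\bar{\xiv}_0=\0$. The only cosmetic difference is that in the base case you expand $\bar{\xx}_{\frac{3}{2}}$ fully in terms of $\xx_0$ whereas the paper works directly with the increment $\bar{\xx}_{\frac{3}{2}}-\bar{\xx}_{\frac{1}{2}}$, which is equivalent.
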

\begin{proof}
	For the case $ t = 0 $, we have
	\begin{align*}
		\begin{split}
			\bar{\yy}_{t+1} - \bar{\yy}_{t}
			&= \bar{\yy}_{1} - \bar{\yy}_{0} \\
			&= \frac{ 1 }{ 1 - u } ( \bar{\xx}_{\frac{3}{2}} - \bar{\xx}_{\frac{1}{2}} )
			+ \frac{ \gamma u }{ 1 - u } \bar{\gg}_{\frac{1}{2}}
			+ \frac{\hat{\gamma}}{1-u} \bar{\xiv}_{1} \\
			&=  \frac{ 1 }{ 1 - u } ( u \vv_{1} - \gamma \bar{\gg}_{\frac{1}{2}} - \hat{\gamma} \bar{\xiv}_{1} )
			+ \frac{ \gamma u }{ 1 - u } \bar{\gg}_{\frac{1}{2}}
			+ \frac{ \hat{\gamma} }{ 1 - u } \bar{\xiv}_{1}  \\
			&=  \frac{ 1 }{ 1 - u } ( - u \gamma \bar{\gg}_{\frac{1}{2}} - \gamma \bar{\gg}_{\frac{1}{2}} - \hat{\gamma} \bar{\xiv}_{1} )
			+ \frac{ \gamma u }{ 1 - u } \bar{\gg}_{\frac{1}{2}}
			+ \frac{ \hat{\gamma} }{ 1 - u } \bar{\xiv}_{1}  \\
			&= - \frac{ \gamma }{ 1 - u } \bar{\gg}_{\frac{1}{2}} \,.
		\end{split}
	\end{align*}

	For the case $ t \ge 1 $ we have
	\begin{align*}
		\begin{split}
			\bar{\yy}_{t+1} - \bar{\yy}_{t}
			&= \frac{ 1 }{ 1 - u } ( \bar{\xx}_{t+\frac{3}{2}} - \bar{\xx}_{t+\frac{1}{2}} ) - \frac{ u }{ 1 - u } ( \bar{\xx}_{t+\frac{1}{2}} - \bar{\xx}_{t-\frac{1}{2}} )
			+ \frac{ \gamma u }{ 1 - u } ( \bar{\gg}_{t+\frac{1}{2}} - \bar{\gg}_{t-\frac{1}{2}} )
			+ \frac{\hat{\gamma}}{1-u} ( \bar{\xiv}_{t+1} - \bar{\xiv}_t - u \bar{\xiv}_{t} + u \bar{\xiv}_{t-1} ) \\
			&=  \frac{ 1 }{ 1 - u } ( u \vv_{t+1} - \gamma \bar{\gg}_{t+\frac{1}{2}} + \hat{\gamma} \bar{\xiv}_t - \hat{\gamma} \bar{\xiv}_{t+1} )
			- \frac{ u }{ 1 - u } ( u \vv_{t} - \gamma \bar{\gg}_{t-\frac{1}{2}} + \hat{\gamma} \bar{\xiv}_{t-1} - \hat{\gamma} \bar{\xiv}_{t} ) \\
			&\qquad + \frac{ \gamma u }{ 1 - u } ( \bar{\gg}_{t+\frac{1}{2}} - \bar{\gg}_{t-\frac{1}{2}} )
			+ \frac{ \hat{\gamma} }{ 1 - u } ( \bar{\xiv}_{t+1} - \bar{\xiv}_t - u \bar{\xiv}_{t} + u \bar{\xiv}_{t-1} ) \\
			&=  \frac{ 1 }{ 1 - u } ( u \vv_{t+1} - \gamma \bar{\gg}_{t+\frac{1}{2}} ) + \frac{ \hat{\gamma}  }{ 1 - u } ( \bar{\xiv}_t - \bar{\xiv}_{t+1} )
			- \frac{ u }{ 1 - u } ( u \vv_{t} - \gamma \bar{\gg}_{t-\frac{1}{2}} ) - \frac{ u \hat{\gamma} }{ 1 - u } ( \bar{\xiv}_{t-1} - \bar{\xiv}_{t} ) \\
			&\qquad + \frac{ \gamma u }{ 1 - u } ( \bar{\gg}_{t+\frac{1}{2}} - \bar{\gg}_{t-\frac{1}{2}} )
			+ \frac{ \hat{\gamma} }{ 1 - u } ( \bar{\xiv}_{t+1} - \bar{\xiv}_t ) - \frac{ u\hat{\gamma} }{ 1 - u } ( \bar{\xiv}_{t} - \bar{\xiv}_{t-1} ) \\
			&=  \frac{ 1 }{ 1 - u } ( u \vv_{t+1} - \gamma \bar{\gg}_{t+\frac{1}{2}} ) - \frac{ u }{ 1 - u } ( u \vv_{t} - \gamma \bar{\gg}_{t-\frac{1}{2}} )
			+ \frac{ \gamma u }{ 1 - u } ( \bar{\gg}_{t+\frac{1}{2}} - \bar{\gg}_{t-\frac{1}{2}} ) \\
			&=  - \frac{ \gamma }{ 1 - u } \bar{\gg}_{t+\frac{1}{2}} + \frac{ u }{ 1 - u } \vv_{t+1} - \frac{ u }{ 1 - u } ( u \vv_{t} - \gamma \bar{\gg}_{t+\frac{1}{2}} )
			+ \frac{ \gamma u }{ 1 - u } ( \bar{\gg}_{t-\frac{1}{2}} - \bar{\gg}_{t-\frac{1}{2}} ) \\
			&= - \frac{ \gamma }{ 1 - u } \bar{\gg}_{t+\frac{1}{2}} \,.
		\end{split}
	\end{align*}
\end{proof}

\begin{lemma} \label{lemma:y_x_dist}
	For $T \ge 2$ and $\bar{\xx}_{t + \frac{1}{2}}$ defined in~\eqref{eq:extrap_averaged_model}, we have
	\begin{align*}
		\Eb{ \sum_{t=0}^{T-1} \norm{ \bar{\yy}_t - \bar{\xx}_{t+\frac{1}{2}} }^2 }
		\leq (1 + \beta) \frac{ u^4 \gamma^2 }{ ( 1 - u )^4 } \Eb{ \sum_{t=0}^{T-1} \norm{ \bar{\gg}_{ t + \frac{1}{2}} }^2 }
		+ (1 + \frac{1}{\beta}) \hat{\gamma}^2 \Eb{ \sum_{t=0}^{T-1} \norm{ \bar{\xiv}_t }^2 } ,\, \forall \beta > 0 \,.
	\end{align*}

	If we reuse past local stochastic gradients with mini-batch size $B$, i.e.
	$ \bar{\xiv}_t =
		\begin{cases}
			\0                        & \mathrm{if} \; t = 0   \\
			\bar{\gg}_{t-\frac{1}{2}} & \mathrm{if} \; t \ge 1
		\end{cases}
		\, ,
	$
	by choosing $ \beta = 1 $ and $ \hat{\gamma} \leq \frac{u^2}{ (1 - u)^2 } \gamma $,
	we obtain the simplified expression:
	\begin{align*}
		\Eb{ \sum_{t=0}^{T-1} \norm{ \bar{\yy}_t - \bar{\xx}_{t+\frac{1}{2}} }^2 }
		\leq \frac{ 4 u^4 \gamma^2 }{ ( 1 - u )^4 } \Eb{ \sum_{t=0}^{T-1} \norm{ \bar{\gg}_{ t + \frac{1}{2}} }^2 }
		\leq \frac{ 4 u^4 \gamma^2 }{ ( 1 - u )^4 } \frac{1}{BK} T \sigma^2
		+ \frac{ 4 u^4 \gamma^2 }{ ( 1 - u )^4 } \sum_{t=0}^{T-1} \norm{ \frac{1}{K} \sum_{k=1}^K \nabla f (\xx^k_{t + \frac{1}{2}}) }_2^2 \,.
	\end{align*}
\end{lemma}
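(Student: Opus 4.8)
The plan is to first obtain a closed-form expression for the discrepancy $\bar{\yy}_t - \bar{\xx}_{t+\frac{1}{2}}$ and then bound it term by term. Concretely, I claim that for all $t \geq 0$,
\begin{align*}
	\bar{\yy}_t - \bar{\xx}_{t+\frac{1}{2}} = \frac{u^2}{1-u}\vv_t + \hat{\gamma}\bar{\xiv}_t \,.
\end{align*}
For $t=0$ this holds trivially, since $\bar{\yy}_0 = \bar{\xx}_{\frac{1}{2}} = \xx_0$ while $\vv_0 = \bar{\xiv}_0 = \0$. For $t \geq 1$, I would start from the definition~\eqref{eq:extrap_auxiliary_sequence}, rewrite it as $\bar{\yy}_t - \bar{\xx}_{t+\frac{1}{2}} = \frac{u}{1-u}\bigl(\bar{\xx}_{t+\frac{1}{2}} - \bar{\xx}_{t-\frac{1}{2}}\bigr) + \frac{\gamma u}{1-u}\bar{\gg}_{t-\frac{1}{2}} + \frac{\hat{\gamma}}{1-u}\bigl(\bar{\xiv}_t - u\bar{\xiv}_{t-1}\bigr)$, and substitute the rewritten recursion~\eqref{eq:rewrote_general_extrap_sgd} together with the elementary identities $\xx_s - \xx_{s-1} = \vv_s$ and $u\vv_{t-1} = \vv_t + \gamma\bar{\gg}_{t-\frac{1}{2}}$. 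This produces $\bar{\xx}_{t+\frac{1}{2}} - \bar{\xx}_{t-\frac{1}{2}} = u\vv_t - \gamma\bar{\gg}_{t-\frac{1}{2}} - \hat{\gamma}\bigl(\bar{\xiv}_t - \bar{\xiv}_{t-1}\bigr)$, after which the $\bar{\gg}_{t-\frac{1}{2}}$ contributions cancel and the $\bar{\xiv}$ contributions collapse to $\hat{\gamma}\bar{\xiv}_t$, leaving exactly the claimed identity. This algebraic bookkeeping is the main (if only mildly delicate) obstacle; it parallels the computation in the proof of Lemma~\ref{lemma:nesterov_minibatch_sgd_virtual_real_difference}, now carrying the extra extrapolation term $\hat{\gamma}\bar{\xiv}_t$.

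Given this identity, I would invoke the elementary bound $\norm{\aa + \bb}^2 \leq (1+\beta)\norm{\aa}^2 + (1+\beta^{-1})\norm{\bb}^2$ with $\aa = \frac{u^2}{1-u}\vv_t$ and $\bb = \hat{\gamma}\bar{\xiv}_t$, and sum over $t = 0,\dots,T-1$ to get $\sum_{t}\norm{\bar{\yy}_t - \bar{\xx}_{t+\frac{1}{2}}}^2 \leq (1+\beta)\frac{u^4}{(1-u)^2}\sum_t\norm{\vv_t}^2 + (1+\beta^{-1})\hat{\gamma}^2\sum_t\norm{\bar{\xiv}_t}^2$. To control $\sum_t\norm{\vv_t}^2$, I would unroll $\vv_{t+1} = u\vv_t - \gamma\bar{\gg}_{t+\frac{1}{2}}$ (with $\vv_0 = \0$) into $\vv_t = -\gamma\sum_{i=1}^{t}u^{t-i}\bar{\gg}_{i-\frac{1}{2}}$, apply Jensen's inequality to the normalized geometric weights and swap the order of summation---exactly as in the proof of Lemma~\ref{lemma:nesterov_minibatch_sgd_virtual_real_difference}---obtaining $\sum_{t=0}^{T-1}\norm{\vv_t}^2 \leq \frac{\gamma^2}{(1-u)^2}\sum_{t=0}^{T-1}\norm{\bar{\gg}_{t+\frac{1}{2}}}^2$. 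Substituting this and taking expectations yields the first displayed inequality of the lemma.

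For the simplified expression, I would specialize to $\bar{\xiv}_t = \bar{\gg}_{t-\frac{1}{2}}$ for $t\geq 1$ (and $\bar{\xiv}_0 = \0$), choose $\beta = 1$, and observe that $\sum_{t=0}^{T-1}\norm{\bar{\xiv}_t}^2 = \sum_{t=0}^{T-2}\norm{\bar{\gg}_{t+\frac{1}{2}}}^2 \leq \sum_{t=0}^{T-1}\norm{\bar{\gg}_{t+\frac{1}{2}}}^2$; the condition $\hat{\gamma} \leq \frac{u^2}{(1-u)^2}\gamma$ then bounds the coefficient $2\hat{\gamma}^2$ of that term by $\frac{2u^4\gamma^2}{(1-u)^4}$, so both contributions merge into $\frac{4u^4\gamma^2}{(1-u)^4}\sum_t\norm{\bar{\gg}_{t+\frac{1}{2}}}^2$. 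Finally, writing $\bar{\gg}_{t+\frac{1}{2}}$ as the average of $BK$ independent summands with common mean $\frac{1}{K}\sum_k\nabla f(\xx^k_{t+\frac{1}{2}})$ and per-sample variance at most $\sigma^2$ (Assumptions~\ref{assumption:unbiase}--\ref{assumption:boundedvariance}), the bias--variance decomposition $\Eb{\norm{\bar{\gg}_{t+\frac{1}{2}}}^2} \leq \frac{\sigma^2}{BK} + \norm{\frac{1}{K}\sum_k\nabla f(\xx^k_{t+\frac{1}{2}})}^2$---the same step already used in the proof of Theorem~\ref{thm:non_convex_convergence_nesterov_minibatch}---gives the stated bound after summing over $t$ and taking expectations.
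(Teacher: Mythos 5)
Your proposal is correct and follows essentially the same route as the paper's proof: you establish the key identity $\bar{\yy}_t - \bar{\xx}_{t+\frac{1}{2}} = \frac{u^2}{1-u}\vv_t + \hat{\gamma}\bar{\xiv}_t$, apply the weighted Young inequality, unroll $\vv_t$ geometrically with Jensen to bound $\sum_t\norm{\vv_t}^2$ by $\frac{\gamma^2}{(1-u)^2}\sum_t\norm{\bar{\gg}_{t+\frac{1}{2}}}^2$, and then specialize with $\beta=1$, the $\hat{\gamma}$ constraint, and the bias--variance decomposition. The only superficial difference is that you compute $\bar{\xx}_{t+\frac{1}{2}} - \bar{\xx}_{t-\frac{1}{2}}$ via $\xx_t - \xx_{t-1} = \vv_t$ whereas the paper reads the difference off the rewritten recursion directly; the resulting algebra is identical.
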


\begin{proof}
	The proof starts from
	\begin{align*}
		\begin{split}
			\bar{\yy}_{t} - \bar{\xx}_{t+\frac{1}{2}}
			& = \frac{u}{1-u} ( \bar{\xx}_{t+\frac{1}{2}} - \bar{\xx}_{t-\frac{1}{2}} ) + \frac{ \gamma u }{ 1 - u } \bar{\gg}_{t-\frac{1}{2}}
			+ \frac{ \hat{\gamma} }{ 1 - u } ( \bar{\xiv}_t - u \bar{\xiv}_{t-1}  )\\
			& = \frac{u}{1-u} ( u \vv_{t} - \gamma \bar{\gg}_{t-\frac{1}{2}} + \hat{\gamma} \bar{\xiv}_{t-1} - \hat{\gamma} \bar{\xiv}_{t} )
			+ \frac{ \gamma u }{ 1 - u } \bar{\gg}_{t-\frac{1}{2}} + \frac{ \hat{\gamma} }{ 1 - u } ( \bar{\xiv}_t - u \bar{\xiv}_{t-1} ) \\
			& = \frac{u^2}{1-u} \vv_{t} - \frac{u \gamma}{1-u} \bar{\gg}_{t-\frac{1}{2}} + \frac{u \hat{\gamma} }{1-u} \bar{\xiv}_{t-1} - \frac{u \hat{\gamma} }{1-u} \bar{\xiv}_{t}
			+ \frac{ \gamma u }{ 1 - u } \bar{\gg}_{t-\frac{1}{2}} + \frac{ \hat{\gamma} }{ 1 - u } \bar{\xiv}_t - \frac{ \hat{\gamma} u }{ 1 - u } \bar{\xiv}_{t-1} \\
			& = \frac{u^2}{1-u} \vv_{t} - \frac{u \hat{\gamma} }{1-u} \bar{\xiv}_{t} + \frac{ \hat{\gamma} }{ 1 - u } \bar{\xiv}_t \\
			& = \frac{ u^2 }{ 1 - u } \vv_{t} + \hat{\gamma} \bar{\xiv}_t \,.
		\end{split}
	\end{align*}

	Thus, we have
	$
		\norm{ \bar{\yy}_t - \bar{\xx}_{t+\frac{1}{2}} }^2
		= \norm{ \frac{ u^2 }{ 1 - u } \vv_{t} + \hat{\gamma} \bar{\xiv}_t }^2
		\leq ( 1 + \beta ) \frac{u^4}{(1-u)^2} \norm{ \vv_t }^2 + ( 1 + \frac{1}{\beta} ) \hat{\gamma}^2 \norm{ \bar{ \xiv }_t }^2
	$,
	where $\norm{\aa + \bb}^2 \leq (1 + \beta) \norm{\aa}^2 + (1 + \beta^{-1}) \norm{\bb}^2$ for $\beta > 0$.

	Summing over $\{ 0, \ldots, T-1 \}$, we then have
	\begin{align*}
		\sum_{t=0}^{T-1} \norm{ \bar{\yy}_t - \bar{\xx}_{t+\frac{1}{2}} }^2
		\leq ( 1 + \beta ) \frac{u^4}{(1-u)^2} \sum_{t=0}^{T-1} \norm{ \vv_t }^2 + ( 1 + \frac{1}{\beta} ) \hat{\gamma}^2 \sum_{t=0}^{T-1} \norm{ \bar{ \xiv }_t }^2 \,.
	\end{align*}

	We define
	$ s_t := \sum_{i=0}^{t-1} u^{t - i - 1} =  \frac{ 1 - u^t }{ 1 - u } \le \frac{1}{ 1 - u } $. Noticing $ \vv_0 = 0 $ and $ T \ge 2 $,
	we have
	\begin{align*}
		\begin{split}
			\sum_{t=0}^{T-1} \norm{ \vv_{t} }^2
			&= \sum_{ t = 1 }^{ T - 1 } \norm{ \vv_{t} }^2 \\
			&= \gamma^2 \sum_{t = 1}^{ T-1 } \norm{ \sum_{ i = 1 }^{ t } u^{ t - i } \bar{\gg}_{ i - \frac{1}{2}} }^2
			= \gamma^2 \sum_{t = 1}^{ T-1 } s_t^2 \norm{ \sum_{ i = 0 }^{ t - 1 } \frac{u^{ t - i - 1 }}{ s_t } \bar{\gg}_{ i + \frac{1}{2}} }^2 \\
			& \leq \gamma^2 \sum_{t = 1}^{ T-1 } s_t^2 \sum_{ i = 0 }^{ t - 1 } \frac{u^{ t - i - 1 }}{ s_t } \norm{ \bar{\gg}_{ i + \frac{1}{2}} }^2
			= \gamma^2 \sum_{t = 1}^{ T-1 } s_t \sum_{ i = 0 }^{ t - 1 } u^{ t - i - 1 } \norm{ \bar{\gg}_{ i + \frac{1}{2}} }^2 \\
			&\leq \frac{ \gamma^2 }{ 1 - u } \sum_{t = 1}^{ T - 1 } \sum_{ i = 0 }^{ t - 1 } u^{ t - i - 1 } \norm{ \bar{\gg}_{ i + \frac{1}{2}} }^2
			= \frac{ \gamma^2 }{ 1 - u } \sum_{i = 0}^{ T - 2 } \norm{ \bar{\gg}_{ i + \frac{1}{2}} }^2 \sum_{ t = i + 1 }^{ T - 1 } u^{ t - i - 1 } \\
			& \leq \frac{ \gamma^2 }{ 1 - u } \sum_{ i = 0 }^{ T - 1 } \norm{ \bar{\gg}_{ i + \frac{1}{2}} }^2 \sum_{t=0}^\infty u^{t}
			= \frac{ \gamma^2 }{ (1 - u)^2 } \sum_{ t = 0 }^{ T - 1 } \norm{ \bar{\gg}_{ t + \frac{1}{2}} }^2 \,,
		\end{split}
	\end{align*}
	which implies
	\begin{align*}
		\sum_{t=0}^{T-1}  \norm{ \yy_t - \xx_{t+\frac{1}{2}} }^2
		\leq (1 + \beta) \frac{ u^4 \gamma^2 }{ ( 1 - u )^4 } \sum_{t=0}^{T-1} \norm{ \bar{\gg}_{ t + \frac{1}{2}} }^2
		+ ( 1 + \frac{1}{\beta} ) \hat{\gamma}^2 \sum_{t=0}^{T-1} \norm{ \bar{ \xiv }_t }^2 \,.
	\end{align*}

	Further if we reuse the past local gradients with mini-batch $B$, i.e.
	$ \bar{\xiv}_t =
		\begin{cases}
			\0                        & \mathrm{if} \; t = 0   \\
			\bar{\gg}_{t-\frac{1}{2}} & \mathrm{if} \; t \ge 1
		\end{cases}
		\,
	$,
	and choose $ \beta = 1 $ and $\hat{\gamma} \leq \frac{ u^2 }{ ( 1 - u )^2 } \gamma$,
	we obtain
	\begin{align*}
		\sum_{t=0}^{T-1} \norm{ \yy_t - \xx_{t+\frac{1}{2}} }^2
		\leq \frac{ 4 u^4 \gamma^2 }{ ( 1 - u )^4 } \sum_{t=0}^{T-1} \norm{ \bar{\gg}_{ t + \frac{1}{2}} }^2  \,.
	\end{align*}

	By using the fact that $ \Eb{ \norm{X}^2 } = \var{X} + \norm{ \Eb{X} }^2  $ and
	that $\var{\sum_i X_i} = \sum_i \var{X_i}$ if $X_i$'s are independent , we get
	\begin{align*}
		\begin{split}
			\Eb{ \norm{\bar{\gg}_{ t + \frac{1}{2} } }^2 }
			&= \Eb{ \norm{ \frac{1}{BK}  \sum_{k=1}^K \sum_{i \in \cI_t^k} \gg_{ t + \frac{1}{2}, i }^k }^2 }
			= \frac{1}{B^2 K^2}  \sum_{k=1}^K \sum_{i \in \cI_t^k} \var{ \gg_{t + \frac{1}{2}, i}^k }
			+ \norm{ \Eb{ \frac{1}{BK}  \sum_{k=1}^K \sum_{i \in \cI_t^k} \gg_{t + \frac{1}{2}, i}^k } }^2_2 \\
			&\leq \frac{1}{BK} \sigma^2 + \norm{ \frac{1}{K} \sum_{k=1}^K \nabla f (\xx^k_{t + \frac{1}{2}}) }_2^2 \,,
		\end{split}
	\end{align*}
	where we rely on Assumption~\ref{assumption:unbiase},~\ref{assumption:boundedvariance}.
	Thus,
	\begin{align*}
		\Eb{ \sum_{t=0}^{T-1} \norm{ \bar{\yy}_t - \bar{\xx}_{t+\frac{1}{2}} }^2 }
		  & \leq \frac{ 4 u^4 \gamma^2 }{ ( 1 - u )^4 } \sum_{t=0}^{T-1}
		\left( \frac{1}{BK} \sigma^2 + \norm{ \frac{1}{K} \sum_{k=1}^K \nabla f (\xx^k_{t + \frac{1}{2}}) }_2^2 \right) \\
		  & = \frac{ 4 u^4 \gamma^2 }{ ( 1 - u )^4 } \frac{1}{BK} T \sigma^2
		+ \frac{ 4 u^4 \gamma^2 }{ ( 1 - u )^4 } \sum_{t=0}^{T-1} \norm{ \frac{1}{K} \sum_{k=1}^K \nabla f (\xx^k_{t + \frac{1}{2}}) }_2^2 \,.
	\end{align*}
\end{proof}

\begin{lemma} \label{lemma:x_x_dist}
	For all $t \geq 0$, $\xx_{t+\frac{1}{2}}^k$ defined in~\eqref{eq:general_form_algopt}
	and $\bar{\xx}_{t + \frac{1}{2}}$ in~\eqref{eq:rewrote_general_extrap_sgd}, we have
	\begin{align}
		\Eb{ \frac{1}{K} \sum_{k=1}^K \norm{ \bar{\xx}_{t+\frac{1}{2}} - \xx^k_{t+\frac{1}{2}} }^2 }
		\leq \Eb{ \hat{\gamma}^2 \norm{ \bar{\xiv}_t - \frac{1}{b} \sum_{i \in \hat{\cI}_t^k} \xiv_{t, i}^k }^2 }  \,,
	\end{align}
	where $b := \abs{ \hat{\cI}_{t}^k }$.

	If  we reuse past local gradients, i.e.
	$ \bar{\xiv}_t =
		\begin{cases}
			\0                        & \mathrm{if} \; t = 0   \\
			\bar{\gg}_{t-\frac{1}{2}} & \mathrm{if} \; t \ge 1
		\end{cases}
		\, ,
	$
	we have $\Eb{ \frac{1}{K} \sum_{k=1}^K \norm{ \bar{\xx}_{t+\frac{1}{2}} - \xx^k_{t+\frac{1}{2}} }^2 }  \leq \frac{4 \hat{\gamma}^2}{b} \sigma^2 $.

	Alternatively if we use i.i.d. noise, i.e.
	$ \frac{ 1 }{ b } \sum_{i \in \hat{\cI}_t^k} \xiv_{t, i}^k  =
		\begin{cases}
			\0           & \mathrm{if} \; t = 0   \\
			\zetav_{t}^k & \mathrm{if} \; t \ge 1
		\end{cases}
		\, ,
	$
	with $ \zetav_t^k $ being i.i.d., $ \Eb{ \zetav_t^k } = 0 $ and $ \Eb{ \norm{ \zetav^k_t }^2 } \leq \hat{\sigma}^2 $,
	we have $\Eb{ \frac{1}{K} \sum_{k=1}^K \norm{ \bar{\xx}_{t+\frac{1}{2}} - \xx^k_{t+\frac{1}{2}} }^2 } \leq 2 \hat{\gamma}^2 \hat{\sigma}^2$.

\end{lemma}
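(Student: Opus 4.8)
The plan is to reduce the statement to an elementary subtraction of the two update rules and then estimate the resulting cross-worker discrepancy. From~\eqref{eq:general_form_algopt} the local iterate is $\xx^k_{t+\frac12}=\xx_t+u\vv_t-\frac{\hat\gamma}{b}\sum_{i\in\hat\cI_t^k}\xiv_{t,i}^k$, while the rewritten form~\eqref{eq:rewrote_general_extrap_sgd} gives $\bar\xx_{t+\frac12}=\xx_t+u\vv_t-\hat\gamma\bar\xiv_t$, and $\bar\xiv_t=\frac1K\sum_{k}\bigl(\frac1b\sum_{i\in\hat\cI_t^k}\xiv_{t,i}^k\bigr)$ is exactly the empirical mean over workers of the local extrapolation vectors. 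Subtracting,
\[
\bar\xx_{t+\frac12}-\xx^k_{t+\frac12}=\hat\gamma\Bigl(\tfrac1b\sum_{i\in\hat\cI_t^k}\xiv_{t,i}^k-\bar\xiv_t\Bigr),
\]
so squaring, averaging over $k$ and taking expectation gives the first claim (in fact with equality, reading the right-hand side as averaged over $k$ as well). The case $t=0$ is trivial since $\vv_0=\0$ and $\xiv_{0,i}^k=\0$.

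For the past-gradient specialization, write $\gg^k:=\frac1b\sum_{i\in\hat\cI_t^k}\xiv_{t,i}^k$, the $k$-th worker's size-$b$ mini-batch gradient at the \emph{previous} local iterate $\xx^k_{t-\frac12}$, so $\bar\xiv_t=\frac1K\sum_k\gg^k$. Since the empirical mean minimises the sum of squared deviations, I would first replace $\bar\xiv_t$ by the common reference $\nabla f(\bar\xx_{t-\frac12})$, then split with Young's inequality and apply Assumption~\ref{assumption:smoothness}:
\[
\tfrac1K\sum_k\norm{\gg^k-\bar\xiv_t}^2\;\le\;\tfrac2K\sum_k\norm{\gg^k-\nabla f(\xx^k_{t-\frac12})}^2+\tfrac{2L^2}{K}\sum_k\norm{\xx^k_{t-\frac12}-\bar\xx_{t-\frac12}}^2 .
\]
Conditioning on the history and invoking Assumptions~\ref{assumption:unbiase}--\ref{assumption:boundedvariance}, each term in the first sum has expectation $\le\sigma^2/b$ ($b$ fresh i.i.d.\ samples per worker, per-worker gradient variance $\le\sigma^2$). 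The second sum is precisely the left-hand side of the lemma one step earlier, which calls for an induction on $t$: assuming $\Eb{\frac1K\sum_k\norm{\xx^k_{t-\frac12}-\bar\xx_{t-\frac12}}^2}\le\frac{4\hat\gamma^2}{b}\sigma^2$ and multiplying by $\hat\gamma^2$ yields $\frac{2\hat\gamma^2\sigma^2}{b}\bigl(1+4L^2\hat\gamma^2\bigr)$, which is $\le\frac{4\hat\gamma^2}{b}\sigma^2$ as soon as $L\hat\gamma\le\tfrac12$ — comfortably implied by the stepsize restriction of Theorem~\ref{thm:extrasgd_nonconvex_convergence} (there $L\hat\gamma\le\tfrac15$). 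The base case $t=0$ is trivial, and at $t=1$ the consensus term already vanishes because all workers still hold $\xx_0$, so the recursion is well-posed throughout.

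The i.i.d.\ noise specialization needs no induction: with $\frac1b\sum_{i\in\hat\cI_t^k}\xiv_{t,i}^k=\zetav_t^k$ for $t\ge1$ (and $\0$ at $t=0$) and $\bar\xiv_t=\frac1K\sum_k\zetav_t^k$, the empirical-mean property gives $\frac1K\sum_k\norm{\zetav_t^k-\bar\xiv_t}^2\le\frac1K\sum_k\norm{\zetav_t^k}^2$, whose expectation is $\le\hat\sigma^2$ by hypothesis; multiplying by $\hat\gamma^2$ yields the claimed $2\hat\gamma^2\hat\sigma^2$ with room to spare, and the $t=0$ case is trivial.

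The only delicate point is the second specialization: because the local extrapolation points $\xx^k_{t-\frac12}$ differ across workers, $\gg^k$ is not an unbiased estimator of a single vector, and the consensus gap $\nabla f(\xx^k_{t-\frac12})-\nabla f(\bar\xx_{t-\frac12})$ cannot be dropped — controlling it via Lipschitzness is what couples the bound at time $t$ to the bound at time $t-1$ and forces the induction, and the constants have to be tracked carefully enough that the stepsize restriction makes the recursion contractive. The identity, the variance estimate, and the noise case are otherwise routine.
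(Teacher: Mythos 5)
Your first identity (subtraction of the update rules) and the i.i.d.\ noise case coincide with the paper's argument; in the latter you get the slightly tighter intermediate bound $\hat\gamma^2\hat\sigma^2$ via the empirical-mean minimisation property, whereas the paper expands the square, drops the negative cross term, and lands on $\bigl(1+\tfrac1K\bigr)\hat\gamma^2\hat\sigma^2$ — both $\le 2\hat\gamma^2\hat\sigma^2$.

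Your handling of the past-gradient case, however, is genuinely different from — and strictly more careful than — the paper's. The paper decomposes
$\bar{\gg}_{t-\frac12}-\gg^k_{t-\frac12}=\bigl(\bar{\gg}_{t-\frac12}-\nabla f(\xx_{t-\frac12})\bigr)+\bigl(\nabla f(\xx_{t-\frac12})-\gg^k_{t-\frac12}\bigr)$
and applies the per-worker variance bound to each half as if $\nabla f(\xx_{t-\frac12})$ were a single common reference with $\Eb{\gg^k_{t-\frac12}}=\nabla f(\xx_{t-\frac12})$. But the local extrapolation points $\xx^k_{t-\frac12}$ \emph{do} differ across workers (that is the whole content of the lemma being proved), so $\Eb{\gg^k_{t-\frac12}}=\nabla f(\xx^k_{t-\frac12})\ne\nabla f(\bar\xx_{t-\frac12})$ in general, and the cross-worker bias term is silently dropped — this is a real gap in the paper's argument. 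You identify exactly this, center at $\nabla f(\bar\xx_{t-\frac12})$, isolate the consensus error $\tfrac{2L^2}{K}\sum_k\norm{\xx^k_{t-\frac12}-\bar\xx_{t-\frac12}}^2$ via Lipschitzness, and close the resulting recursion $D_t\le\frac{2\hat\gamma^2\sigma^2}{b}+2L^2\hat\gamma^2 D_{t-1}$ by induction. The induction requires the contractivity condition $L\hat\gamma\le\tfrac12$, which the paper's lemma statement does not mention but which is indeed implied by the stepsize constraints of Theorem~\ref{thm:extrasgd_nonconvex_convergence} (one can check $\sup_{u\in[0,1)}\frac{u^2}{1+3u+u^3}=\tfrac15<\tfrac12$). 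The takeaway for the paper is twofold: the lemma as stated implicitly presumes such a stepsize restriction, since without it the consensus discrepancy $D_t$ can grow geometrically; and the proof of step~(a) in the past-gradient case should be replaced by your induction (or at least should acknowledge the consensus correction). Both routes produce the same final bound $\frac{4\hat\gamma^2}{b}\sigma^2$; yours is the one that actually proves it.
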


\begin{proof}
	By definition, for $ t = 0 $, $  \Eb{ \norm{ \bar{\xx}_{ \frac{1}{2} } - \xx^k_{ \frac{1}{2} } }^2 } = 0 $. \\
	For $ t \ge 1 $, we have the following
	\begin{align*}
		\begin{split}
			\Eb{ \norm{ \bar{\xx}_{t+\frac{1}{2}} - \xx^k_{t+\frac{1}{2}} }^2 }
			= \Eb{ \norm{ \xx_t + u \vv_t - \hat{\gamma} \bar{\xiv}_t - \left( \xx_t + u \vv_t - \frac{\hat{\gamma}}{b} \sum_{i \in \hat{\cI}_t^k} \xiv_{t, i}^k \right)}^2 }
			= \Eb{ \hat{\gamma}^2 \norm{ \bar{\xiv}_t - \frac{1}{b} \sum_{i \in \hat{\cI}_t^k} \xiv_{t, i}^k }^2 }
		\end{split}
	\end{align*}

	If we reuse past local gradients i.e. $ \xiv_{t, i}^k = \gg_{t-\frac{1}{2}}^k $,
	then we have
	\begin{align*}
		\begin{split}
			\Eb{ \hat{\gamma}^2 \norm{ \bar{\xiv}_t - \frac{1}{b} \sum_{i \in \hat{\cI}_t^k} \xiv_{t, i}^k }^2 }
			&= \Eb{ \hat{\gamma}^2 \norm{ \bar{\gg}_{t-\frac{1}{2}} - \nabla f ( \xx_{t-\frac{1}{2}} ) + \nabla f ( \xx_{t-\frac{1}{2}} ) - \gg_{t-\frac{1}{2}}^k } } \\
			&\leq 2 \hat{\gamma}^2 \Eb{ \left( \norm{ \bar{\gg}_{t-\frac{1}{2}} - \nabla f ( \xx_{t-\frac{1}{2}} ) }^2 + \norm{ \nabla f ( \xx_{t-\frac{1}{2}}) - \gg_{t-\frac{1}{2}}^k }^2 \right) } \\
			&\overset{(a)}{\leq} 2 \hat{\gamma}^2 ( \frac{1}{Kb} + \frac{1}{b} ) \sigma^2
			\leq \frac{4 \hat{\gamma}^2}{b} \sigma^2 \, ,
		\end{split}
	\end{align*}
	where (a) is from the Assumption~\ref{assumption:boundedvariance} and the independence between data samples.

	Alternatively if we use i.i.d. noise, i.e.
	$ \frac{ 1 }{ b } \sum_{i \in \hat{\cI}_t^k} \xiv_{t, i}^k  =
		\begin{cases}
			\0           & \mathrm{if} \; t = 0   \\
			\zetav_{t}^k & \mathrm{if} \; t \ge 1
		\end{cases}
		\, ,
	$
	with $ \zetav_t^k $ being i.i.d., $ \Eb{ \zetav_t^k } = 0 $ and $ \Eb{ \norm{ \zetav^k_t }^2 } \leq \hat{\sigma}^2 $,
	we have
	\begin{align*}
		\begin{split}
			\Eb{ \hat{\gamma}^2 \norm{ \bar{\xiv}_t - \frac{1}{b} \sum_{i \in \hat{\cI}_t^k} \xiv_{t, i}^k }^2 }
			&= \hat{\gamma}^2 \Eb{ \norm{ \zetav_t^k - \frac{1}{K} \sum_{k=1}^K \zetav_t^k }^2 } \\
			&= \hat{\gamma}^2 \Eb{ \norm{ \zetav_t^k }^2 + \norm{ \frac{1}{K} \sum_{k=1}^K \zetav_t^k }^2 - \frac{2}{K} \sum_{i = 1}^K \langle  \zetav_t^i, \, \zetav_t^k \rangle } \\
			&= \hat{\gamma}^2 \Eb{ \norm{ \zetav_t^k }^2 + \frac{1}{K^2} \sum_{k=1}^K \norm{ \zetav_t^k }^2 - \frac{2}{K} \norm{ \zetav_t^k }^2 } \\
			&\leq \hat{\gamma}^2 (\hat{\sigma}^2 + \frac{1}{K} \hat{\sigma}^2)
			\leq 2 \hat{\gamma}^2 \hat{\sigma}^2 \, .
		\end{split}
	\end{align*}

	Combining these yields the lemma.

\end{proof}

\subsection{Main proof of Theorem~\ref{thm:extrasgd_nonconvex_convergence}}
We duplicate the Theorem~\ref{thm:extrasgd_nonconvex_convergence} appearing in the main paper as the Theorem~\ref{thm:extrasgd_nonconvex_convergence_}.
\begin{theorem} \label{thm:extrasgd_nonconvex_convergence_}
	Consider the update rule of~\algopt with notations defined in \eqref{eq:extrap_notations},
	\begin{align*}
		\begin{split}
			\xx_{t+\frac{1}{4}}     = \xx_{t} + u \vv_t \,,
			\xx_{t+\frac{1}{2}}^k   = \xx_{t+\frac{1}{4}} -  \hat{\gamma} \gg_{t - \frac{1}{2}}^k \,,
			\vv_{t+1}               = u \vv_t - \gamma \bar{\gg}_{t+\frac{1}{2}} \,,
			\xx_{t + 1}             = \xx_{t} + \vv_{t+1} \,.
		\end{split}
	\end{align*}
	We define $\bar{\xx}_{t + \frac{1}{2}} := \frac{1}{K} \sum_{k=1} \xx^k_{t+\frac{1}{2}}$
	as well as the following virtual sequence
	\begin{align*}
		\bar{\yy}_{t}
		  & :=
		\left\{ \begin{array}{ll}
			\bar{\xx}_{ \frac{1}{2} } = \xx_{ 0 }                                                              & \textrm{if $t = 0$}    \\
			\frac{1}{ 1 - u } \bar{\xx}_{ t + \frac{1}{2} } - \frac{ u }{ 1 - u } \bar{\xx}_{ t - \frac{1}{2} }
			+ \frac{ \gamma u  }{ 1 - u } \bar{\gg}_{t-\frac{1}{2}}
			+ \frac{ \hat{ \gamma } }{ 1 - u } ( \bar{\gg}_{t - \frac{1}{2}} - u \bar{\gg}_{t - \frac{3}{2}} ) & \textrm{if $t \geq 1$}
		\end{array} \right. \,,
	\end{align*}
	where $ \bar{\gg}_{- \frac{1}{2}} = 0 $ by default. \\
	Under Assumption~\ref{assumption:unbiase}-~\ref{assumption:smoothness},
	the convergence rate of $\bar{\xx}_{t + \frac{1}{2}}$ for a non-convex function follows
	\begin{align*}
		\begin{split}
			\Eb{ \frac{1}{T} \sum_{t=0}^{T-1} \norm{ \nabla f ( \bar{\xx}_{t+\frac{1}{2}} ) }^2 }
			\leq \frac{2 (1 - u)}{\gamma T} \Eb{ f( \bar{\xx}_{0} ) - f^\star }
			+ \left( \frac{ 4 \hat{\gamma}^2 L^2 }{ B } + \frac{ \gamma L (1 + 3 u) }{ (1 - u)^2 B K } \right) \sigma^2 \,,
		\end{split}
	\end{align*}
	where $\hat{\gamma} \leq \frac{ u^2 }{ ( 1 - u )^2 } \gamma$ and $\gamma \leq \frac{ ( 1 - u )^2 }{ L ( 1 + 3 u + u^3 ) }$.
\end{theorem}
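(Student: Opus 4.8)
\emph{Proof proposal.} The plan is to replay the virtual-sequence argument behind Theorem~\ref{thm:non_convex_convergence_nesterov_minibatch}, now upgraded to the distributed extrapolated iterates. I would start from $L$-smoothness along the auxiliary sequence $\bar{\yy}_t$, i.e. $\Eb{f(\bar{\yy}_{t+1}) - f(\bar{\yy}_t)} \le \Eb{\lin{\nabla f(\bar{\yy}_t),\, \bar{\yy}_{t+1}-\bar{\yy}_t} + \tfrac{L}{2}\norm{\bar{\yy}_{t+1}-\bar{\yy}_t}^2}$, and substitute the one-step identity $\bar{\yy}_{t+1}-\bar{\yy}_t = -\tfrac{\gamma}{1-u}\bar{\gg}_{t+\frac{1}{2}}$ from Lemma~\ref{lemma:virtual_iterate}. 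Conditioning on the natural filtration up to step $t$ --- under which $\bar{\yy}_t$, $\bar{\xx}_{t+\frac{1}{2}}$ and the local models $\xx^k_{t+\frac{1}{2}}$ are all measurable while the step-$t$ mini-batch is fresh --- replaces $\Eb{\bar{\gg}_{t+\frac{1}{2}}}$ by $\bar{\nabla f}_{t+\frac{1}{2}} := \tfrac{1}{K}\sum_{k=1}^K\nabla f(\xx^k_{t+\frac{1}{2}})$. The feature distinguishing this from the single-worker proof is that this mean gradient is evaluated neither at $\bar{\yy}_t$ (the descent point) nor at $\bar{\xx}_{t+\frac{1}{2}}$ (the point whose gradient norm we track), so I must pay for \emph{two} consistency gaps.

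Next I would insert $\nabla f(\bar{\xx}_{t+\frac{1}{2}})$ into the inner product twice:
\begin{align*}
-\tfrac{\gamma}{1-u}\lin{\nabla f(\bar{\yy}_t),\,\bar{\nabla f}_{t+\frac{1}{2}}}
&= -\tfrac{\gamma}{1-u}\norm{\nabla f(\bar{\xx}_{t+\frac{1}{2}})}^2
- \tfrac{\gamma}{1-u}\lin{\nabla f(\bar{\xx}_{t+\frac{1}{2}}),\,\bar{\nabla f}_{t+\frac{1}{2}}-\nabla f(\bar{\xx}_{t+\frac{1}{2}})} \\
&\quad - \tfrac{\gamma}{1-u}\lin{\nabla f(\bar{\yy}_t)-\nabla f(\bar{\xx}_{t+\frac{1}{2}}),\,\bar{\nabla f}_{t+\frac{1}{2}}} \,,
\end{align*}
and control the two error inner products by (weighted) Young's inequalities, as in step $(b)$ of the Nesterov proof. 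For the first error term I would bound $\norm{\bar{\nabla f}_{t+\frac{1}{2}}-\nabla f(\bar{\xx}_{t+\frac{1}{2}})}^2 \le \tfrac{L^2}{K}\sum_k\norm{\xx^k_{t+\frac{1}{2}}-\bar{\xx}_{t+\frac{1}{2}}}^2$ (Jensen plus Assumption~\ref{assumption:smoothness}) and invoke Lemma~\ref{lemma:x_x_dist}, which produces the $\tfrac{4\hat{\gamma}^2L^2}{B}\sigma^2$ term. For the second, I would use $\norm{\nabla f(\bar{\yy}_t)-\nabla f(\bar{\xx}_{t+\frac{1}{2}})}\le L\norm{\bar{\yy}_t-\bar{\xx}_{t+\frac{1}{2}}}$, to be summed and absorbed via Lemma~\ref{lemma:y_x_dist}. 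The quadratic term $\tfrac{L}{2}\norm{\bar{\yy}_{t+1}-\bar{\yy}_t}^2 = \tfrac{L\gamma^2}{2(1-u)^2}\norm{\bar{\gg}_{t+\frac{1}{2}}}^2$ I would split via $\Eb{\norm{\bar{\gg}_{t+\frac{1}{2}}}^2}\le \tfrac{\sigma^2}{BK}+\norm{\bar{\nabla f}_{t+\frac{1}{2}}}^2$ (the variance computation already carried out inside Lemma~\ref{lemma:y_x_dist}), after which every occurrence of $\norm{\bar{\nabla f}_{t+\frac{1}{2}}}^2$ is re-expressed through $\norm{\nabla f(\bar{\xx}_{t+\frac{1}{2}})}^2$ and $\norm{\bar{\nabla f}_{t+\frac{1}{2}}-\nabla f(\bar{\xx}_{t+\frac{1}{2}})}^2$.

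Then I would take total expectations, sum over $t=0,\dots,T-1$, telescope the left side to $\tfrac{1}{T}\Eb{f(\bar{\yy}_0)-f(\bar{\yy}_T)} \le \tfrac{1}{T}\Eb{f(\xx_0)-f^\star}$ (using $\bar{\yy}_0=\bar{\xx}_0=\xx_0$ and $f(\bar{\yy}_T)\ge f^\star$), and substitute the simplified forms of Lemma~\ref{lemma:y_x_dist} --- valid precisely because $\hat{\gamma}\le\tfrac{u^2}{(1-u)^2}\gamma$ --- and Lemma~\ref{lemma:x_x_dist}. As in the Nesterov case this yields a self-referential inequality: $\tfrac{1}{T}\sum_t\norm{\nabla f(\bar{\xx}_{t+\frac{1}{2}})}^2$ reappears on the right through the $\norm{\bar{\nabla f}_{t+\frac{1}{2}}}^2$ pieces in both the smoothness term and Lemma~\ref{lemma:y_x_dist}, while the consensus error reappears through Young and Lemma~\ref{lemma:x_x_dist}. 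Collecting terms, the net coefficient of $\tfrac{1}{T}\sum_t\norm{\nabla f(\bar{\xx}_{t+\frac{1}{2}})}^2$ comes out of the form $-\tfrac{\gamma}{1-u}\bigl(1-\tfrac{L\gamma(1+3u+u^3)}{2(1-u)^2}\bigr)$ --- the extra $3u$ relative to the single-worker constant $u^3+1$ being exactly the mass added by the consensus cross-terms and the factor $4$ in Lemma~\ref{lemma:y_x_dist} --- so the stepsize condition $\gamma\le\tfrac{(1-u)^2}{L(1+3u+u^3)}$ makes it at most $-\tfrac{\gamma}{2(1-u)}$; dividing through by $\tfrac{\gamma}{2(1-u)}$ and moving the remaining $\sigma^2$ terms to the right-hand side gives the claim. (Afterwards, tuning $\gamma$ with Lemma~\ref{lemma:bound} gives Corollary~\ref{corollary:critical_minibatch_size_for_extragradient}.)

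The step I expect to be the main obstacle is the final constant bookkeeping: choosing the Young weights so that the recursion in $\tfrac{1}{T}\sum_t\norm{\nabla f(\bar{\xx}_{t+\frac{1}{2}})}^2$ closes with precisely the constant $1+3u+u^3$ under the stated stepsize bound, and so that the surviving variance pieces --- the $\tfrac{\sigma^2}{BK}$ from $\bar{\gg}_{t+\frac{1}{2}}$, the $\tfrac{\sigma^2}{BK}$ from Lemma~\ref{lemma:y_x_dist}, and the $\tfrac{\hat{\gamma}^2}{B}\sigma^2$ consensus term from Lemma~\ref{lemma:x_x_dist} --- recombine into exactly $\bigl(\tfrac{4\hat{\gamma}^2L^2}{B}+\tfrac{\gamma L(1+3u)}{(1-u)^2BK}\bigr)\sigma^2$. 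Everything else is a fairly mechanical transcription of the momentum proof.
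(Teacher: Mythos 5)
Your overall scaffolding is exactly the paper's: smoothness along the virtual sequence $\bar{\yy}_t$, Lemma~\ref{lemma:virtual_iterate} for the one-step displacement, two error decompositions around $\nabla f(\bar{\xx}_{t+\frac{1}{2}})$, Lemma~\ref{lemma:y_x_dist} for $\norm{\bar{\yy}_t - \bar{\xx}_{t+\frac{1}{2}}}^2$ and Lemma~\ref{lemma:x_x_dist} for the worker consensus, and finally a self-referential inequality closed by the stepsize constraint. The gap is in how you propose to handle the consensus cross term $\lin{\nabla f(\bar{\xx}_{t+\frac{1}{2}}),\, \bar{\nabla f}_{t+\frac{1}{2}} - \nabla f(\bar{\xx}_{t+\frac{1}{2}})}$. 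You say you would bound it by a weighted Young inequality, ``as in step $(b)$ of the Nesterov proof.'' The paper does \emph{not} use Young here: it applies the exact polarization identity $\lin{\aa,\bb}=\tfrac12(\norm{\aa}^2+\norm{\bb}^2-\norm{\aa-\bb}^2)$ directly to $\lin{\nabla f(\bar{\xx}_{t+\frac{1}{2}}),\,\bar{\nabla f}_{t+\frac{1}{2}}}$. The point of this choice is that it produces a \emph{negative} term $-\tfrac{\gamma}{2(1-u)}\norm{\bar{\nabla f}_{t+\frac{1}{2}}}^2$, which is the only negative contribution to the coefficient of $\norm{\bar{\nabla f}_{t+\frac{1}{2}}}^2$. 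All the other contributions to that coefficient --- $\tfrac{L\gamma^2 u^3}{2(1-u)^3}$ from the Young step on the $\bar{\yy}$-$\bar{\xx}$ mismatch, $\tfrac{\gamma^2 L}{2(1-u)^2}$ from the quadratic smoothness term, and $\tfrac{2u\gamma^2 L}{(1-u)^3}$ from Lemma~\ref{lemma:y_x_dist} --- are positive; it is only the $-1$ (after normalization) coming from polarization that lets the aggregate coefficient become $\tfrac{\gamma L(u^3+3u+1)}{(1-u)^2}-1\le 0$ under the stated $\gamma$ bound.

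If you instead use Young on that cross term and then, as you propose, ``re-express every occurrence of $\norm{\bar{\nabla f}_{t+\frac{1}{2}}}^2$'' through $\norm{\nabla f(\bar{\xx}_{t+\frac{1}{2}})}^2$ and the consensus error, you never obtain a negative $\norm{\bar{\nabla f}_{t+\frac{1}{2}}}^2$ coefficient. The best you can do is convert $\norm{\bar{\nabla f}_{t+\frac{1}{2}}}^2 \le (1+\beta)\norm{\nabla f(\bar{\xx}_{t+\frac{1}{2}})}^2 + (1+\beta^{-1})\norm{\bar{\nabla f}_{t+\frac{1}{2}} - \nabla f(\bar{\xx}_{t+\frac{1}{2}})}^2$, which (with $\beta=1$) doubles the effective coefficient, forcing a stepsize condition of $\gamma \lesssim \tfrac{(1-u)^2}{2L(u^3+3u+1)}$ rather than the stated one, and also inflates the $\sigma^2$ terms via the extra consensus contribution. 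So your constants would not close to the theorem as stated. The fix is small but essential: replace the Young bound on that particular inner product by the polarization identity, leaving a residual $-\tfrac{\gamma}{2(1-u)}(\norm{\nabla f(\bar{\xx}_{t+\frac{1}{2}})}^2+\norm{\bar{\nabla f}_{t+\frac{1}{2}}}^2)$ plus $+\tfrac{\gamma}{2(1-u)}\norm{\nabla f(\bar{\xx}_{t+\frac{1}{2}})-\bar{\nabla f}_{t+\frac{1}{2}}}^2$, where only the last piece needs Lemma~\ref{lemma:x_x_dist}. Everything else in your outline is consistent with the paper.
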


\begin{proof}
	We start from the standard smoothness inequality
	\begin{align*}
		\begin{split}
			\Eb{ f( \bar{\yy}_{t+1} ) - f( \bar{\yy}_{t} ) }
			&\leq \Eb{ \langle \nabla f ( \bar{\yy}_t ) ,\; \bar{\yy}_{t+1} - \bar{\yy}_t \rangle + \frac{L}{2} \norm{ \bar{\yy}_{t+1} - \bar{\yy}_{t} }^2 } \\
			&\overset{(1)}{=} \Eb{ \langle \nabla f ( \bar{\yy}_t ) ,\;  -\frac{ \gamma }{ 1 - u } \bar{\gg}_{ t + \frac{1}{2} } \rangle }
			+ \Eb{ \frac{ \gamma^2 L }{ 2 (1 - u)^2 } \norm{  \bar{\gg}_{ t + \frac{1}{2}} }^2 } \\
			&= \Eb{ \left\langle \nabla f ( \bar{\yy}_t ) - \nabla f ( \bar{\xx}_{t+\frac{1}{2}} ) + \nabla f ( \bar{\xx}_{t+\frac{1}{2}} ),\; -\frac{ \gamma }{ 1 - u } \frac{1}{K} \sum_{k=1}^K \nabla f ( \xx^k_{t + \frac{1}{2} } ) \right\rangle }
			+ \Eb{ \frac{ \gamma^2 L }{ 2 (1 - u)^2 } \norm{ \bar{\gg}_{ t + \frac{1}{2} } }^2 } \\
			&= \underbrace{ \Eb{ \left\langle \nabla f ( \bar{\yy}_t ) - \nabla f ( \bar{\xx}_{t+\frac{1}{2}} ) ,\; -\frac{ \gamma }{ 1 - u } \frac{1}{K} \sum_{k=1}^K \nabla f ( \xx^k_{t + \frac{1}{2} } ) \right\rangle } }_{(a)}
			+ \underbrace{ \Eb{ \left\langle \nabla f ( \bar{\xx}_{t+\frac{1}{2}} ) ,\; -\frac{ \gamma }{ 1 - u } \frac{1}{K} \sum_{k=1}^K \nabla f ( \xx^k_{t + \frac{1}{2} } ) \right\rangle } }_{(b)} \\
			&\qquad + \frac{ \gamma^2 L }{ 2 (1 - u)^2 } \underbrace{ \Eb{ \norm{\bar{\gg}_{ t + \frac{1}{2} } }^2 } }_{(c)} \,,
		\end{split}
	\end{align*}
	where we use Lemma~\ref{lemma:virtual_iterate} for (1).

	For (a), we have
	\begin{align*}
		\begin{split}
			&\Eb{ \left\langle \nabla f ( \bar{\yy}_t ) - \nabla f ( \bar{\xx}_{t+\frac{1}{2}} ) ,\;
			-\frac{ \gamma }{ 1 - u } \frac{1}{K} \sum_{k=1}^K \nabla f ( \xx^k_{t + \frac{1}{2} } ) \right\rangle } \\
			&\qquad \overset{(2)}{\leq}
			\Eb{ \frac{ 1 - u }{ 2 u^3 L } \norm{ \nabla f ( \bar{\yy}_t ) - \nabla f ( \bar{\xx}_{t+\frac{1}{2}} ) }^2 }
			+ \Eb{ \frac{ L \gamma^2 u^3 }{ 2 (1 - u)^3 } \norm{  \frac{1}{K} \sum_{k=1}^K \nabla f ( \xx^k_{t + \frac{1}{2} } ) }^2 } \\
			&\qquad \overset{(3)}{\leq} \Eb{ \frac{ (1 - u) L }{ 2 u^3 } \norm{ \bar{\yy}_t -  \bar{\xx}_{t+\frac{1}{2}} }^2 }
			+ \Eb{ \frac{ L \gamma^2 u^3 }{ 2 (1 - u)^3 } \norm{  \frac{1}{K} \sum_{k=1}^K \nabla f ( \xx^k_{t + \frac{1}{2} } ) }^2 } \,,
		\end{split}
	\end{align*}
	where the inequality (2) is due to the basic inequality $ \langle \aa, \bb \rangle \leq \frac{\beta}{2} \norm{\aa}^2 + \frac{1}{2\beta} \norm{\bb}^2 $
	with $ \beta = \frac{ 1 - u }{ L u^3 } $ and (3) is from Assumption~\ref{assumption:smoothness}.

	For (b), we have
	\begin{align*}
		\begin{split}
			&\Eb{
			\left\langle \nabla f ( \bar{\xx}_{t+\frac{1}{2}} ) ,\; -\frac{ \gamma }{ 1 - u } \frac{1}{K} \sum_{k=1}^K \nabla f ( \xx^k_{t + \frac{1}{2} } ) \right\rangle
			} \\
			&= -\frac{ \gamma }{ 1 - u } \Eb{
			\left\langle \nabla f ( \bar{\xx}_{t+\frac{1}{2}} ) ,\; \frac{1}{K} \sum_{k=1}^K \nabla f ( \xx^k_{t + \frac{1}{2} } ) \right\rangle
			} \\
			&\overset{(4)}{=} -\frac{ \gamma }{ 2 (1 - u ) } \Eb{
			\norm{\nabla f ( \bar{\xx}_{t+\frac{1}{2}} )}^2
			+ \norm{ \frac{1}{K} \sum_{k=1}^K \nabla f ( \xx^k_{t + \frac{1}{2} } ) }^2
			- \norm{ \nabla f ( \bar{\xx}_{t+\frac{1}{2}} ) - \frac{1}{K} \sum_{k=1}^K \nabla f ( \xx^k_{t + \frac{1}{2} } ) }^2
			} \\
			&\overset{(5)}{\leq} - \frac{\gamma}{2 (1 - u)} \Eb{
			\norm{\nabla f ( \bar{\xx}_{t+\frac{1}{2}} )}^2
			+ \norm{ \frac{1}{K} \sum_{k=1}^K \nabla f ( \xx^k_{t + \frac{1}{2} } ) }^2
			- \frac{L^2}{K} \Eb{ \sum_{k=1}^K \norm{ \bar{\xx}_{t+\frac{1}{2}} - \xx^k_{t + \frac{1}{2} } }^2 }
			} \\
			&\overset{(6)}{\leq} - \frac{ \gamma }{ 2 (1 - u) } \left(
			\Eb{ \norm{ \nabla f ( \bar{\xx}_{t+\frac{1}{2}} ) }^2 }
			+ \Eb{ \norm{ \frac{1}{K} \sum_{k=1}^K \nabla f ( \xx^k_{t + \frac{1}{2} } ) }^2 }
			- \frac{ 4 \hat{\gamma}^2 L^2 }{ B } \sigma^2
			\right) \,,
		\end{split}
	\end{align*}
	where the equality (4) comes from $\langle \aa, \bb \rangle = \frac{1}{2} \left( \norm{\aa}^2 + \norm{\bb}^2 - \norm{\aa - \bb}^2 \right)$,
	the inequality (5) is due to Assumption~\ref{assumption:smoothness}
	and (6) is from Lemma~\ref{lemma:x_x_dist}.

	For (c), by using $ \Eb{ \norm{X}^2 } = \var{X} + \norm{ \Eb{X} }^2 $ and the fact
	that $\var{\sum_i X_i} = \sum_i \var{X_i}$ if $X_i$'s are independent , we get
	\begin{align*}
		\begin{split}
			\Eb{ \norm{\bar{\gg}_{ t + \frac{1}{2} } }^2 }
			& = \Eb{ \norm{ \frac{1}{BK} \sum_{k=1}^K \sum_{i \in \cI_t^k} \gg_{ t + \frac{1}{2}, i }^k }^2 } \\
			& = \frac{1}{B^2 K^2} \sum_{k=1}^K \sum_{i \in \cI_t^k} \var{ \gg_{t + \frac{1}{2}, i}^k }
			+ \norm{ \Eb{ \frac{1}{BK} \sum_{k=1}^K \sum_{i \in \cI_t^k} \gg_{t + \frac{1}{2}, i}^k } }^2_2 \\
			&\overset{(7)}{\leq} \frac{1}{BK} \sigma^2 + \norm{ \frac{1}{K} \sum_{k=1}^K \nabla f (\xx^k_{t + \frac{1}{2}}) }_2^2 \,,
		\end{split}
	\end{align*}
	where (7) follows from Assumption~\ref{assumption:boundedvariance}.

	Thus the smoothness inequality becomes
	\begin{align*}
		\begin{split}
			\Eb{ f ( \bar{\yy}_{t+1} ) - f( \bar{\yy}_{t} ) }
			&\leq \frac{ (1 - u) L }{ 2 u^3 } \Eb{ \norm{ \bar{\yy}_t - \bar{\xx}_{t+\frac{1}{2}} }^2 }
			+ \left( \frac{ L \gamma^2 u^3 }{ 2 (1 - u)^3 } - \frac{ \gamma }{ 2 (1 - u) } \right)
			\Eb{ \norm{ \frac{1}{K} \sum_{k=1}^K \nabla f ( \xx^k_{t + \frac{1}{2} } ) }^2 } \\
			&\qquad - \frac{ \gamma }{ 2 (1 - u) }
			\Eb{ \norm{ \nabla f ( \bar{\xx}_{t+\frac{1}{2}} ) }^2 }
			+ \frac{ \gamma }{ 2 (1 - u) } \frac{ 4 \hat{\gamma}^2 L^2 }{ B } \sigma^2
			+ \frac{ \gamma^2 L }{ 2 (1 - u)^2 } \left( \frac{1}{BK} \sigma^2 + \Eb{ \norm{ \frac{1}{K} \sum_{k=1}^K \nabla f (\xx^k_{t + \frac{1}{2}}) }^2 } \right) \\
			&= \frac{ (1 - u) L }{ 2 u^3 } \Eb{ \norm{ \bar{\yy}_t - \bar{\xx}_{t+\frac{1}{2}} }^2 }
			+ \left( \frac{ L \gamma^2 u^3 }{ 2 (1 - u)^3 } - \frac{ \gamma }{ 2 (1 - u) } + \frac{ \gamma^2 L }{ 2 (1 - u)^2 } \right)
			\Eb{ \norm{ \frac{1}{K} \sum_{k=1}^K \nabla f ( \xx^k_{t + \frac{1}{2} } ) }^2 } \\
			&\qquad - \frac{ \gamma }{ 2 (1 - u) }
			\Eb{ \norm{ \nabla f ( \bar{\xx}_{t+\frac{1}{2}} ) }^2 }
			+ \left( \frac{ \gamma }{ 2 (1 - u) } \frac{ 4 \hat{\gamma}^2 L^2 }{ B } + \frac{ \gamma^2 L }{ 2 (1 - u)^2 } \frac{1}{BK} \right) \sigma^2 \,.
		\end{split}
	\end{align*}
	By rearranging, summing over $t$, and diving both sides by $ \frac{ \gamma }{ 2 ( 1 - u ) } $, we have
	\begin{align*}
		\begin{split}
			\Eb{ \sum_{t=0}^{T-1} \norm{ \nabla f ( \bar{\xx}_{t+\frac{1}{2}} ) }^2 }
			&\leq \frac{2 (1 - u)}{\gamma} \Eb{ f( \bar{\yy}_{0} ) - f ( \bar{\yy}_{T} ) }
			+ \frac{ (1 - u)^2 L }{ \gamma u^3 } \Eb{ \sum_{t=0}^{T-1} \norm{ \bar{\yy}_t -  \bar{\xx}_{t+\frac{1}{2}} }^2 } \\
			&\qquad + \left( \frac{ \gamma L u^3 + \gamma L (1 - u) }{(1-u)^2} - 1 \right)
			\sum_{t=0}^{T-1} \Eb{ \norm{ \frac{1}{K} \sum_{k=1}^K \nabla f ( \xx^k_{t + \frac{1}{2} } ) }^2 }
			+ \left( \frac{ 4 \hat{\gamma}^2 L^2 }{ B } + \frac{ \gamma L }{ 1 - u } \frac{1}{BK} \right) T \sigma^2 \,.
		\end{split}
	\end{align*}

	Through Lemma~\ref{lemma:y_x_dist}, we have
	\begin{align*}
		\Eb{ \sum_{t=0}^{T-1} \norm{ \bar{\yy}_t - \bar{\xx}_{t+\frac{1}{2}} }^2 }
		\leq \Eb{ \frac{ 4 u^4 \gamma^2 }{ ( 1 - u )^4 } \sum_{t=0}^{T-1} \norm{ \bar{\gg}_{ t + \frac{1}{2}} }^2 }
		\leq \frac{ 4 u^4 \gamma^2 }{ ( 1 - u )^4 } \frac{1}{BK} T \sigma^2
		+ \frac{ 4 u^4 \gamma^2 }{ ( 1 - u )^4 } \sum_{t=0}^{T-1} \Eb{ \norm{ \frac{1}{K} \sum_{k=1}^K \nabla f (\xx^k_{t + \frac{1}{2}}) }^2 } \,,
	\end{align*}
	thus,
	\begin{align*}
		\begin{split}
			\Eb{ \sum_{t=0}^{T-1} \norm{ \nabla f ( \bar{\xx}_{t+\frac{1}{2}} ) }^2 }
			&\leq \frac{2 (1 - u)}{\gamma} \Eb{ f( \bar{\yy}_{0} ) - f ( \bar{\yy}_{T} ) }
			+ \frac{ (1 - u)^2 L }{ \gamma u^3 } \left(
			\frac{ 4 u^4 \gamma^2 }{ ( 1 - u )^4 } \frac{1}{BK} T \sigma^2
			+ \frac{ 4 u^4 \gamma^2 }{ ( 1 - u )^4 } \sum_{t=0}^{T-1} \norm{ \frac{1}{K} \sum_{k=1}^K \nabla f (\xx^k_{t + \frac{1}{2}}) }_2^2
			\right) \\
			&\qquad + \left( \frac{ \gamma L u^3 + \gamma L (1 - u) }{(1-u)^2} - 1 \right)
			\sum_{t=0}^{T-1} \norm{ \frac{1}{K} \sum_{k=1}^K \nabla f ( \xx^k_{t + \frac{1}{2} } ) }^2
			+ \left( \frac{ 4 \hat{\gamma}^2 L^2 }{ B } + \frac{ \gamma L }{ 1 - u } \frac{1}{BK} \right) T \sigma^2 \\
			&\leq \frac{2 (1 - u)}{\gamma} \Eb{ f( \bar{\yy}_{0} ) - f ( \bar{\yy}_{T} ) } \\
			&\qquad + \left( \frac{ \gamma L u^3 + \gamma L (1 - u) }{(1-u)^2} + \frac{ 4 u \gamma L }{ ( 1 - u )^2 } - 1 \right)
			\sum_{t=0}^{T-1} \norm{ \frac{1}{K} \sum_{k=1}^K \nabla f ( \xx^k_{t + \frac{1}{2} } ) }^2 \\
			&\qquad+ \left( \frac{ 4 \hat{\gamma}^2 L^2 }{ B } + \frac{ \gamma L }{ 1 - u } \frac{1}{BK}
			+ \frac{ 4 u \gamma L }{ ( 1 - u )^2 } \frac{1}{BK} \right) T \sigma^2 \\
			&\leq \frac{2 (1 - u)}{\gamma} \Eb{ f( \bar{\yy}_{0} ) - f ( \bar{\yy}_{T} ) } \\
			&\qquad + \left( \frac{ \gamma L \left( u^3 + 3u + 1 \right) }{(1-u)^2} - 1 \right)
			\sum_{t=0}^{T-1} \norm{ \frac{1}{K} \sum_{k=1}^K \nabla f ( \xx^k_{t + \frac{1}{2} } ) }^2 \\
			&\qquad + \left( \frac{ 4 \hat{\gamma}^2 L^2 }{ B } + \frac{ \gamma L (1 + 3 u) }{ (1 - u)^2 B K }
			\right) T \sigma^2 \,.
		\end{split}
	\end{align*}

	Dividing both sides by $ T $, we have
	\begin{align*}
		\begin{split}
			&\Eb{ \frac{1}{T} \sum_{t=0}^{T-1} \norm{ \nabla f ( \bar{\xx}_{t+\frac{1}{2}} ) }^2 } \\
			&\leq \frac{2 (1 - u)}{\gamma T} \Eb{ f( \bar{\yy}_{0} ) - f ( \bar{\yy}_{T} ) }
			+ \frac{1}{T}
			\left( \frac{ \gamma L \left( u^3 + 3u + 1 \right) }{(1-u)^2} - 1 \right)
			\sum_{t=0}^{T-1} \norm{ \frac{1}{K} \sum_{k=1}^K \nabla f ( \xx^k_{t + \frac{1}{2} } ) }^2
			+ \left( \frac{ 4 \hat{\gamma}^2 L^2 }{ B } + \frac{ \gamma L (1 + 3 u) }{ (1 - u)^2 B K }
			\right) \sigma^2 \, .
		\end{split}
	\end{align*}
	Further we choose $ \gamma $ to ensure the coefficient of the gradient norm on the RHS nonpositive
	(i.e., $\frac{ L \gamma ( 1 + 3 u + u^3 ) }{ (1 - u)^2 } - 1 \leq 0 $,
	which implies $ \gamma \leq \frac{ ( 1 - u )^2 }{ L ( 1 + 3 u + u^3 ) } $.
	As a result, we can conclude that
	\begin{align*}
		\begin{split}
			\Eb{ \frac{1}{T} \sum_{t=0}^{T-1} \norm{ \nabla f ( \bar{\xx}_{t+\frac{1}{2}} ) }^2 }
			\leq \frac{2 (1 - u)}{\gamma T} \Eb{ f( \bar{\yy}_{0} ) - f ( \bar{\yy}_{T} ) }
			+ \left( \frac{ 4 \hat{\gamma}^2 L^2 }{ B } + \frac{ \gamma L (1 + 3 u) }{ (1 - u)^2 B K } \right) \sigma^2 \,.
		\end{split}
	\end{align*}
\end{proof}

\begin{theorem} \label{thm:extrasgd_nonconvex_convergence_noise_}
	Under the extrapolation framework,
	if we use i.i.d. noise, i.e.
	$ \frac{ 1 }{ b } \sum_{i \in \hat{\cI}_t^k} \xiv_{t, i}^k  =
		\begin{cases}
			\0           & \mathrm{if} \; t = 0   \\
			\zetav_{t}^k & \mathrm{if} \; t \ge 1
		\end{cases}
		\, ,
	$
	with $ \zetav_t^k $ being i.i.d., $ \Eb{ \zetav_t^k } = 0 $ and $ \Eb{ \norm{ \zetav^k_t }^2 } \leq \hat{\sigma}^2 $.
	Under the same assumption as in Theorem~\ref{thm:extrasgd_nonconvex_convergence},
	we have the following convergence rate for a non-convex function:
	\begin{align*}
		\begin{split}
			\Eb{ \frac{1}{T} \sum_{t=0}^{T-1} \norm{ \nabla f ( \bar{\xx}_{t+\frac{1}{2}} ) }^2 }
			\leq \frac{2 (1 - u)}{\gamma T} \Eb{ f( \bar{\yy}_{0} ) - f ( \bar{\yy}_{T} ) }
			+ \frac{ \gamma L ( 1 + u ) }{ (1 - u)^2 BK } \hat{\sigma}^2
			+ (L^2 + \frac{ (1 - u)^2 L }{ \gamma u^3 K })2 \hat{\gamma}^2 T \hat{\sigma}^2 \,,
		\end{split}
	\end{align*}
	where $ \gamma \leq \frac{ ( 1 - u )^2 }{ L ( 1 + u + u^3 ) } $.
\end{theorem}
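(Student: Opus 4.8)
The plan is to run the argument of the proof of Theorem~\ref{thm:extrasgd_nonconvex_convergence_} essentially unchanged, but with the ``past local gradient'' extrapolation $\xiv_{t,i}^k=\gg_{t-\frac12}^k$ replaced throughout by the fresh i.i.d.\ noise $\frac{1}{b}\sum_{i\in\hat\cI_t^k}\xiv_{t,i}^k=\zetav_t^k$, and with the i.i.d.-noise branches of Lemmas~\ref{lemma:y_x_dist} and~\ref{lemma:x_x_dist} invoked in place of the past-gradient branches. Concretely I keep $\bar\xx_{t+\frac12}=\frac1K\sum_k\xx^k_{t+\frac12}$ and the virtual sequence $\bar\yy_t$ from~\eqref{eq:extrap_auxiliary_sequence} with $\bar\xiv_t=\frac1K\sum_k\zetav_t^k$; Lemma~\ref{lemma:virtual_iterate} still gives $\bar\yy_{t+1}-\bar\yy_t=-\frac{\gamma}{1-u}\bar\gg_{t+\frac12}$, so the smoothness inequality $\Eb{f(\bar\yy_{t+1})-f(\bar\yy_t)}\le\Eb{\langle\nabla f(\bar\yy_t),\,\bar\yy_{t+1}-\bar\yy_t\rangle}+\frac L2\Eb{\norm{\bar\yy_{t+1}-\bar\yy_t}^2}$ splits verbatim into the same three pieces (a), (b), (c) as in that proof.

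First I would bound (a) by Young's inequality with parameter $\beta=\tfrac{1-u}{Lu^3}$ followed by Assumption~\ref{assumption:smoothness}, giving $\tfrac{(1-u)L}{2u^3}\norm{\bar\yy_t-\bar\xx_{t+\frac12}}^2+\tfrac{L\gamma^2u^3}{2(1-u)^3}\norm{\tfrac1K\sum_k\nabla f(\xx^k_{t+\frac12})}^2$. For (b) I would apply the polarization identity, bound the cross term by $\norm{\nabla f(\bar\xx_{t+\frac12})-\tfrac1K\sum_k\nabla f(\xx^k_{t+\frac12})}^2\le\frac{L^2}{K}\sum_k\norm{\bar\xx_{t+\frac12}-\xx^k_{t+\frac12}}^2$ and then invoke the i.i.d.-noise case of Lemma~\ref{lemma:x_x_dist}, namely $\frac1K\sum_k\Eb{\norm{\bar\xx_{t+\frac12}-\xx^k_{t+\frac12}}^2}\le 2\hat\gamma^2\hat\sigma^2$, so that (b) is at most $-\tfrac{\gamma}{2(1-u)}\bigl(\Eb{\norm{\nabla f(\bar\xx_{t+\frac12})}^2}+\Eb{\norm{\tfrac1K\sum_k\nabla f(\xx^k_{t+\frac12})}^2}-2L^2\hat\gamma^2\hat\sigma^2\bigr)$. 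For (c) I would use $\Eb{\norm{X}^2}=\var{X}+\norm{\Eb{X}}^2$ together with the independence of the local samples and Assumption~\ref{assumption:boundedvariance} to get $\Eb{\norm{\bar\gg_{t+\frac12}}^2}\le\frac{\sigma^2}{BK}+\norm{\tfrac1K\sum_k\nabla f(\xx^k_{t+\frac12})}^2$.

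Summing the resulting one-step bound over $t=0,\dots,T-1$, rearranging to isolate $\sum_t\Eb{\norm{\nabla f(\bar\xx_{t+\frac12})}^2}$, and dividing by $\tfrac{\gamma}{2(1-u)}$, I am left with the potential term $\tfrac{2(1-u)}{\gamma}\Eb{f(\bar\yy_0)-f(\bar\yy_T)}$, the term $\tfrac{(1-u)^2L}{\gamma u^3}\Eb{\sum_t\norm{\bar\yy_t-\bar\xx_{t+\frac12}}^2}$, an accumulated $\sum_t\norm{\tfrac1K\sum_k\nabla f(\xx^k_{t+\frac12})}^2$, and the variance/noise remainders. I then apply the i.i.d.-noise case of Lemma~\ref{lemma:y_x_dist} with $\beta=1$: its $\bar\gg$-part pushes $\tfrac{2u\gamma L}{(1-u)^2}$ times $\bigl(\tfrac{\sigma^2}{BK}+\norm{\tfrac1K\sum_k\nabla f(\xx^k_{t+\frac12})}^2\bigr)$ back into the bound, and its $\bar\xiv$-part contributes $\tfrac{(1-u)^2L}{\gamma u^3}\cdot2\hat\gamma^2\Eb{\sum_t\norm{\bar\xiv_t}^2}$, which since $\Eb{\norm{\bar\xiv_t}^2}=\tfrac1{K^2}\sum_k\Eb{\norm{\zetav_t^k}^2}\le\tfrac{\hat\sigma^2}{K}$ is at most $\tfrac{2(1-u)^2L\hat\gamma^2 T\hat\sigma^2}{\gamma u^3 K}$. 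Collecting the coefficient multiplying $\sum_t\norm{\tfrac1K\sum_k\nabla f(\xx^k_{t+\frac12})}^2$ yields $\tfrac{\gamma L(1+u+u^3)}{(1-u)^2}-1$, which is nonpositive exactly under the hypothesis $\gamma\le\tfrac{(1-u)^2}{L(1+u+u^3)}$, so that whole term is discarded; the leftover variance coefficient consolidates to $\tfrac{\gamma L(1+u)}{(1-u)^2BK}\sigma^2$ and the leftover noise contributions to $\bigl(L^2+\tfrac{(1-u)^2L}{\gamma u^3K}\bigr)2\hat\gamma^2 T\hat\sigma^2$, and normalizing by $T$ yields the stated inequality.

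The proof is thus structurally identical to that of Theorem~\ref{thm:extrasgd_nonconvex_convergence_}; the only genuinely different — and only mildly delicate — point is that $\bar\xiv_t$ is now fresh zero-mean noise that is uncorrelated with the optimization trajectory, so, unlike $\bar\gg_{t-\frac12}$, it cannot be absorbed into the telescoping gradient-norm terms. Its effect therefore surfaces as the non-vanishing bias of order $\hat\gamma^2 T\hat\sigma^2$, which is precisely the obstruction that prevents reading off a clean convergence rate or a critical mini-batch size here and that forces $\hat\gamma$ to be tuned down (e.g.\ $\hat\gamma=O(1/\sqrt T)$). I would expect the only real effort to be bookkeeping: tracking the various $(1-u)$, $u^3$ and $L\gamma$ factors through (a)--(c) and Lemma~\ref{lemma:y_x_dist} so that the coefficient of the inter-worker disagreement term lands exactly at $\tfrac{\gamma L(1+u+u^3)}{(1-u)^2}-1$, which is what pins down the step-size constraint.
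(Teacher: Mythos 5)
Your proposal is correct and follows the same route the paper takes: you keep the virtual sequence and Lemma~\ref{lemma:virtual_iterate}, bound the pieces (a), (b), (c) exactly as in the proof of Theorem~\ref{thm:extrasgd_nonconvex_convergence_}, replace the past-gradient branches of Lemmas~\ref{lemma:x_x_dist} and \ref{lemma:y_x_dist} by their i.i.d.-noise branches ($2\hat\gamma^2\hat\sigma^2$ and $\tfrac{2\hat\gamma^2 T}{K}\hat\sigma^2$ respectively), and collect coefficients to land on $\tfrac{\gamma L(1+u+u^3)}{(1-u)^2}-1$, giving the same step-size constraint and final bound. One small note: your derivation (correctly, and in agreement with the paper's own appendix proof) produces $\tfrac{\gamma L(1+u)}{(1-u)^2BK}\sigma^2$ for the middle term, whereas the theorem statement writes $\hat\sigma^2$ there — that appears to be a typo in the statement, not an error in your argument.
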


\begin{proof}
	Following similar procedures in the proof of Theorem~\ref{thm:extrasgd_nonconvex_convergence_},
	we have
	\begin{align*}
		\begin{split}
			\Eb{ \sum_{t=0}^{T-1} \norm{ \nabla f ( \bar{\xx}_{t+\frac{1}{2}} ) }^2 }
			&\leq \frac{2 (1 - u)}{\gamma} \Eb{ f( \bar{\yy}_{0} ) - f ( \bar{\yy}_{T} ) }
			+ \frac{ (1 - u)^2 L }{ \gamma u^3 } \Eb{ \sum_{t=0}^{T-1} \norm{ \bar{\yy}_t -  \bar{\xx}_{t+\frac{1}{2}} }^2 } \\
			&\qquad + \left( \frac{ \gamma L u^3 + \gamma L (1 - u) }{(1-u)^2} - 1 \right)
			\sum_{t=0}^{T-1} \norm{ \frac{1}{K} \sum_{k=1}^K \nabla f ( \xx^k_{t + \frac{1}{2} } ) }^2
			+ 2 \hat{\gamma}^2 L^2 T \hat{\sigma}^2 + \frac{ \gamma L }{ 1 - u } \frac{1}{BK} T \sigma^2  \,.
		\end{split}
	\end{align*}
	Simplifying Lemma~\ref{lemma:y_x_dist} by choosing $ \beta = 1 $ and
	$ \bar{\xiv}_t = \norm{ \frac{1}{K} \sum_{k=1}^K \zetav_t^k }^2 $ gives:
	\begin{align*}
		\begin{split}
			\Eb{ \sum_{t=0}^{T-1} \norm{ \yy_t - \xx_{t+\frac{1}{2}} }^2 }
			& \leq \frac{ 2 u^4 \gamma^2 }{ ( 1 - u )^4 } \Eb{ \sum_{t=0}^{T-1} \norm{ \bar{\gg}_{ t + \frac{1}{2}} }^2} + \frac{2 \hat{\gamma}^2 T}{K} \hat{\sigma}^2 \\
			& \leq \frac{2 \hat{\gamma}^2 T}{K} \hat{\sigma}^2
			+\frac{ 2 u^4 \gamma^2 }{ ( 1 - u )^4 } \frac{1}{BK} T \sigma^2
			+ \frac{ 2 u^4 \gamma^2 }{ ( 1 - u )^4 } \sum_{t=0}^{T-1} \norm{ \frac{1}{K} \sum_{k=1}^K \nabla f (\xx^k_{t + \frac{1}{2}}) }^2 \,,
		\end{split}
	\end{align*}
	we have the main proof as follows:
	\begin{align*}
		\begin{split}
			&\Eb{ \sum_{t=0}^{T-1} \norm{ \nabla f ( \bar{\xx}_{t+\frac{1}{2}} ) }^2 } \\
			&\leq \frac{2 (1 - u)}{\gamma} \Eb{ f( \bar{\yy}_{0} ) - f ( \bar{\yy}_{T} ) }
			+ \frac{ (1 - u)^2 L }{ \gamma u^3 } \left(
			\frac{2 \hat{\gamma}^2 T}{K} \hat{\sigma}^2
			+\frac{ 2 u^4 \gamma^2 }{ ( 1 - u )^4 } \frac{1}{BK} T \sigma^2
			+ \frac{ 2 u^4 \gamma^2 }{ ( 1 - u )^4 } \sum_{t=0}^{T-1} \norm{ \frac{1}{K} \sum_{k=1}^K \nabla f (\xx^k_{t + \frac{1}{2}}) }^2
			\right) \\
			&\qquad + \left( \frac{ \gamma L u^3 + \gamma L (1 - u) }{(1-u)^2} - 1 \right)
			\sum_{t=0}^{T-1} \norm{ \frac{1}{K} \sum_{k=1}^K \nabla f ( \xx^k_{t + \frac{1}{2} } ) }^2
			+ \left( \frac{ \gamma L }{ 1 - u } \frac{1}{BK} \right) T \sigma^2
			+ 2 \hat{\gamma}^2 L^2 T \hat{\sigma}^2 \\
			&\leq \frac{2 (1 - u)}{\gamma} \Eb{ f( \bar{\yy}_{0} ) - f ( \bar{\yy}_{T} ) }
			+ \left( \frac{ \gamma L u^3 + \gamma L (1 - u) }{(1-u)^2} + \frac{ 2 u \gamma L }{ ( 1 - u )^2 } - 1 \right)
			\sum_{t=0}^{T-1} \norm{ \frac{1}{K} \sum_{k=1}^K \nabla f ( \xx^k_{t + \frac{1}{2} } ) }^2 \\
			&\qquad
			+ \left( \frac{ \gamma L }{ 1 - u }
			+ \frac{ 2 u \gamma L }{ ( 1 - u )^2 }  \right) \frac{T}{BK} \sigma^2
			+ (L^2 + \frac{ (1 - u)^2 L }{ \gamma u^3 K })2 \hat{\gamma}^2 T \hat{\sigma}^2 \\
			&\leq \frac{2 (1 - u)}{\gamma} \Eb{ f( \bar{\yy}_{0} ) - f ( \bar{\yy}_{T} ) } \\
			&\qquad + \left( \frac{ \gamma L \left( u^3 + u + 1 \right) }{(1-u)^2} - 1 \right)
			\sum_{t=0}^{T-1} \norm{ \frac{1}{K} \sum_{k=1}^K \nabla f ( \xx^k_{t + \frac{1}{2} } ) }^2
			+ \frac{ \gamma L ( 1 + u ) }{ (1 - u)^2 BK } T \sigma^2
			+ (L^2 + \frac{ (1 - u)^2 L }{ \gamma u^3 K })2 \hat{\gamma}^2 T \hat{\sigma}^2 \\
		\end{split}
	\end{align*}

	Dividing both sides by $ T $ and choosing $ \gamma $ such that $\frac{ L \gamma ( 1 + u + u^3 ) }{ (1 - u)^2 } - 1 < 0$
	yields:
	\begin{align*}
		\begin{split}
			\Eb{ \frac{1}{T} \sum_{t=0}^{T-1} \norm{ \nabla f ( \bar{\xx}_{t+\frac{1}{2}} ) }^2 }
			\leq \frac{2 (1 - u)}{\gamma T} \Eb{ f( \bar{\yy}_{0} ) - f ( \bar{\yy}_{T} ) }
			+ \frac{ \gamma L ( 1 + u ) }{ (1 - u)^2 BK } \sigma^2
			+ (L^2 + \frac{ (1 - u)^2 L }{ \gamma u^3 K })2 \hat{\gamma}^2 T \hat{\sigma}^2
		\end{split}
	\end{align*}
	where we have $ \gamma \leq \frac{ ( 1 - u )^2 }{ L ( 1 + u + u^3 ) } $.
\end{proof}

\subsection{Proof of Corollary~\ref{corollary:critical_minibatch_size_for_extragradient}}
\begin{proof}
	Following the proof of Corollary~\ref{corollary:critical_minibatch_size_for_nesterov_momentum},  we define $ r_0 := f ( \xx_{0} ) - f^\star$
	and $\Psi_T' := \frac{2}{T \frac{\gamma}{1-u} } r_0  + \left( \frac{ 4 \hat{\gamma}^2 L^2 }{ B } + \frac{ \gamma L (1 + 3 u) }{ (1 - u)^2 B K } \right) \sigma^2$.
	\begin{talign*}
		\begin{split}
			& \Psi_T' \overset{(a)}{\leq} \frac{2}{ T \frac{\gamma}{1-u} } r_0  + \left( \frac{ 4 u^2 \gamma^2  L^2}{ BK ( 1 - u )^2 } + \frac{ \gamma L (1 + 3 u) }{ (1 - u)^2 B K } \right) \sigma^2 \\
			&\, \overset{(b)}{\leq} \frac{2}{ T \frac{\gamma}{1-u} } r_0  + \left( \frac{ 4 u^2 \gamma L }{ BK ( 1 - u )^2 } \frac{ ( 1 - u )^2 }{ u^3 + 3 u + 1 } + \frac{ \gamma L (1 + 3 u) }{ (1 - u)^2 B K } \right) \sigma^2 \\
			&\, = \frac{2}{ T \frac{\gamma}{1-u} } r_0  + \frac{ \gamma L \sigma^2 }{ B K } \left( \frac{4 u^2}{ u^3 + 3 u + 1 } + \frac{ 1 + 3 u }{ ( 1 - u )^2 } \right) \\
			&\, = \frac{2}{ T \frac{\gamma}{1-u} } r_0  + \frac{ \gamma L \sigma^2 }{ B K } \frac{ 7 u^4 - 7 u^3 + 13 u^2 + 6 u + 1 }{ ( u^3 + 3 u + 1 ) ( 1 - u )^2 } \\
			&\, \overset{(c)}{\leq} \frac{2}{ T \frac{\gamma}{1-u} } r_0  + \frac{ \gamma L \sigma^2 }{ B K } \frac{ 19 u + 1 }{ ( u^3 + 3 u + 1 ) ( 1 - u )^2 } \,,
		\end{split}
	\end{talign*}
	where (a) follows from $ \hat{ \gamma }^2 \leq \frac{ u^2 }{ K ( 1 - u )^2 } \gamma^2 $
	( because of $\hat{\gamma} \leq \frac{ \gamma }{ K }$ and $\hat{\gamma} \leq \frac{ u^2 }{ ( 1 - u )^2 } \gamma$),
	(b) is from $ \gamma L \leq \frac{ ( 1 - u )^2 }{ 1 + 3 u + u^3 } $
	and (c) comes from $u^4 \leq u^3 \leq u^2 \leq u$.

	Similarly, we could choose two values of $\gamma$ as the minimizer,
	where $\gamma = \sqrt{ \frac{ 2 r_0 K B }{L \sigma^2 T } \frac{ ( u^3 + 3 u + 1 ) ( 1 - u )^3 }{ 19 u + 1 } } $
	and $ \gamma = \frac{ ( 1 - u )^2 }{ L ( 1 + 3 u + u^3 ) } $.
	Thus, by considering two cases as in Lemma~\ref{lemma:bound}, we have
	$\Psi_T' \leq \frac{4 L r_0 (u^3 + 3 u + 1)}{T (1-u)} + 2 \sqrt{ \frac{2 L r_0 \sigma^2 ( 19 u + 1 ) }{ KBT ( u^3 + 3 u + 1 ) ( 1 - u ) } } $.
\end{proof}

\part{Experiments}
\section{Detailed experimental setup} \label{appendix:detailed_experiment_setup}
\paragraph{Datasets.} We evaluate all methods on the following two tasks:
(1) Image classification for CIFAR-10/100~\citep{krizhevsky2009learning} ($50$K training samples and $10$K test samples with $10/100$ classes)
with the standard data augmentation and preprocessing scheme~\citep{he2016deep,huang2016deep};
(2) Language modeling for WikiText2~\citep{merity2016pointer}
(the vocabulary size is $33$K, and its train and validation set have $2$ million tokens and $217$K tokens respectively);
and (3) Neural Machine Translation for Multi30k~\citep{elliott2016multi30k}.

\paragraph{Large-batch training in practice.}
We detail the SOTA large-batch training techniques used in our experiment evaluation:
\begin{itemize}
	\item \textbf{Better Optimization:}

	      \citet{goyal2017accurate} from optimization aspect propose to linearly scale the learning rate with a few epochs warmup.
	      (1) multiply the learning rate by $K$ when the mini-batch size is multiplied by $K$;
	      (2) warmup the learning rate from $\gamma$ to $K \gamma$ through $H$ epochs, where
	      the incremental learning rate for each iteration is calculated from $\frac{K \gamma - \gamma}{H N / (KB) }$.
	      Note that $N$ is the number of total training samples and $B$ is the local mini-batch size.

	      LARS~\citep{you2017large,You2020Large} argue the layerwise difference in the weight magnitude
	      and propose to scale the gradient of each layer accordingly for better optimization.
	      The scaled gradient for $j$-th layer of $\xx_t$ at $i$-th sample follows
	      $ \nabla f_{i, j} (\xx_t) \times \left( \tilde{\gamma} \times \frac{ \norm{\xx_{t, j}} }{ \norm{ \nabla f_{i, j} (\xx_t) } + \lambda \norm{\xx_{t, j}} } \right)$,
	      where $\tilde{\gamma}$ defines how much we trust the layer to change its weights during one update.

	\item \textbf{Better Generalization:}

	      On top of these optimization techniques~\citep{goyal2017accurate,you2017large}, Post-local SGD~\citep{lin2020dont}
	      propose to further inject stochastic noise when converging to the local minima, targeting better generalization.
	      The stochastic noise is introduced by performing local SGD updates.
\end{itemize}

\paragraph{Models and training schemes.}
Three models are used in our experimental evaluation.
(1) ResNet-20~\citep{he2016deep} and VGG-11\footnote{
	Due to the resource constraints (GPU memory bound), we down-scaled the original VGG-11 by reducing the number of filters by the factor of $2$.
}~\citep{simonyan2014very} on CIFAR for image classification,
(2) two-layer LSTM\footnote{
	We borrowed and adapted the general experimental setup of~\citet{merity2017regularizing}.
	The gradient clip magnitude is $0.4$, and dropout rate is $0.40$.
	The loss is averaged over all examples and timesteps.
}~\citep{merity2017regularizing} with hidden dimension of size $128$ on WikiText-2 for language modeling,
and (3) a down-scaled transformer (factor of $2$ w.r.t. the transformer base model in~\citet{vaswani2017attention}) for neural machine translation.
Weight initialization schemes for the three models follow~\citet{goyal2017accurate,he2015delving},~\citet{merity2017regularizing} and~\citet{vaswani2017attention} respectively.

We use mini-batch SGD with a Nesterov momentum of $0.9$ without dampening for image classification and language modeling tasks,
and Adam for neural machine translation tasks.
Unless mentioned otherwise in the following experiment section,
the term ``mini-batch SGD'' indicates the mini-batch SGD with Nesterov momentum.

For experiments on image classification and language modeling, the models are trained for $300$ epochs;
the local mini-batch sizes are set to $256$ and $64$ respectively.
By default all related experiments will use learning rate scaling and warmup scheme\footnote{
	Since we will fine-tune the (to be scaled) learning rate,
	there is no difference between learning rate linear scaling~\citep{goyal2017accurate} and square root scaling~\citep{hoffer2017train} in our case.
}~\citep{goyal2017accurate,hoffer2017train}.
The learning rate will gradually warm up from a relative small value
(e.g. $0.1$) for the first few epochs.
The weight decay of ResNet-20 and LSTM are $1e$-$4$~\citep{he2016deep} and $0$~\citep{merity2017regularizing} respectively.
For the Batch Normalization (BN)~\citep{ioffe2015batch} for distributed training we follow~\citet{goyal2017accurate} and compute the BN statistics independently for each worker;
we also do not apply weight decay on the learnable BN coefficients~\citep{he2016deep}.
In addition,
the learning rate $\gamma$ in image classification task will be dropped by a factor of $10$
when the model has accessed $50\%$ and $75\%$ of the total number of training samples~\citep{he2016deep,huang2016densely}.
The LARS is only applied on image classification task\footnote{
	Our implementation relies on the PyTorch extension of \href{https://github.com/NVIDIA/apex}{NVIDIA apex}
	for mixed precision and distributed training.
}~\citep{you2017large,You2020Large}.

For experiments on neural machine translation, we use standard inverse square root learning rate schedule as in~\citet{vaswani2017attention}.
The warmup step is set to $4000$ for mini-batch size of $64$ and will be linearly scaled down by the global mini-batch size\footnote{
	We follow an \href{https://github.com/NVIDIA/DeepLearningExamples/tree/master/PyTorch/Translation/Transformer}{instruction} from NVIDIA.
}.
Other hyper-parameters follow the default setting in~\citet{vaswani2017attention}.
The first and second moment factor of the adam are set to $0.90$ and $0.98$ respectively, with the epsilon $10^{-9}$.
The values of dropout rate and the label smoothing factor are set to $0.1$. The weight decay factor is set to $0$.

\paragraph{Implementation and platform.}
Our algorithms are implemented in PyTorch\footnote{
	Our code is included in the submission for reproducibility.
}~\citep{paszke2017automatic} and the distributed training is supported by MPI and Kubernetes.

\subsection{Hyper-parameter tuning procedure and the corresponding values} \label{appendix:hyperparameter_values}
We carefully tune the learning rate,
the trust term $\tilde{\gamma}$ in~\citet{you2017large}
and our extrapolation term $\hat{\gamma}$ for each experimental setup.
For example, for ResNet-20 on CIFAR-10 we tune the optimal unscaled learning rate (i.e. $\gamma / K$) for a fixed mini-batch size $B$
in the range of $\{ 0.05, 0.10, 0.15, 0.20 \}$ and then linearly scale (and warmup) the learning rate by the factor of $K$.
The $\tilde{\gamma}$ is initially searched within $\{ 0.01, 0.02, 0.03 \}$,
where $0.02$ is the default hyperparameter used in \href{https://github.com/NVIDIA/apex}{NVIDIA apex}.
The extrapolation term $\hat{\gamma}$ for~\algopt and~\algoptadam is tuned by scaling the $\frac{\gamma}{K}$ with the factor searched from $\{ 1, 2, 4 \}$;
we perform extensive hyper-parameter tuning for the noise variants of the extrapolated SGD, where the scaling factor of $\frac{\gamma}{K}$ is searched from $\{ 0.1, 0.25, 0.5, 1, 2, 4 \}$.
The tuning procedure of the hyper-parameters ensures that the best hyper-parameter lies in the middle of our search grids;
otherwise we extend our search grid.

\section{Algorithmic details}
\subsection{\algopt with Post-local SGD} \label{appendix:local_sgd}
We combine \algopt with post-local SGD in Algorithm~\ref{algo:our_main_algo_postlocal_variant}.
We omit the extrapolation step in line $3$ and line $8$ when $t \!=\! 0$.
The original form of post-local SGD refers to~\citet{lin2020dont}.

\begin{algorithm}[!]\footnotesize
	\caption{\small\itshape {\algopt integrated with post-local SGD.}}
	\label{algo:our_main_algo_postlocal_variant}
	\begin{algorithmic}[1]
		\REQUIRE learning rate $\gamma$, inner learning rate $\hat{\gamma}$, momentum factor $u$,
		initial parameter $\xx_0$, initial moment vector $\vv_0 = 0$, time step $t = 0$, worker index $k$,
		transition phase (iteration $t_0$) for post-local SGD, and local update steps $H$.

		\WHILE{$\xx_t$ not converged}
		\IF{$t \leq t_0$}
		\STATE $\xx_{t+\frac{1}{4}}^k = \xx_t - \hat{\gamma} \nabla f( \xx_{t-\frac{1}{2}}^k )$
		\STATE $\xx_{t+\frac{1}{2}}^k = \xx_{t+\frac{1}{4}}^k + u \vv_t$
		\STATE $\vv_{t+1} = u \vv_t - \frac{\gamma}{K} \sum_{k=1}^{K} \nabla f( \xx_{t+\frac{1}{2}}^k )$
		\STATE $\xx_{t + 1} = \xx_{t} + \vv_{t+1}$
		\ELSE
		\STATE $\xx_{t+\frac{1}{4}}^k = \xx_t^k - \hat{\gamma} \nabla f( \xx_{t-\frac{1}{2}}^k )$
		\STATE $\xx_{t+\frac{1}{2}}^k = \xx_{t+\frac{1}{4}}^k + u \vv_t^k$
		\STATE $\vv_{t+1}^k = u \vv_t^k - \gamma \nabla f( \xx_{t+\frac{1}{2}}^k )$
		\STATE $\xx_{t + 1}^k = \xx_{t}^k + \vv_{t+1}^k$

		\IF{$t \text{ mod } H = 0$}
		\STATE $\xx_{t+1}^k = \frac{1}{K} \sum_{k=1}^K \xx_{t+1}^k$
		\ENDIF
		\ENDIF

		\ENDWHILE
		\ENSURE $\xx_t$.
	\end{algorithmic}
\end{algorithm}

\subsection{Generalized \algopt with Adam} \label{appendix:extrap_adam}
We present the generalized \algopt with Adam (\algoptadam) in Algorithm~\ref{algo:our_main_algo_adam_variant}.
We omit the extrapolation step in line $2$ when $t \!=\! 0$.

\begin{algorithm}[!]\footnotesize
	\caption{\small\itshape {\algoptadam.}}
	\label{algo:our_main_algo_adam_variant}
	\begin{algorithmic}[1]
		\REQUIRE learning rate $\gamma$, inner learning rate $\hat{\gamma}$,
		initial parameter $\xx_0$, initial first-order moment vector $\mm_0 = 0$, initial second-order moment vector $\vv_0 = 0$,
		time step $t = 0$, worker index $k$.
		\WHILE{$\xx_t$ not converged}
		\STATE $\xx_{t+\frac{1}{2}} = \xx_t - \hat{\gamma} \frac{ \beta_1 \mm_{t - 1} + (1 - \beta_1) \nabla F( \xx_{t-\frac{1}{2}}^k ) }{ \beta_2 \vv_{t - 1} + (1 - \beta_2) \nabla F( \xx_{t-\frac{1}{2}}^k )^2 + \epsilon }$
		\STATE $\gg_t = \frac{1}{K} \sum_{k=1}^{K} \nabla F( \xx_{t+\frac{1}{2}}^k )$
		\STATE $\mm_t = \beta_1 \mm_{t - 1} + (1 - \beta_1) \gg_t$
		\STATE $\vv_t = \beta_2 \vv_{t - 1} + (1 - \beta_2) \gg_t^2$
		\STATE $\xx_{t+1} = \xx_t - \gamma \frac{ \mm_t }{ \sqrt{ \vv_t } + \epsilon }$
		\ENDWHILE
		\ENSURE $\xx_t$.
	\end{algorithmic}
\end{algorithm}

\clearpage
\section{Additional results}
\subsection{ResNet-20 on CIFAR-10} \label{appendix:resnet20_cifar}
The top-1 test accuracy for all related methods on CIFAR-10 is shown in Figure~\ref{fig:resnet20_cifar10_learning_curves_complete_1}.
\begin{figure}[!h]
	\centering
	\includegraphics[width=0.5\textwidth,]{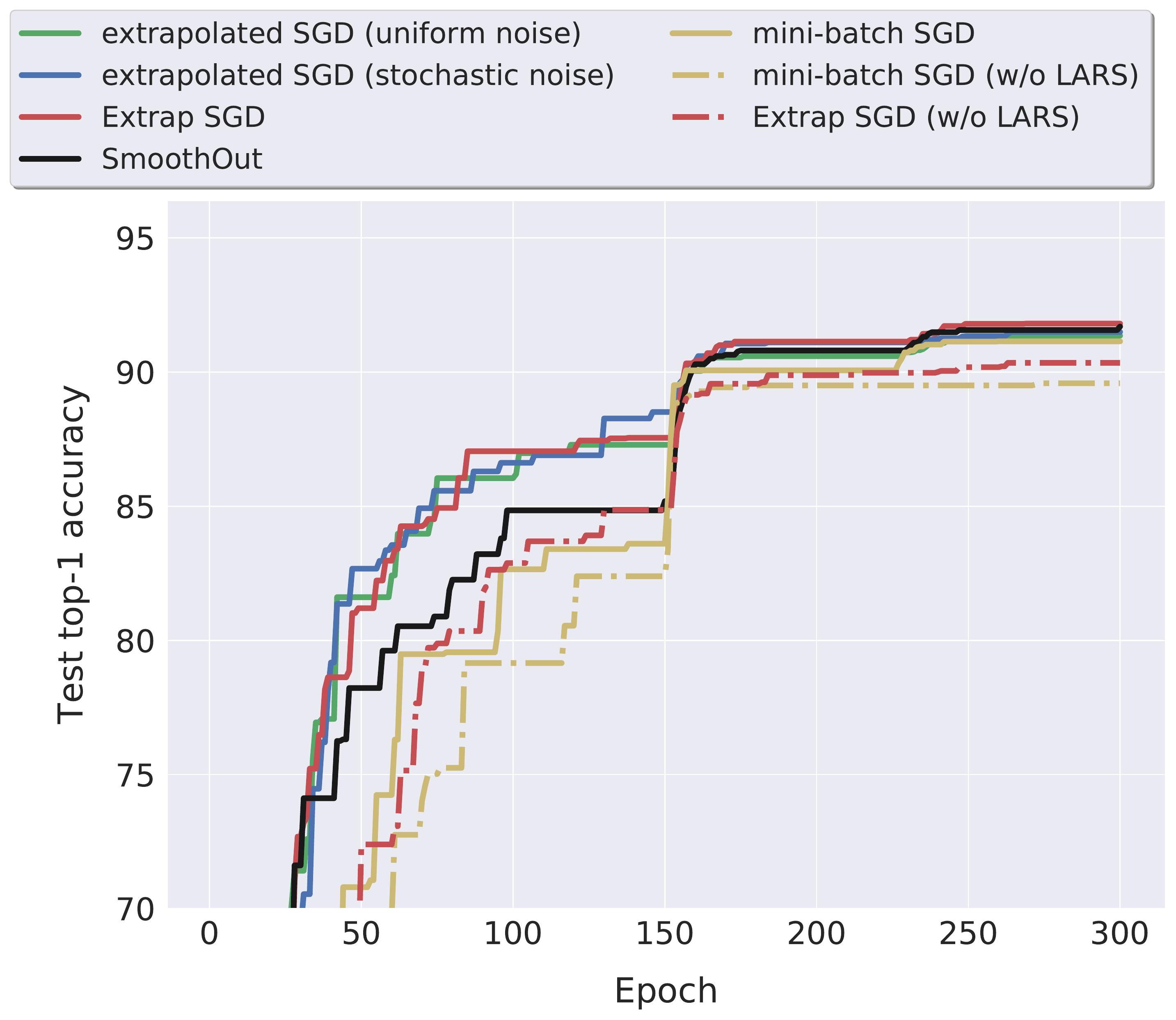}
	\caption{\small
		Understanding the learning behaviors of different methods on the large-batch training
		(with mini-batch size $8,192$ on $32$ workers) for training ResNet-20 on CIFAR-10.
		The hyper-parameters are fine-tuned, which are corresponding to the results shown in Table~\ref{tab:two_tasks_extreme_minibatch_size_for_all_methods};
		by default the learning rate is decayed by $10$ at epoch $150$ and $225$.
	}
	\label{fig:resnet20_cifar10_learning_curves_complete_1}
\end{figure}

The performance of post-local SGD, \algopt and their combination on CIFAR10/100 is shown Table~\ref{tab:resnet20_cifar_8k_extrapsgd_with_postlocal}.
\begin{table*}[!h]
	\caption{\small
		The test top-1 accuracy of integrating \algopt with post-local SGD~\citep{lin2020dont}.
		The performance of ResNet-20 on CIFAR-10/100 is evaluated with global mini-batch size $8,192$ ($K\!=\!32$).
		By default we use the learning rate scaling and warmup in~\citet{goyal2017accurate} and LARS in~\citet{you2017large}.
		We individually finetune $\gamma$ and $\tilde{\gamma}$ for each method;
		the local update step $H$ tuned from $\{ 4, 8, 16 \}$ (as in~\citet{lin2020dont}) in general improves the performance and we report the best performance with $H\!=\!8$.
		The results are averaged over three different seeds.
	}
	\label{tab:resnet20_cifar_8k_extrapsgd_with_postlocal}
	\centering
	\resizebox{.8\textwidth}{!}{%
		\begin{tabular}{ccccccc}
			\toprule
			& \multicolumn{1}{c}{mini-batch SGD} & \parbox{4cm}{\centering mini-batch SGD \\ (with post-local SGD)}             & \algopt                                                       & \parbox{4cm}{\centering \algopt \\ (with post-local SGD)}     \\ \midrule
			CIFAR-10  & $91.36 \pm 0.19$ & $91.73 \pm 0.25$ & $91.72 \pm 0.11$ & $92.23 \pm 0.02$ \\
			CIFAR-100 & $65.79 \pm 0.46$ & $67.39 \pm 0.18$ & $66.63 \pm 0.32$ & $68.06 \pm 0.21$ \\
			\bottomrule
		\end{tabular}%
	}
\end{table*}

The effects of different combinations between local batch sizes and worker numbers are evaluated in
Table~\ref{tab:resnet20_cifar10_8k_different_combination_of_minibatch_size_and_n_workers}.
\begin{table*}[!h]
	\caption{\small The test top-1 accuracy for ResNet-20 on CIFAR-10 under different combinations of local mini-batch size $B$ and number of workers $K$.
		The global mini-batch size is always set to $8,192$ and we vary $K$ workers (MPI processes).
		We individually finetune $\gamma$ and $\tilde{\gamma}$ for each method on different setups,
		and the reported results are averaged over three different seeds.
	}
	\label{tab:resnet20_cifar10_8k_different_combination_of_minibatch_size_and_n_workers}
	\centering
	\resizebox{.8\textwidth}{!}{%
		\begin{tabular}{ccccc}
			\toprule
			               & $(B\!=\!512, K\!=\!16)$ & $(B\!=\!256, K\!=\!32)$ & $(B\!=\!128, K\!=\!64)$ & $(B\!=\!64, K\!=\!128)$ \\ \midrule
			Mini-batch SGD & $91.35 \pm 0.19$        & $91.36 \pm 0.19$        & $91.29 \pm 0.13$        & $91.32 \pm 0.17$        \\
			\algopt        & $91.62 \pm 0.32$        & $91.72 \pm 0.11$        & $91.88 \pm 0.27$        & $91.89 \pm 0.24$        \\ \bottomrule
		\end{tabular}%
	}
\end{table*}

\subsection{LSTM on WikiText2} \label{appendix:lstm_wikitext2}
The learning curves of \algopt and mini-batch SGD with LSTM model on WikiText2 dataset
are presented in Figure~\ref{fig:lstm_wikitext2_learning_curves_}.
\begin{figure}[!h]
	\centering
	\subfigure[\small $K\!=\!24$.]{
		\includegraphics[width=0.475\textwidth,]{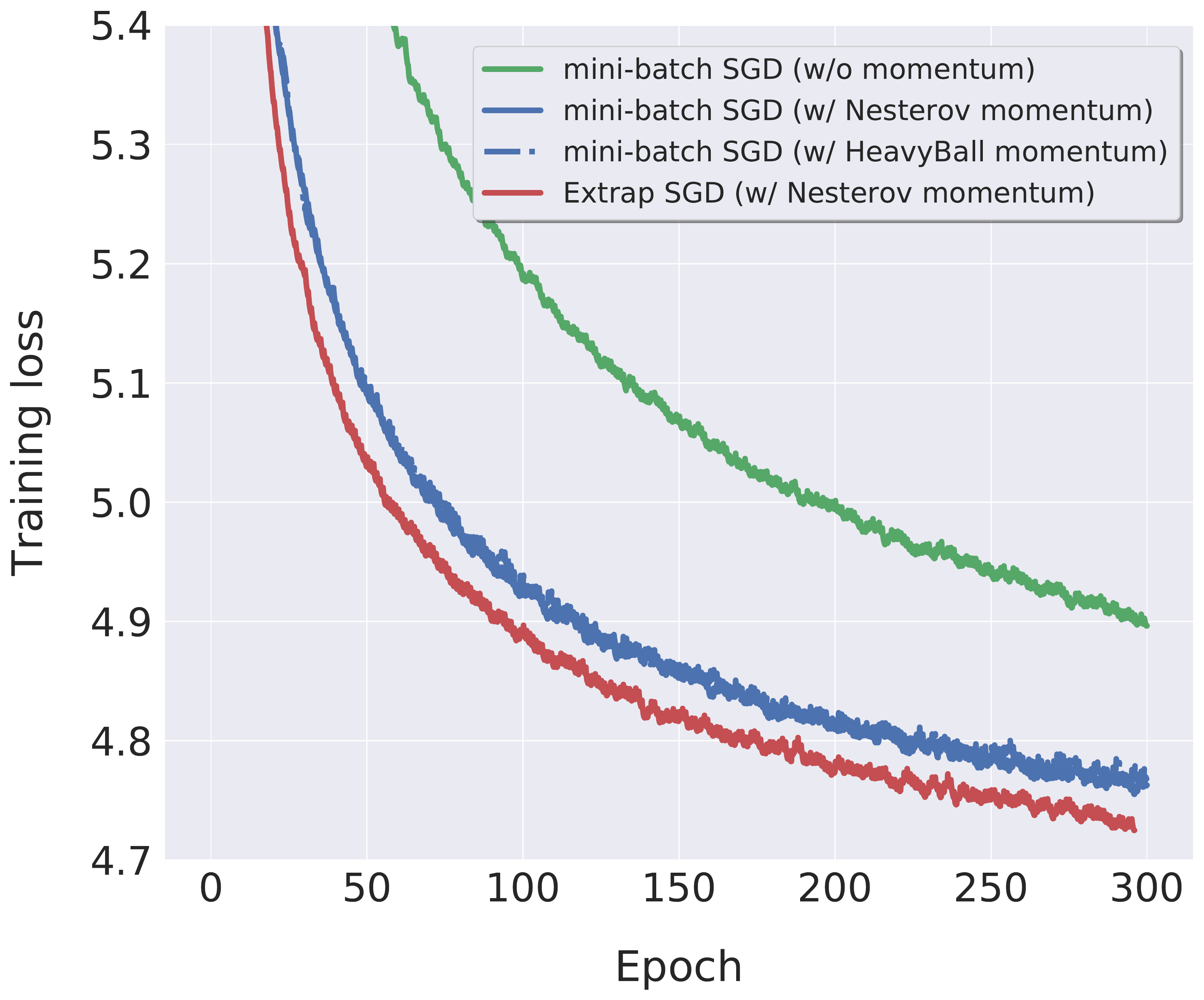}
		\label{fig:lstm_wikitext2_k24_tr_loss_fixed_warmup_}
	}
	\hfill
	\subfigure[\small $K\!=\!24$.]{
		\includegraphics[width=0.475\textwidth,]{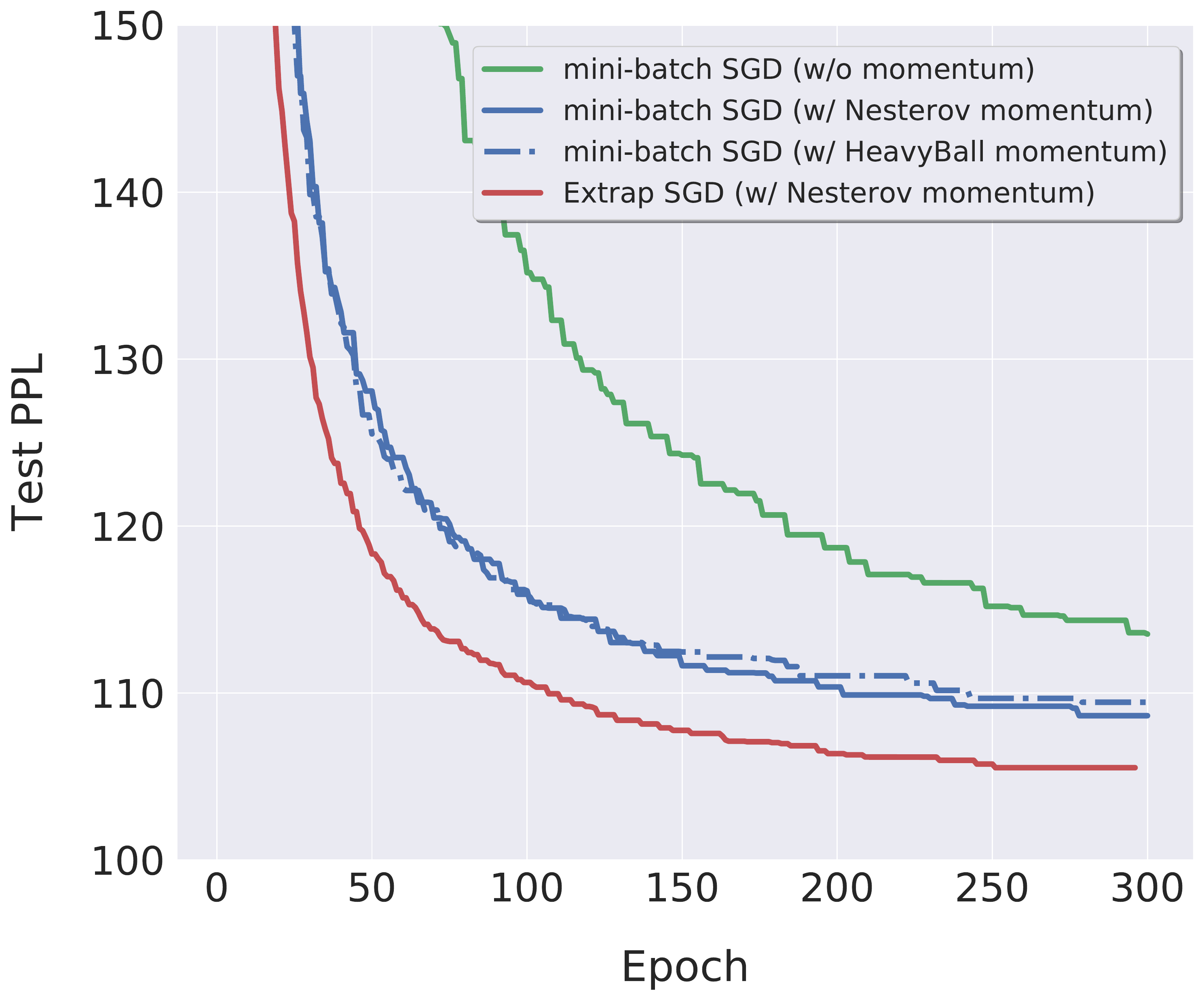}
		\label{fig:lstm_wikitext2_k24_te_ppl_fixed_warmup_}
	}
	\hfill
	\subfigure[\small $K\!=\!24$ V.S. $K\!=\!48$]{
		\includegraphics[width=0.475\textwidth,]{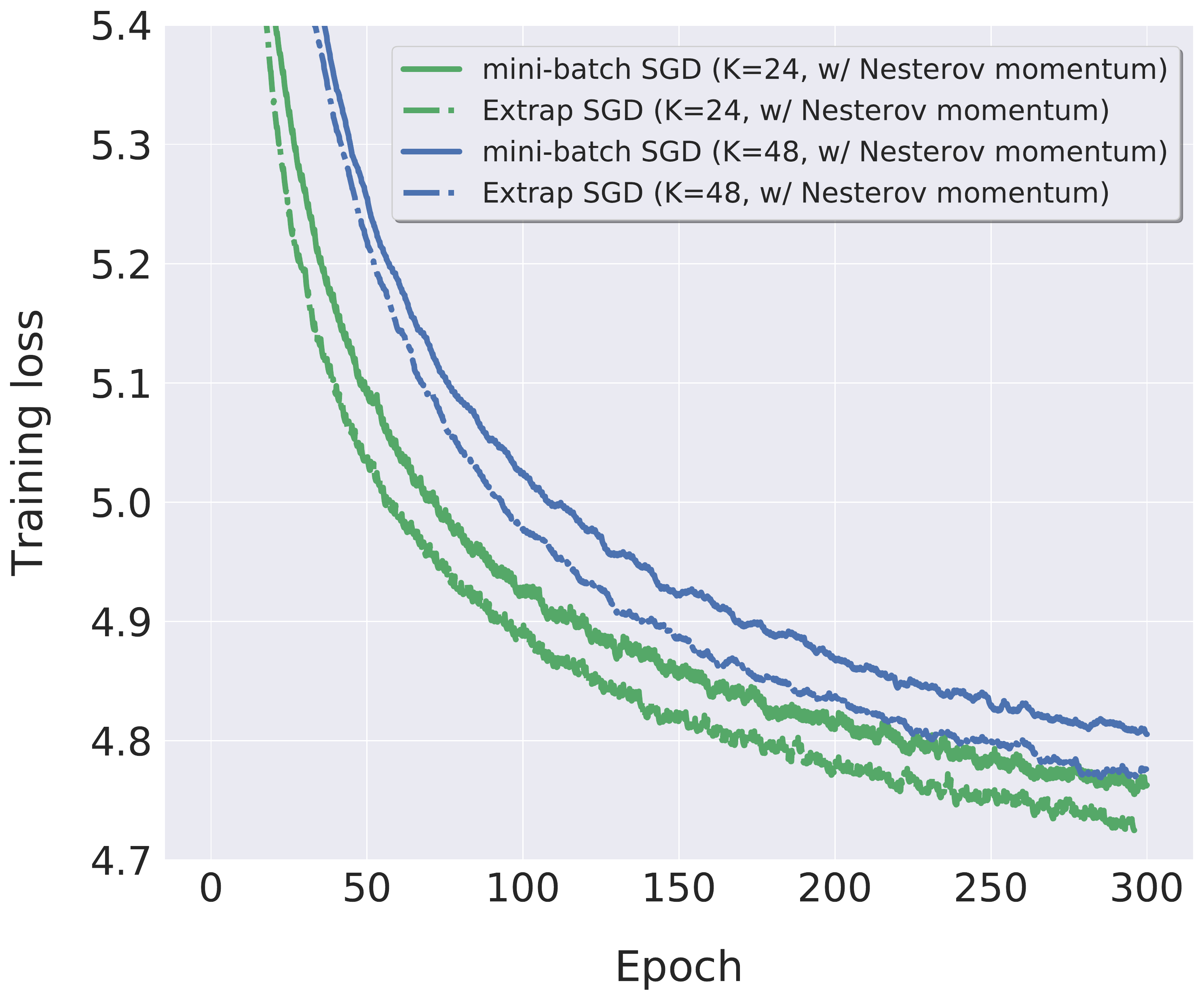}
		\label{fig:lstm_wikitext2_k48_te_loss_fixed_warmup_}
	}
	\hfill
	\subfigure[\small $K\!=\!24$ V.S. $K\!=\!48$]{
		\includegraphics[width=0.475\textwidth,]{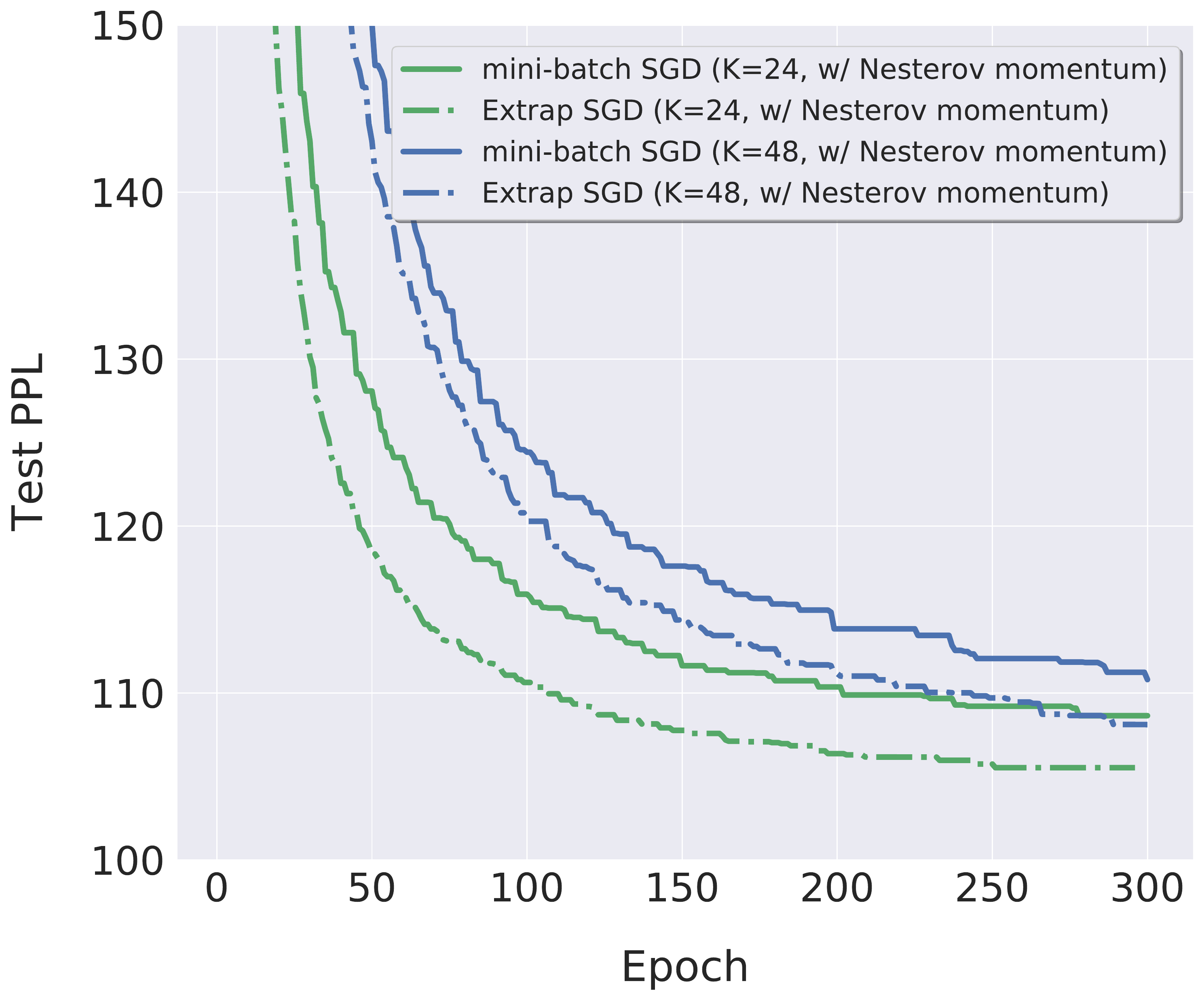}
		\label{fig:lstm_wikitext2_k48_te_ppl_fixed_warmup_}
	}
	\caption{\small
		The learning curves and perplexity (PPL, the lower the better) of training Wikitext-2 on LSTM.
		The global mini-batch size are $1536$ and $3072$ for $K\!=\!24$ and $K\!=\!48$ respectively,
		accounting for $2\%$ and $4\%$ of the total training data.
		We use the learning rate scaling and warmup in~\citet{goyal2017accurate}.
		We finetune the $\gamma$ for different variants of mini-batch SGD
		and~\algopt have no additional tuning.
		The results of the inline table are averaged over three different seeds.
	}
	\vspace{-0.5em}
	\label{fig:lstm_wikitext2_learning_curves_}
\end{figure}

\subsection{Impact of different momentum factors} \label{appendix:resnet20_cifar10_momentum_impact}
The impacts of momentum factors on \algopt and mini-batch SGD are demonstrated in Figure~\ref{fig:resnet20_cifar10_impact_of_momentum_highlight}
and Figure~\ref{fig:resnet20_cifar10_impact_of_momentum}.

\begin{figure}[!h]
	\centering
	\subfigure[\small The training loss of \algopt v.s. Mini-batch SGD.]{
		\includegraphics[width=0.475\textwidth,]{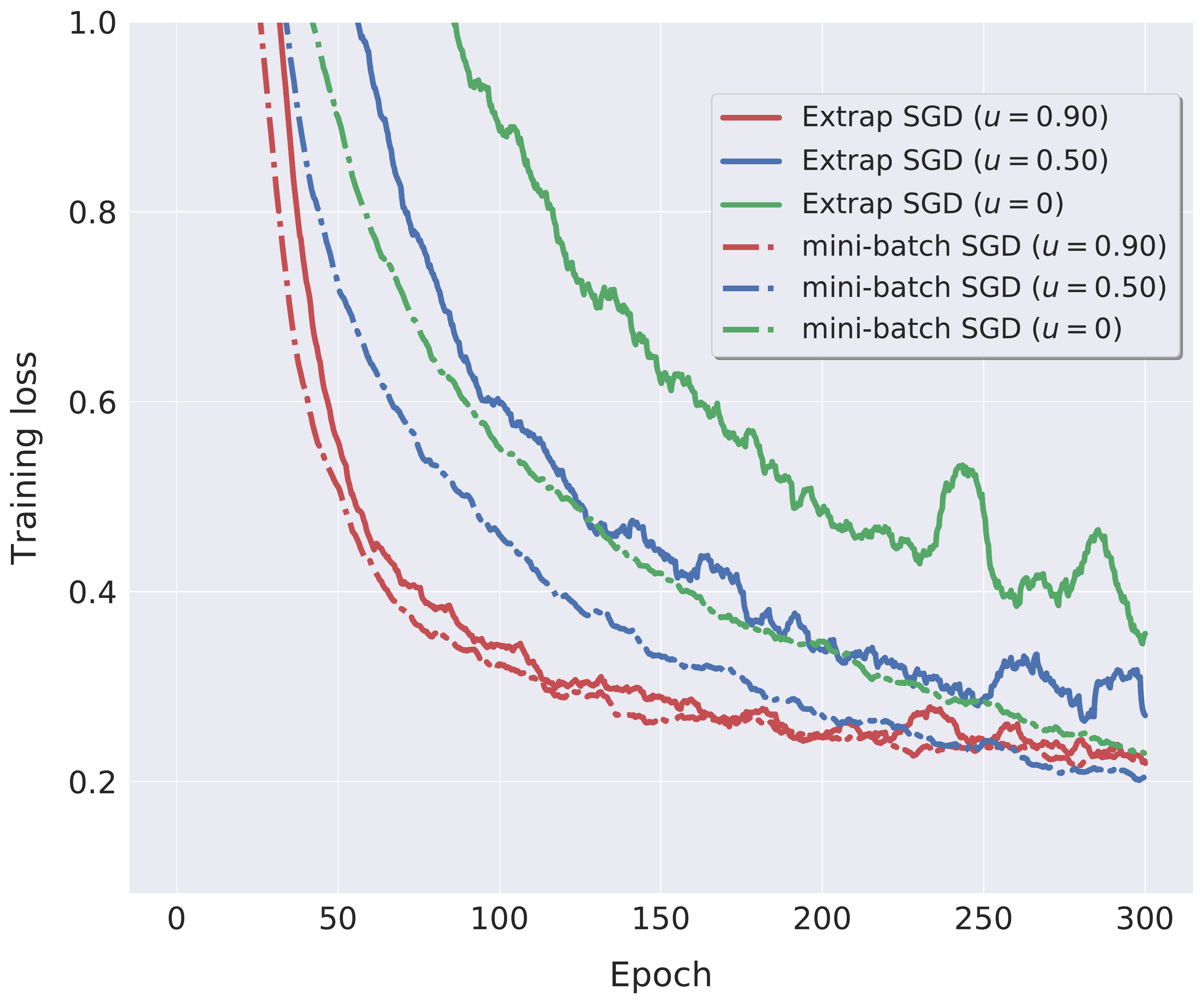}
		\label{fig:resnet20_cifar10_impact_of_momentum_minibatch_sgd_vs_extrap_sgd_tr_loss}
	}
	\hfill
	\subfigure[\small The test top-1 accuracy of \algopt v.s. Mini-batch SGD.]{
		\includegraphics[width=0.475\textwidth,]{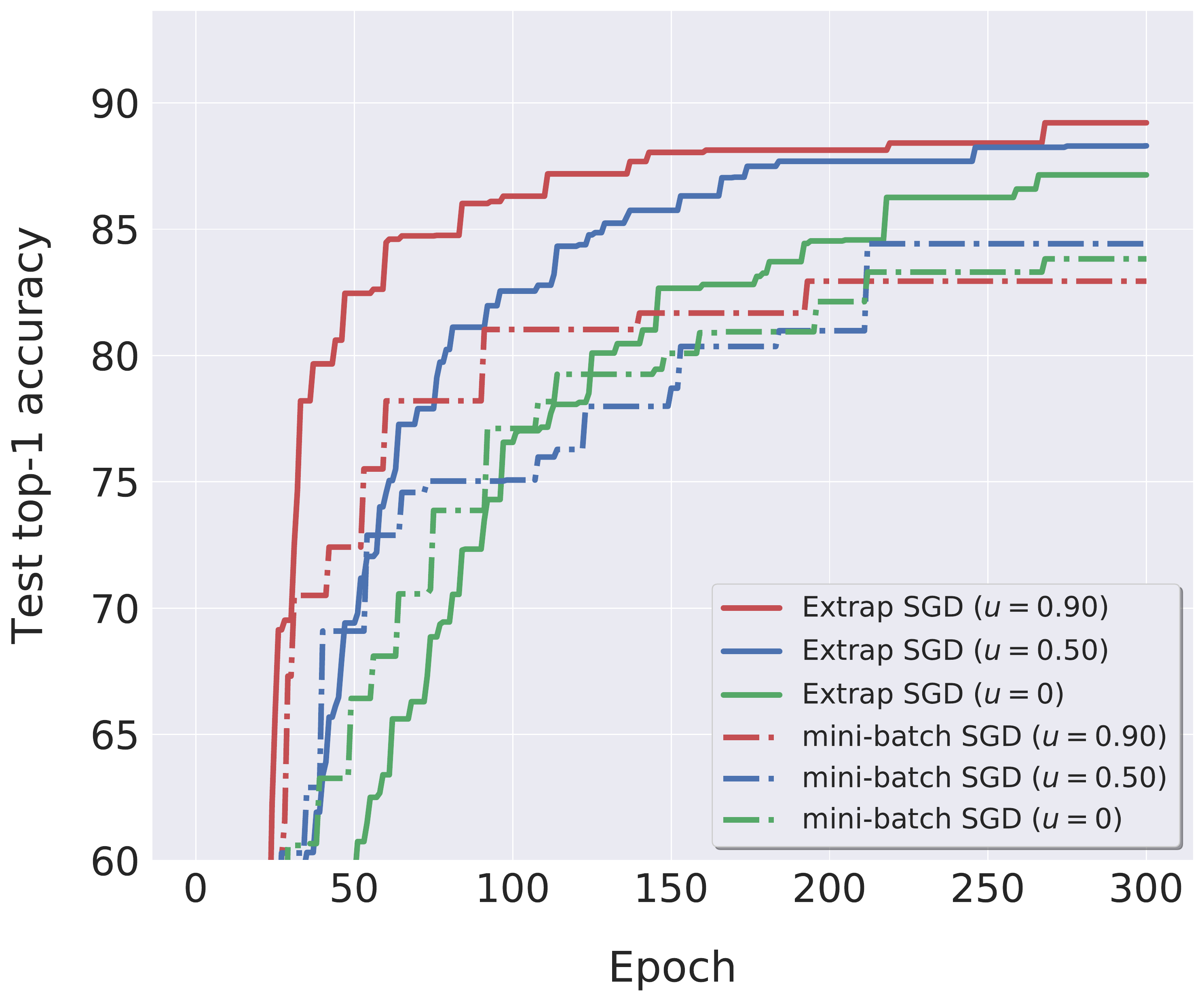}
		\label{fig:resnet20_cifar10_impact_of_momentum_minibatch_sgd_vs_extrap_sgd_te_top1_}
	}
	\caption{\small
		Understanding the behaviors of \algopt and mini-batch SGD for different momentum factors $u$.
		The curves are evaluated on ResNet-20 with CIFAR-10 for mini-batch size $8,192$;
		we use constant learning rate $\gamma$ and the LARS trust term $\tilde{\gamma}$ over the whole training procedure,
		for both of mini-batch SGD and \algopt (with default extrapolation term $\hat{\gamma}$).
		The value of $\gamma$ and $\tilde{\gamma}$ correspond to the tuned optimal value in Table~\ref{tab:two_tasks_extreme_minibatch_size_for_all_methods}.
	}
	\vspace{-0.5em}
	\label{fig:resnet20_cifar10_impact_of_momentum_highlight}
\end{figure}

\begin{figure}[!h]
	\centering
	\subfigure[\small The training loss of Mini-batch SGD.]{
		\includegraphics[width=0.475\textwidth,]{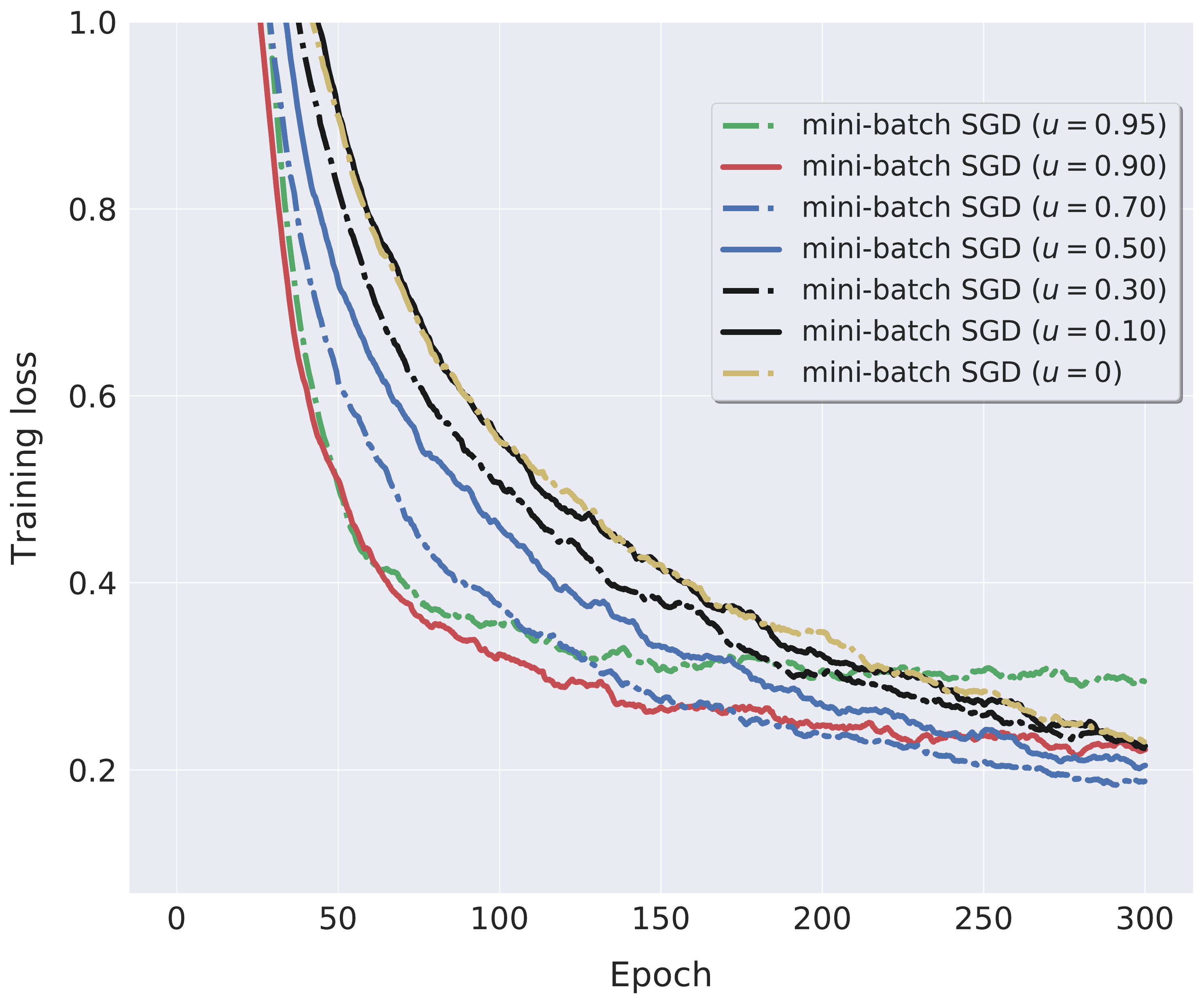}
		\label{fig:resnet20_cifar10_impact_of_momentum_minibatch_sgd_tr_loss}
	}
	\hfill
	\subfigure[\small The test top-1 accuracy of Mini-batch SGD.]{
		\includegraphics[width=0.475\textwidth,]{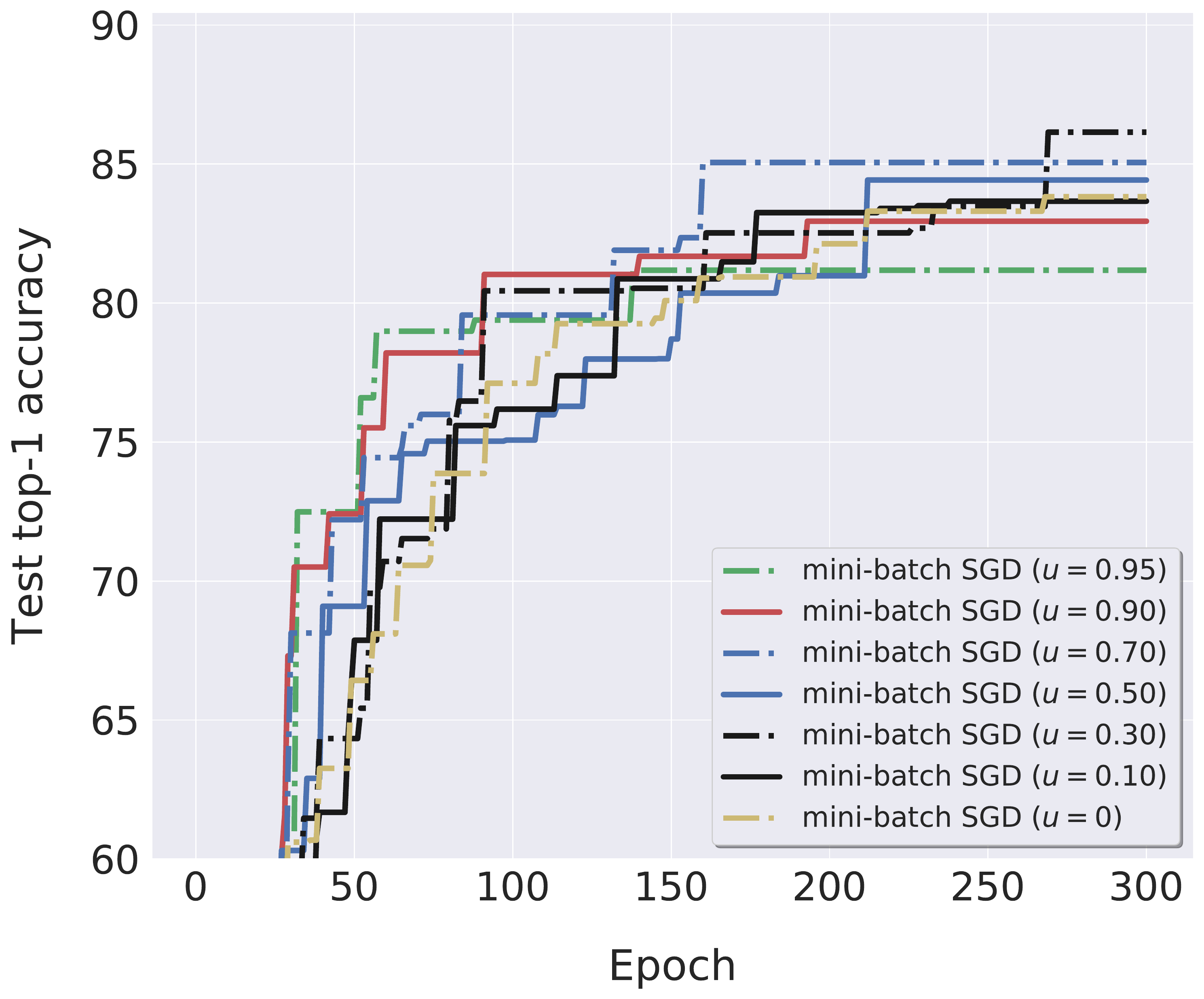}
		\label{fig:resnet20_cifar10_impact_of_momentum_minibatch_sgd_te_top1}
	}
	\hfill
	\subfigure[\small The training loss of \algopt.]{
		\includegraphics[width=0.475\textwidth,]{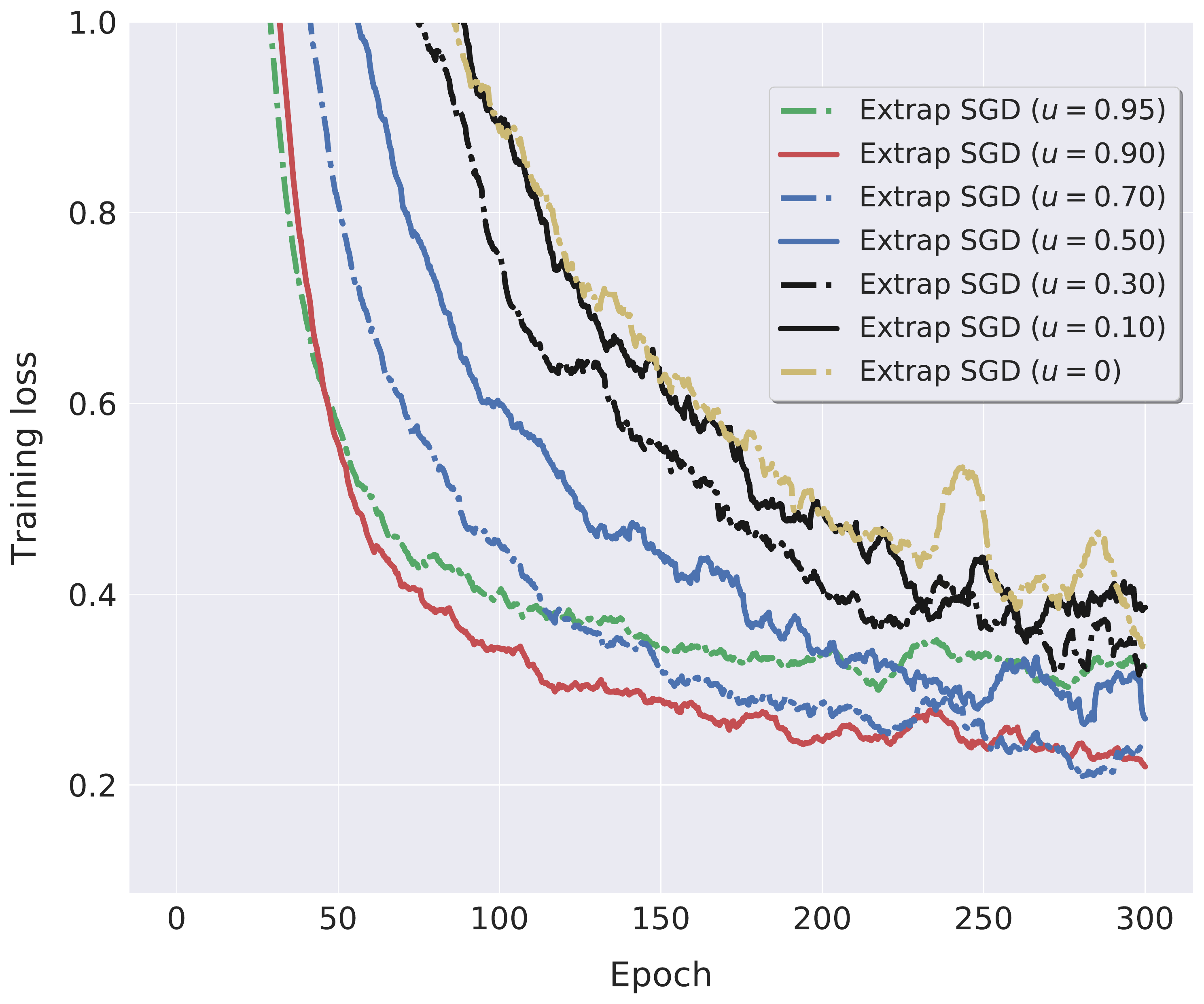}
		\label{fig:resnet20_cifar10_impact_of_momentum_extrap_sgd_tr_loss}
	}
	\hfill
	\subfigure[\small The test top-1 accuracy of \algopt.]{
		\includegraphics[width=0.475\textwidth,]{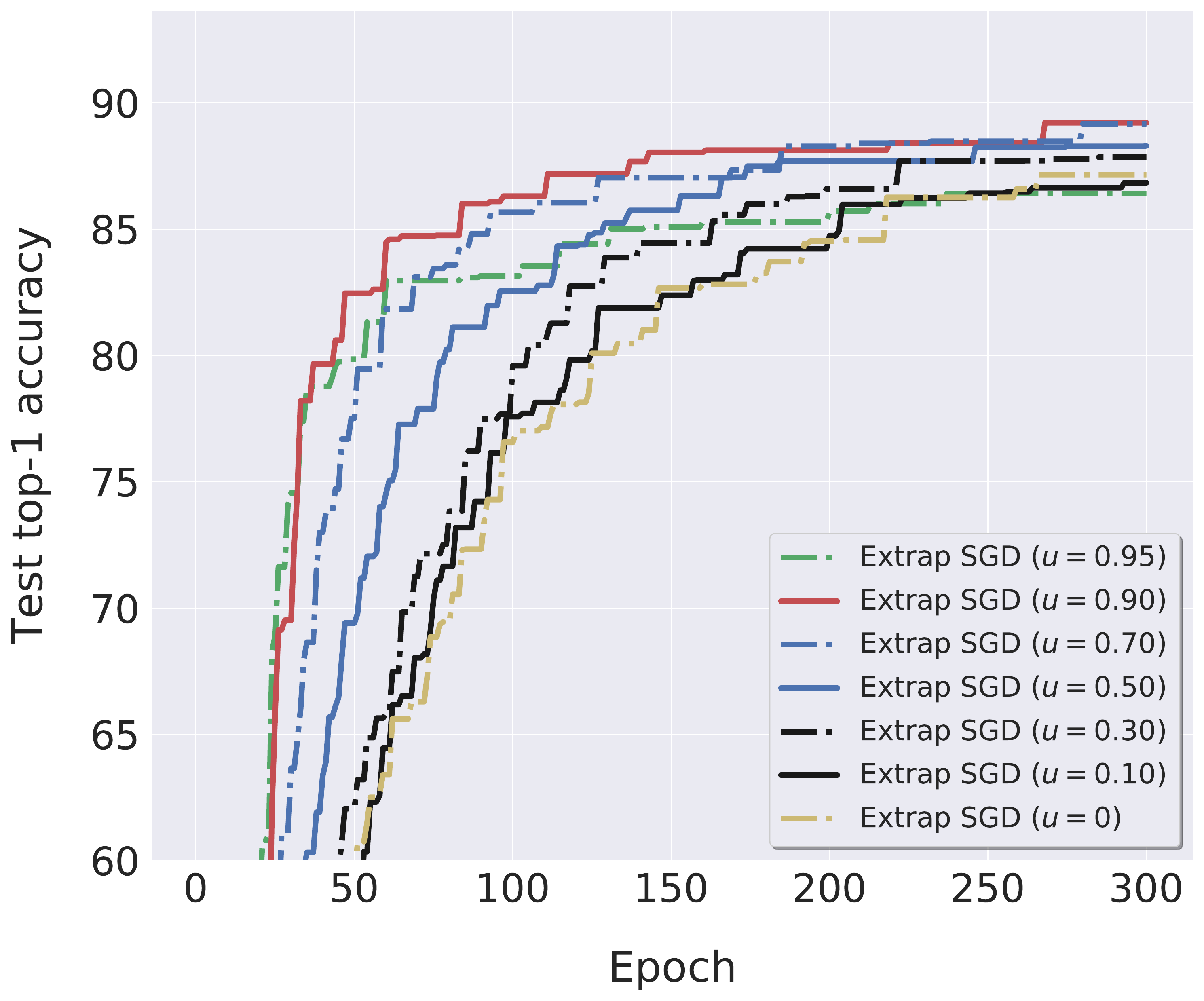}
		\label{fig:resnet20_cifar10_impact_of_momentum_extrap_sgd_te_top1}
	}
	\caption{\small
		The performance of~\algopt and mini-batch SGD for different momentum factors $u$.
		The curves are evaluated on ResNet-20 with CIFAR-10 for mini-batch size $8,192$;
		we use constant learning rate $\gamma$ and the LARS trust term $\tilde{\gamma}$ over the whole training procedure,
		for both of mini-batch SGD and \algopt (with default extrapolation term $\hat{\gamma}$).
		The value of $\gamma$ and $\tilde{\gamma}$ correspond to the tuned optimal value in Table~\ref{tab:two_tasks_extreme_minibatch_size_for_all_methods}.
	}
	\vspace{-0.5em}
	\label{fig:resnet20_cifar10_impact_of_momentum}
\end{figure}

\end{document}